\newcommand{\zs}{\cellcolor{gray!12}}
\definecolor{strawberrymilk}{RGB}{255, 183, 197}
\definecolor{strawberryjam}{RGB}{194, 30, 86}
\newcommand{\barcellabs}[2]{%
  \raisebox{0pt}[2.05ex][0.85ex]{%
    \tikz[baseline=0.3ex]{%
      \pgfmathsetmacro{\ratio}{(#1)/(#2)}
      \pgfmathsetmacro{\w}{min(max(\ratio,0),1)}
      \ifdim \ratio pt < 0.95pt
        \def\barcolor{strawberrymilk} 
      \else
        \def\barcolor{gray!45}
      \fi
      
      \draw[rounded corners=0.8pt, fill=gray!12, draw=gray!25] (0,0) rectangle (1.05,0.28);

      \draw[rounded corners=0.8pt, fill=\barcolor, draw=\barcolor] (0,0) rectangle ({1.05*\w},0.28);
      
      \node[font=\footnotesize, text=black] at (0.525,0.14) {#1};
    }%
  }%
}
\theoremstyle{plain}
\newtheorem{theorem}{Theorem}[section]
\newtheorem{corollary}[theorem]{Corollary}
\theoremstyle{definition}
\theoremstyle{remark}
\newcommand{\thicktimes}{
  \tikz[baseline=-.55ex] 
    \node [inner sep=0pt, cross out, draw, line width=1pt, minimum size=1ex] (a) {};
}
\icmltitlerunning{Making Models Unmergeable via Scaling-Sensitive Loss Landscape}
\begin{document}

\twocolumn[
  \icmltitle{Making Models Unmergeable via Scaling-Sensitive Loss Landscape}



  \icmlsetsymbol{equal}{*}

  \begin{icmlauthorlist}
    \icmlauthor{Minwoo Jang}{gsai}
    \icmlauthor{Hoyoung Kim}{nairl}
    \icmlauthor{Jabin Koo}{cse}
    \icmlauthor{Jungseul Ok}{gsai,cse}
  \end{icmlauthorlist}

  \icmlaffiliation{gsai}{Graduate School of AI, POSTECH, Pohang, Republic of Korea}
  \icmlaffiliation{nairl}{National AI Research Lab, Seoul, Republic of Korea}
  \icmlaffiliation{cse}{Department of CSE, POSTECH, Pohang, Republic of Korea}

  \icmlcorrespondingauthor{Jungseul Ok}{jungseul@postech.ac.kr}

  \icmlkeywords{Model Merging, Unmergeability, Machine Learning, ICML}

  \vskip 0.3in
]



\printAffiliationsAndNotice{}  

\begin{abstract}
\label{sec:abstract}
The rise of model hubs has made it easier to access reusable model components, making model merging a practical tool for combining capabilities. Yet, this modularity also creates a \emph{governance gap}: downstream users can recompose released weights into unauthorized mixtures that bypass safety alignment or licensing terms. Because existing defenses are largely post-hoc and architecture-specific, they provide inconsistent protection across diverse architectures and release formats in practice. To close this gap, we propose \textsc{Trap$^2$}, an architecture-agnostic protection framework that encodes protection into updates during fine-tuning, regardless of whether they are released as adapters or full models. Instead of relying on architecture-dependent approaches, \textsc{Trap$^2$} uses weight re-scaling as a simple proxy for the merging process. It keeps released weights effective in standalone use, but degrades them under re-scaling that often arises in merging, undermining unauthorized recomposition.

\end{abstract}

\section{Introduction}
\label{sec:intro}
\begin{figure}[t!]
  \centering
  \includegraphics[width=\columnwidth]{./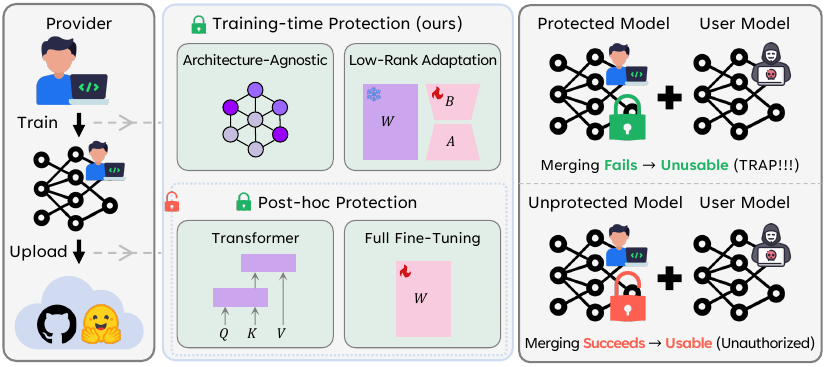}
  \caption{Unmergeability protection in model sharing and limitations of prior work. (Left) Providers release task updates for downstream reuse, often as adapters. (Middle) Most post-hoc protections are Transformer-specific, limiting transferability. They also require full-weight access, making them incompatible with adapter-only releases such as LoRA. (Right) These limitations motivate training-time protection embedded in the released update, preserving standalone utility while disrupting downstream merges.}
  \label{fig:background}
  \vspace{-1.5mm}
\end{figure}

Public model hubs and open repositories, such as GitHub and Hugging Face, widely distribute fine-tuned updates, from full checkpoints to lightweight adapters. Since many updates target the same base model, they can be recombined after release by directly composing their parameters. This accessibility enables model merging, but it also creates a \emph{governance gap}: once released, these updates can be recomposed into unauthorized mixtures that bypass safety, licensing, or task-specific constraints. In other words, facilitating broad reuse inherently reduces the ability to enforce post-release constraints.

This loss of control motivates the notion of \emph{unmergeability} \citep{ Junhao_2025_ICCV, wang2025modelunmergingmakingmodels}. Ideally, a released model should retain full utility in its standalone setting, while failing reliably when incorporated into unauthorized merges. However, achieving this goal is challenging, because merging is performed downstream, outside the control of the creator, and often under heterogeneous protocols.

Most existing defenses address this challenge in a \emph{post-hoc} manner: they apply function-preserving transformations to disrupt merging without changing the fine-tuning pipeline \citep{Junhao_2025_ICCV, wang2025modelunmergingmakingmodels}. However, most existing methods are tailored to architectural symmetries of Transformers \citep{NIPS2017_3f5ee243}, which limits their transferability to non-Transformer backbones. Moreover, they assume access to the \emph{full} model weights, making them mismatched with hub-style releases where only adapter updates, exemplified by Low-Rank Adaptation (LoRA) \citep{hu2022lora}, are shared and the base weights are unavailable. Figure~\ref{fig:background} summarizes this model-sharing workflow and highlights these two gaps in existing post-hoc defenses.

Against this background, we ask: \emph{Can unmergeability be embedded directly into fine-tuned parameters across architectures and release formats?} We answer yes with \textsc{Trap$^{2}$} (\textbf{Tra}ining-time \textbf{P}rotection via \textbf{T}ask-\textbf{R}obust \textbf{A}dversarial \textbf{P}erturbation), which learns a protected update during fine-tuning. Conceptually, \textsc{Trap$^{2}$} optimizes an adversarial objective over update re-scaling: as illustrated in Figure~\ref{fig:main_fig_1}, it preserves utility at the authorized scale ($s=1$), while inducing degradation under unauthorized scaling ($s \neq 1$), a regime that frequently appears in merging pipelines. Consequently, \textsc{Trap$^{2}$} yields brittleness under merging for both adapter-only and full-checkpoint releases, without relying on architecture-specific assumptions.

In summary, our contributions are as follows:
\begin{itemize}
    \vspace{-0.5mm}
    \item \textbf{Problem Setup:} We formalize a post-release protection setting for fine-tuned releases across various architectures and release formats (adapter-only updates and full checkpoints), together with a unified evaluation protocol that quantifies standalone utility and degradation under merging.
    \vspace{-0.5mm}
    \item \textbf{Protection Method:} We introduce \textsc{Trap$^{2}$}, a training-time procedure that keeps a fine-tuned update effective at the nominal scale ($s = 1$), while making it brittle under \emph{off-nominal re-scaling} ($s \neq 1$), which captures the re-weighting effects commonly introduced by practical merging pipelines.
    \vspace{-0.5mm}
    \item \textbf{Empirical and Theoretical Analysis:} We evaluate \textsc{Trap$^{2}$} across diverse merging operators, release formats, and architectures, and complement the empirical results with theoretical analysis, establishing (i) convergence under stochastic optimization and (ii) degradation under down-scaling and model merging.
\end{itemize}

\section{Related Works}
\label{sec:related}
\subsection{Model Merging}

Model merging aims to compose multiple task-specific models into a single model, with little or no additional training. Early research has focused on full-model merging, where multiple checkpoints are linearly combined. Representative methods include Task Arithmetic (TA) \citep{ilharcoediting}, TIES-Merging \citep{yadav2023tiesmerging}, and DARE \citep{yu2024language}, which aggregate task updates via weighted summation, often coupled with pruning or sign-conflict resolution.

In parallel, Low-Rank Adaptation (LoRA) \citep{hu2022lora} has emerged as the de facto standard for parameter-efficient fine-tuning. As LoRA adapters are widely shared, merging has naturally extended to the adapter setting. While general merging operators are often directly applied to adapters, recent works, such as KnOTS \citep{stoica2025knots} and Core Space \citep{panariello2025accurate}, propose merging schemes specifically tailored for low-rank subspaces.

While these developments make downstream reuse easier, they also widen a \emph{governance gap}. Post-release merging can obscure provenance and dilute license, IP, or safety constraints by blending protected adapters into composite models \citep{10.1145/3689217.3690614, xu-etal-2025-evertracer, rosati-etal-2024-immunization, hammoud-etal-2024-model}. This vulnerability motivates protection mechanisms that preserve standalone utility while making unauthorized recomposition unreliable.

\begin{figure}[t!]
  \centering
  \includegraphics[width=\columnwidth]{./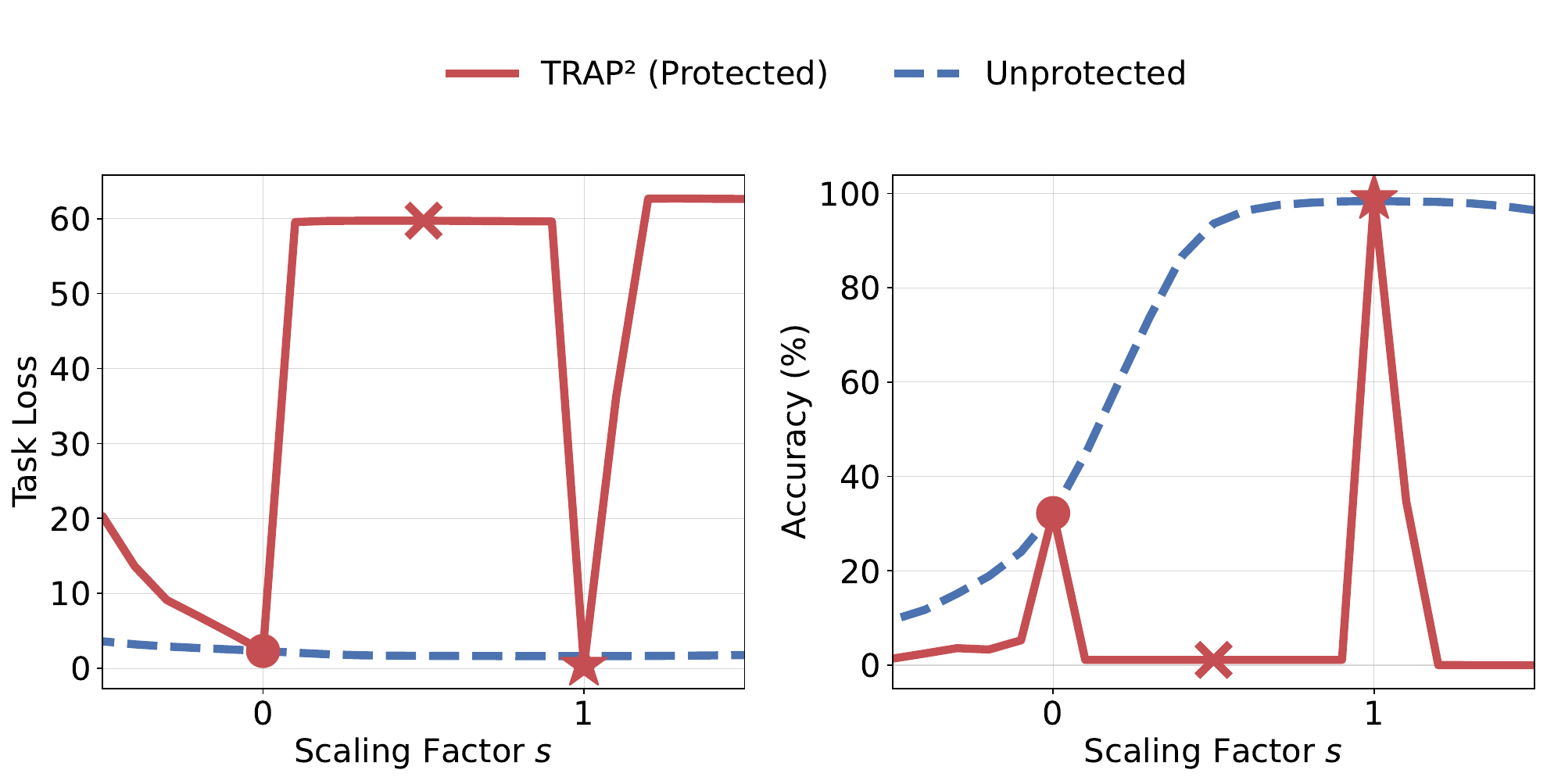}
  \caption{(Left) Loss shaping over the scaling factor $s$.
  We optimize via \textsc{Trap$^{2}$} to retain high utility in the \emph{authorized} scale ({$\bigstar$}; $s = 1$),
  while inducing degradation under \emph{unauthorized} scales \linebreak (\thicktimes; $s \neq 1$).
  The zero-shot result (\CIRCLE; $s = 0$) is shown as a reference.
  (Right) Accuracy along the scaling factor $s$.
  \textsc{Trap$^{2}$}-trained adapter attains high standalone accuracy in the authorized region ({$\bigstar$}) but collapses under unauthorized scaling (\thicktimes).}
  \label{fig:main_fig_1}
  \vspace{-6.0mm}
\end{figure}

\subsection{Unmergeability for Model Protection}

Unmergeability aims to keep released weights useful in isolation, while making unauthorized merging unreliable. This capability is vital for model supply-chain control, since recomposition can obscure provenance and complicate the enforcement of usage terms. Existing defenses typically take a \emph{post-hoc} approach, preserving standalone behavior via function-preserving parameter transformations.

PaRaMS \citep{Junhao_2025_ICCV} and Merge-Lock \citep{wang2025modelunmergingmakingmodels} instantiate this idea for Transformers \citep{NIPS2017_3f5ee243}. They leverage architectural symmetries via coupled transformations of attention projections, preserving standalone behavior, while reducing compatibility under direct weight-space merging. To maintain functional equivalence, they must reparameterize the full set of weights, which makes them incompatible with adapter-only releases (e.g., LoRA) and less transferable to non-Transformer backbones.

In contrast, we target common release settings where full-weight access is unavailable or non-Transformer backbones are used. Accordingly, we propose a training-time mechanism that embeds unmergeability directly into the fine-tuned update, applicable across architectures and release formats, including adapter-only and full-checkpoint releases.

\section{Background}
\label{sec:back}
We briefly review merging protocols to establish the scaling and aggregation notation used in Sections~\ref{sec:problem} and~\ref{sec:methodology}.

\paragraph{Low-Rank Adaptation (LoRA)}

Given a pre-trained model with parameters $W_0 \in \mathbb{R}^{d_\text{out} \times d_\text{in}}$, LoRA \citep{hu2022lora} freezes $W_0$ and injects trainable low-rank updates into selected linear layers. Concretely, for $r \ll \min\{d_\text{out}, d_\text{in}\}$,
\begin{equation}\label{eq:lora}
    \Delta W = BA, \quad B \in \mathbb{R}^{d_\text{out} \times r}, \quad A \in \mathbb{R}^{r \times d_\text{in}},
\end{equation}
and the forward-pass weight becomes $W = W_0 + s \cdot \Delta W$, where $s \in \mathbb{R}_{\ge 0}$ denotes a scaling factor applied to the update $\Delta W$. The resulting parameters are lightweight and modular, and are commonly released as standalone adapters.

\paragraph{LoRA Merging}

Given $N$ task-specific LoRA adapters $\{\Delta W_i\}_{i=1}^N$ trained on the same base model $W_0$, LoRA merging aims to compose them into a single adapter without additional training. The most common approach computes a linear aggregation of updates \citep{ilharcoediting},
\begin{equation}\label{eq:linearcombination}
    \Delta W_{\text{merged}} = \sum_{i=1}^N s_i \cdot \Delta W_i,
\end{equation}
where $s_i$ denotes the merging coefficient for the $i^\text{th}$ adapter. While recent methods such as KnOTS \citep{stoica2025knots} and Core Space \citep{panariello2025accurate} introduce structured subspace projections or alignment steps, they rely on aggregating updates that are (approximately) aligned. We provide a detailed overview of these techniques in Appendix~\ref{sec:app_b}.

\paragraph{Unified View: Merging as Aggregating Updates}

This perspective also covers full-checkpoint merging. Given checkpoints $\{ W_i \}_{i=1}^N$ derived from the same base $W_0$, methods such as TA \citep{ilharcoediting}, TIES-Merging \citep{yadav2023tiesmerging}, and DARE \citep{yu2024language} first form task updates $\Delta W_i := W_i - W_0$ and then merge them (often with pruning or sign-conflict resolution) as
\begin{equation}
    W_{\text{merged}} = W_0 + \sum_{i=1}^N s_i \cdot \Delta W_i = W_0 + \Delta W_{\text{merged}}.
\end{equation}

Throughout this paper, we use $\Delta W$ to denote a generic released \emph{update} (either a LoRA adapter or a full-parameter difference). We use $s$ for update scaling, and ${s_i}$ for coefficients used to aggregate updates.

\section{Problem Setup and Challenges}
\label{sec:problem}
We formalize \emph{unmergeability} under hub-style model sharing and explain why existing post-hoc defenses apply to neither non-Transformer architectures nor adapter releases.

\paragraph{Loss under Scaling}

Let $\mathcal{D}$ be a data distribution over examples $\xi$. We define the expected loss at weights $W$ as
\begin{equation} \label{eq:loss_function}
\mathcal{L}(W ; \mathcal{D}) := \mathbb{E}_{\xi \sim \mathcal{D}}\!\left[ \ell (W; \xi) \right].
\end{equation}
When $\mathcal{D}$ is clear from context, we omit it and write $\mathcal{L}(W)$.
For an update $\Delta W$ and a scaling factor $s \in \mathbb{R}_{\geq 0}$, define
\begin{equation}\label{eq:loss_function_scaled}
\begin{aligned}
\mathcal{L}_{\text{scaled}}(\Delta W; s)
&:= \mathcal{L}(W_0 + s \cdot \Delta W)
\\& \; = \mathbb{E}_{\xi \sim \mathcal{D}}
\left[ \ell (W_0 + s \cdot \Delta W; \xi) \right].
\end{aligned}
\end{equation}
We take $s=1$ as the \emph{nominal} (standalone) scale, and interpret $s\neq 1$ as off-nominal re-scaling induced by downstream reuse and merging. Accordingly, the intended composition is $W_0 + \Delta W$, and $\mathcal{L}_{\text{nominal}}(\Delta W) := \mathcal{L}_{\text{scaled}}(\Delta W; 1)$. In Section~\ref{sec:methodology}, we will also consider off-nominal scales ($s \neq 1$).

\paragraph{Merging Operator}

Let $\mathcal{M}(\cdot)$ denote a merging operator that composes a set of updates into a single merged one. Given $N$ adapters $\{ \Delta W_i \}_{i=1}^N$ trained on the same base $W_0$, the merged model is parameterized by
\begin{equation}
W_0 + \Delta W_{\text{merged}}, \qquad
\Delta W_{\text{merged}} = \mathcal{M} \left( \{ \Delta W_i \}_{i=1}^N \right).
\label{eq:merge_op}
\end{equation}
A common instance is linear combination with scaling \citep{ilharcoediting}, as defined in Eq.~\eqref{eq:linearcombination}.

\paragraph{Desired Properties}

Our goal is to obtain an update $\Delta W^\star$ that satisfies the following properties established in prior works \citep{Junhao_2025_ICCV, wang2025modelunmergingmakingmodels}.

\textit{\textbf{(Property 1) Standalone Utility:}}
When deployed alone, $\Delta W^\star$ should incur low loss, i.e., $\mathcal{L}_{\text{nominal}}(\Delta W^\star)$ is small.

\textit{\textbf{(Property 2) Unmergeability:}}
Let $\mathcal{T}=\{\Delta W_i\}_{i=1}^{N-1}$ be third-party adapters and
$\Delta W_{\text{merged}} := \mathcal{M}(\{\Delta W^\star\}\cup \mathcal{T})$. Define
$\mathcal{L}_{\text{merged}}(\Delta W^\star; \mathcal{T}) := \mathcal{L} (W_0 + \Delta W_{\text{merged}})$.
We require that indiscriminate merging yields a substantially worse merged model than merging an \emph{unprotected} update of comparable standalone utility, i.e.,
\begin{equation}
    \mathcal{L}_{\text{merged}}(\Delta W^\star; \mathcal{T}) \gg \mathcal{L}_{\text{merged}}(\Delta W'; \mathcal{T}),
\end{equation}
where $\Delta W'$ is an unprotected adapter for the same task with $\mathcal{L}_{\text{nominal}}(\Delta W') \approx \mathcal{L}_{\text{nominal}}(\Delta W^\star)$.

\paragraph{Post-hoc Defenses via Paired Cancellations} \label{sec:posthoc_baselines}

Most prior unmergeability defenses are \emph{post-hoc}: they apply function-preserving re-parameterizations to the \emph{full} model weights, preserving standalone behavior while disrupting direct weight-space merging. Two common symmetry templates are: \textbf{(i) FFN hidden-unit permutation}, which permutes intermediate neurons and compensates in adjacent linear layers; and \textbf{(ii) coupled self-attention reparameterization}, which exploits cancellation structures in Transformers \citep{NIPS2017_3f5ee243}. Concretely, with $Q=XW_Q$, $K=XW_K$, and $V=XW_V$, the attention output is
\begin{equation}
\text{Attention}(X) = \text{softmax}\!\left(\frac{QK^\top}{\sqrt{d}}\right) V W_O.
\end{equation}
A paired invertible re-parameterization preserves the computation by keeping $QK^\top$ and $VW_O$ invariant:
\begin{equation}
\begin{aligned}
W_Q &\mapsto W_Q R_1, \quad W_K \mapsto W_K R_1^{-\top}, \\
W_V &\mapsto W_V R_2, \quad W_O \mapsto R_2^{-1} W_O,
\end{aligned}
\end{equation}
for invertible $(R_1,R_2)$.

\vspace{-1mm}

PaRaMS \citep{Junhao_2025_ICCV} combines MLP-level rearrangement with attention-level reweighting within this paired-cancellation framework. Merge-Lock \citep{wang2025modelunmergingmakingmodels} adopts a more expressive variant by composing random mixing, permutation, and diagonal reweighting to construct $R_1$ and $R_2$, while still relying on the same paired cancellations. In both cases, the function-preserving guarantee fundamentally assumes access to the full weight tensors on which the paired transformations act.

\vspace{-1mm}

\paragraph{Challenges} \label{sec:challenges}

Modern hub ecosystems present two practical realities: (i) adapter-only release exemplified by LoRA, and (ii) architectural diversity beyond Transformers. These realities expose fundamental flaws in post-hoc defenses.
First, applying paired transformations solely to the adapter $\Delta W$ fails to cancel terms associated with the frozen base model $W_0$. This misalignment either degrades standalone utility or fails to induce the intended unmergeability. Second, their symmetry templates are Transformer-specific and do not transfer to non-attention backbones such as ResNet \citep{7780459} or ConvNeXt \citep{liu2022convnet}.
These limitations motivate our central question: \emph{Can we inject unmergeability directly into the released update, without relying on base-model access or architecture-specific symmetries?}

\section{Proposed Method}
\label{sec:methodology}
\begin{algorithm}[tb]
\caption{Pseudo-code of \textsc{Trap$^{2}$} (Adapter Version)}
\label{alg:trap}
\begin{algorithmic}[1]
\STATE {\bfseries Input:} fixed base model $W_0$, dataset $\mathcal{D}$, time step $T$,
step sizes $\{\eta_t\}_{t=0}^{T-1}$, trade-off weight $\lambda$,
scale range $[s_{\text{min}},s_{\text{max}}]$, exclusion width $\delta$, weighting function $w(\cdot)$
\STATE Initialize a trainable LoRA adapter $\Delta W$
\FOR{$t = 0,1,\dots,T-1$}
    \STATE Sample mini-batch $\mathcal{B}_t \sim \mathcal{D}$ \hfill ($\mathcal{B}_t=\{\xi_j\}_{j=1}^{m}$)
    
    \STATE $\mathcal{L}_{\text{nominal}} \leftarrow \frac{1}{m} \sum_{\xi \in \mathcal{B}_t}\ell \left(W_0 + \Delta W; \xi \right)$

    \STATE Draw $s \sim \text{Unif}([s_{\text{min}}, 1 - \delta] \cup[1 + \delta, s_{\text{max}}])$

    \STATE $\mathcal{L}_{\text{off}} \leftarrow \frac{1}{m}\sum_{\xi \in \mathcal{B}_t} w(s) \cdot \ell \left(W_0 + s \cdot \Delta W; \xi \right)$
    
    \STATE $J_t \leftarrow \mathcal{L}_{\text{nominal}} - \lambda \cdot \mathcal{L}_{\text{off}}$

    \STATE $\Delta W \leftarrow \Delta W - \eta_t \cdot \nabla_{\Delta W} J_t$

\ENDFOR
\STATE {\bfseries Output:} protected adapter $\Delta W^{\star}$
\end{algorithmic}
\end{algorithm}

To enforce unmergeability without sacrificing standalone utility, we propose \textsc{Trap$^{2}$}, a training-time protection objective that directly shapes the released update. We first present the adapter (e.g., LoRA) instantiation in Algorithm~\ref{alg:trap}. The central intuition is to regularize the adapter so that it remains accurate at the nominal scale ($s=1$) yet degrades under off-nominal scaling ($s\neq 1$), a key effect commonly induced by adapter merging (e.g., the linear combination in Eq.~\ref{eq:linearcombination}).

\paragraph{The \textsc{Trap$^2$} Objective}

Let $\Delta W$ denote the trainable LoRA update on top of a fixed base $W_0$. For a deployment scale $s \in \mathbb{R}_{\ge 0}$, let
$\mathcal{L}_{\text{scaled}}(\Delta W; s)$ denote the expected loss at scale $s$, as defined in Eq.~\eqref{eq:loss_function_scaled}. The nominal loss corresponds to the intended scale $s=1$:
\begin{equation}
\mathcal{L}_{\text{nominal}}(\Delta W) := \mathcal{L}_{\text{scaled}}(\Delta W; 1).
\end{equation}
Let $\mathcal{S}$ be a distribution over off-nominal scales with support
\begin{equation}
\text{supp}(\mathcal{S})
\subseteq [s_{\text{min}},\,1-\delta] \;\cup\; [1+\delta,\,s_{\text{max}}],
\end{equation}
where $\delta \in \mathbb{R}_{>0}$ specifies an exclusion margin around the nominal scale $s=1$. To control how strongly we penalize different off-nominal scales, we introduce a nonnegative weighting function $w(s)$ and define the off-nominal loss as
\begin{equation}
\mathcal{L}_{\text{off}}(\Delta W) := \mathbb{E}_{s \sim \mathcal{S}} \left[ w(s)\cdot \mathcal{L}_{\text{scaled}}(\Delta W; s) \right].
\end{equation}
Then, our goal is to minimize
\begin{equation}\label{eq:trap2_obj}
J(\Delta W)
=
\mathcal{L}_{\text{nominal}}(\Delta W)
-\lambda \cdot \mathcal{L}_{\text{off}}(\Delta W),
\end{equation}
where $\lambda \in \mathbb{R}_{>0}$ trades off standalone utility and sensitivity to re-scalings. The first term preserves intended deployment performance, while the second term induces sensitivity to off-nominal re-scalings, encouraging unmergeability.

\begin{figure}[t]
  \centering
  \includegraphics[width=\columnwidth]{./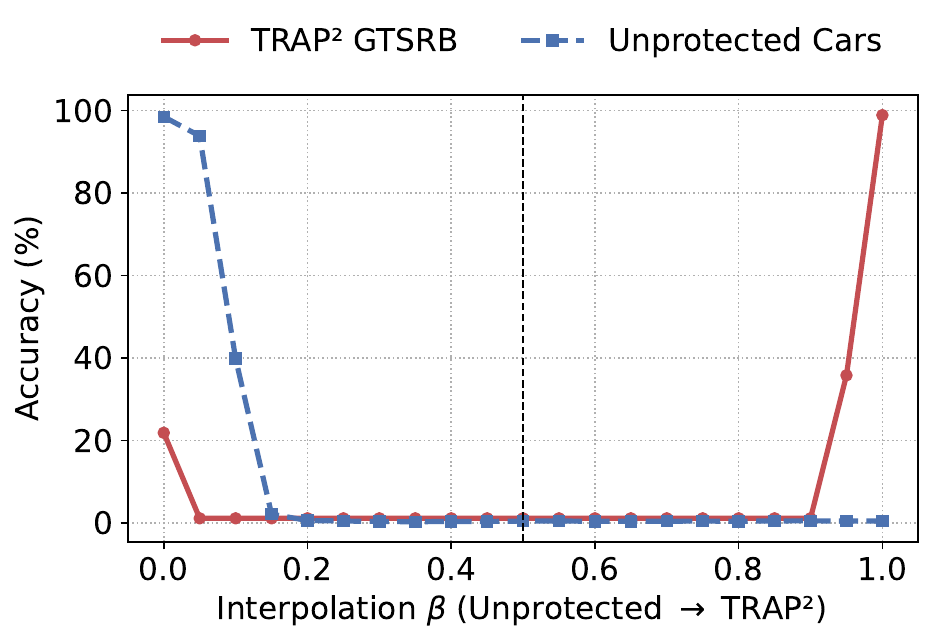} 
  \caption{Performance degradation under pairwise merging. Accuracy along the interpolation path between an unprotected Cars adapter and a \textsc{Trap$^2$} GTSRB adapter is evaluated on both tasks.
  }
  \label{fig:main_fig_2}
\end{figure}

\paragraph{Unmergeability Mechanism}

Eq.~\eqref{eq:trap2_obj} intentionally creates an asymmetry across various scales: it preserves performance at $s=1$ by minimizing $\mathcal{L}_{\text{nominal}}(\Delta W)$, while increasing loss under off-nominal re-scalings via $\mathcal{L}_{\text{off}}(\Delta W)$. Because many merging operators effectively re-scale each constituent update (e.g., averaging scales each by $1/N$), the merged adapter is pushed away from its nominal operating point; \textsc{Trap$^2$} is trained to be brittle under such shifts. The weighting function $w(s)$ is used to normalize the training signal across scales. In particular, gradients can become small for down-scaled updates ($s<1$), so we set $w(s)=1/s$ by default to compensate for this effect and stabilize training.

\begin{table*}[h!]
\centering
\footnotesize
\setlength{\tabcolsep}{2.8pt}
\renewcommand{\arraystretch}{1.10}
\caption{Standalone per-task accuracy (\%; $\uparrow$) on 8 vision benchmarks for adapter-only deployment. We compare unprotected LoRA fine-tuning with post-hoc defenses (two LoRA variants of PaRaMS, Merge-Lock) and \textsc{Trap$^2$} across three CLIP backbones (ViT-B/32, ViT-L/14, ConvNeXt). For each dataset, bar length is normalized by the corresponding Fine-Tuned score (treated as 100), and bars below 95\% of Fine-Tuned are colored \textbf{\textcolor[HTML]{FFB7C5}{pink}}. We omit the baselines for ConvNeXt since they are not directly applicable.}
\label{tab:main_8datasets_wide_bar}

\newcommand{\methodhdr}{\multicolumn{1}{c}{\textbf{Method}}}

\begin{tabular}{c l *{9}{>{\centering\arraybackslash}m{1.2cm}}}
\toprule
\midrule
\textbf{Backbone} & \methodhdr & \textbf{Cars} & \textbf{DTD} & \textbf{EuroSAT} & \textbf{GTSRB} & \textbf{MNIST} & \textbf{RESISC} & \textbf{Aircraft} & \textbf{SVHN} & \textbf{Average} \\
\midrule
\multirow{7}{*}[-0.15em]{ViT-B/32} 
& \zs Zero-Shot
& \zs 59.397 & \zs 44.096 & \zs 45.481 & \zs 32.255 & \zs 47.988 & \zs 60.302 & \zs 18.962 & \zs 31.384 & \zs 42.483 \\
& Fine-Tuned
& 99.523 & 68.723 & 98.259 & 98.416 & 99.087 & 93.048 & 51.725 & 96.168 & 88.119 \\

& \quad + Merge-Lock$^\star$
& \barcellabs{0.575}{98.523} & \barcellabs{2.394}{68.723} & \barcellabs{10.963}{98.259} & \barcellabs{2.009}{98.416} & \barcellabs{9.462}{99.087} & \barcellabs{3.540}{93.048} & \barcellabs{11.423}{51.725} & \barcellabs{9.666}{96.168} & \barcellabs{6.254}{88.119} \\
& \quad + Merge-Lock$^\dagger$
& \barcellabs{0.482}{98.523} & \barcellabs{2.074}{68.723} & \barcellabs{6.333}{98.259} & \barcellabs{2.059}{98.416} & \barcellabs{9.988}{99.087} & \barcellabs{2.111}{93.048} & \barcellabs{1.140}{51.725} & \barcellabs{9.066}{96.168} & \barcellabs{4.157}{88.119} \\
& \quad + PaRaMS$^\star$
& \barcellabs{97.839}{98.523} & \barcellabs{67.713}{68.723} & \barcellabs{98.296}{98.259} & \barcellabs{98.238}{98.416} & \barcellabs{99.125}{99.087} & \barcellabs{92.476}{93.048} & \barcellabs{51.305}{51.725} & \barcellabs{96.111}{96.168} & \barcellabs{87.638}{88.119} \\
& \quad + PaRaMS$^\dagger$
& \barcellabs{92.150}{98.523} & \barcellabs{61.596}{68.723} & \barcellabs{92.222}{98.259} & \barcellabs{97.694}{98.416} & \barcellabs{98.125}{99.087} & \barcellabs{91.127}{93.048} & \barcellabs{46.685}{51.725} & \barcellabs{95.534}{96.168} & \barcellabs{84.392}{88.119} \\

& \textsc{Trap$^{2}$} (Ours)
& \barcellabs{99.829}{98.523} & \barcellabs{66.117}{68.723} & \barcellabs{98.519}{98.259} & \barcellabs{98.911}{98.416} & \barcellabs{99.462}{99.087} & \barcellabs{93.302}{93.048} & \barcellabs{52.355}{51.725} & \barcellabs{96.557}{96.168} & \barcellabs{88.132}{88.119} \\
\midrule\midrule
\multirow{7}{*}[-0.15em]{ViT-L/14} 
& \zs Zero-Shot
& \zs 77.864 & \zs 55.532 & \zs 62.259 & \zs 50.713 & \zs 76.162 & \zs 71.365 & \zs 32.583 & \zs 58.552 & \zs 60.629 \\
& Fine-Tuned
& 99.767 & 76.702 & 98.778 & 98.466 & 99.587 & 95.587 & 72.577 & 97.758 & 92.403 \\

& \quad + Merge-Lock$^\star$
& \barcellabs{0.497}{99.767} & \barcellabs{2.340}{76.702} & \barcellabs{8.593}{98.778} & \barcellabs{3.256}{98.466} & \barcellabs{11.737}{99.587} & \barcellabs{1.571}{95.587} & \barcellabs{0.930}{72.577} & \barcellabs{6.473}{97.758} & \barcellabs{4.425}{92.403} \\
& \quad + Merge-Lock$^\dagger$
& \barcellabs{0.389}{99.767} & \barcellabs{1.862}{76.702} & \barcellabs{9.111}{98.778} & \barcellabs{1.880}{98.466} & \barcellabs{11.950}{99.587} & \barcellabs{1.635}{95.587} & \barcellabs{1.110}{72.577} & \barcellabs{8.259}{97.758} & \barcellabs{4.525}{92.403} \\
& \quad + PaRaMS$^\star$
& \barcellabs{99.518}{99.767} & \barcellabs{76.170}{76.702} & \barcellabs{98.593}{98.778} & \barcellabs{98.357}{98.466} & \barcellabs{99.550}{99.587} & \barcellabs{94.698}{95.587} & \barcellabs{69.967}{72.577} & \barcellabs{43.815}{97.758} & \barcellabs{85.084}{92.403} \\
& \quad + PaRaMS$^\dagger$
& \barcellabs{98.912}{99.767} & \barcellabs{74.043}{76.702} & \barcellabs{98.407}{98.778} & \barcellabs{98.248}{98.466} & \barcellabs{99.638}{99.587} & \barcellabs{93.397}{95.587} & \barcellabs{67.027}{72.577} & \barcellabs{41.424}{97.758} & \barcellabs{83.887}{92.403} \\

& \textsc{Trap$^{2}$} (Ours)
& \barcellabs{99.876}{99.767} & \barcellabs{78.830}{76.702} & \barcellabs{98.481}{98.778} & \barcellabs{99.584}{98.466} & \barcellabs{99.438}{99.587} & \barcellabs{96.175}{95.587} & \barcellabs{80.138}{72.577} & \barcellabs{97.167}{97.758} & \barcellabs{93.711}{92.403} \\

\midrule\midrule
\multirow{3}{*}[-0.025em]{ConvNeXt} 
& \zs Zero-Shot
& \zs 89.554 & \zs 59.628 & \zs 54.519 & \zs 48.892 & \zs 54.888 & \zs 67.143 & \zs 27.512 & \zs 34.327 & \zs 54.558 \\
& Fine-Tuned
& 98.492 & 76.436 & 98.926 & 99.268 & 99.325 & 96.063 & 59.226 & 97.105 & 90.605 \\
& \textsc{Trap$^{2}$} (Ours)
& \barcellabs{97.606}{98.492} & \barcellabs{76.064}{76.436} & \barcellabs{98.630}{98.926} & \barcellabs{98.931}{99.268} & \barcellabs{99.488}{99.325} & \barcellabs{94.079}{96.063} & \barcellabs{64.326}{59.226} & \barcellabs{96.159}{97.105} & \barcellabs{90.660}{90.605} \\
\midrule
\bottomrule
\end{tabular}
\vspace{-3mm}
\end{table*}

\paragraph{Optimization}
As summarized in Algorithm~\ref{alg:trap}, we optimize the \textsc{Trap$^{2}$} objective in Eq.~\eqref{eq:trap2_obj} using stochastic first-order methods such as SGD.
The expectation over $s \sim \mathcal{S}$ is approximated by a single-sample Monte Carlo draw per iteration, which provides an unbiased gradient estimator for $J$. In Appendix~\ref{thm:sgd_stationarity}, we provide a stationarity guarantee for this procedure under standard assumptions.

\vspace{-2.0mm}

\paragraph{Degradation from Down-Scaling}

Uniformly averaging $N$ adapters re-scales each constituent adapter by a factor of $1/N$, placing the merged adapter away from the nominal scale. Since \textsc{Trap$^2$} explicitly amplifies loss under such off-nominal re-scalings, the merged one is expected to incur systematic degradation, as illustrated by the \textcolor[HTML]{C44E52}{\textsc{Trap$^{2}$}-trained GTSRB adapter} in Figure \ref{fig:main_fig_2}.
This effect is formalized in Appendix~\ref{thm:general_downscale}, which characterizes the loss increase under down-scaling (e.g., $s=1/N$ in the uniform averaging).

\vspace{-2.0mm}

\paragraph{Cross-Adapter Degradation}

Merging a \textsc{Trap$^2$}-trained adapter with an independently trained adapter moves the latter away from its nominal scale, as illustrated by the \textcolor[HTML]{4C72B0}{unprotected Cars adapter} in Figure~\ref{fig:main_fig_2}. Consequently, even an unprotected adapter can degrade after merging. Under mild regularity conditions, this degradation increases with the distance between the two adapters in parameter space. A formal analysis is deferred to Appendix~\ref{thm:cross-merge-degrade}.

\vspace{-2.0mm}

\paragraph{Extension to Full Fine-Tuning}

The same objective extends to full fine-tuning by defining a step-dependent update $\Delta W_t := W_t - W_0$. At each step $t$, we evaluate the nominal loss at $W_t$ and an off-nominal loss at the scaled model $W_0 + s \cdot \Delta W_t$, and then update $W_t$. This yields a unified, architecture-agnostic formulation that applies to both adapter-based and full-checkpoint releases.

\vspace{-2.0mm}

\section{Experiments}
\label{sec:experiments}
\begin{table*}[t!]
\centering
\footnotesize
\setlength{\tabcolsep}{3.5pt}
\renewcommand{\arraystretch}{1.10}
\caption{Averaged per-task accuracy (\%; $\downarrow$) under 8-way LoRA merging: in each trial, the target-task adapter is protected and merged with seven unprotected adapters using TA/TIES/TIES+DARE/TSV/CART in Full/KnOTS/Core spaces. Bars are normalized within each column, and values below $95\%$ of \textit{Unprotected} are highlighted in \textbf{\textcolor[HTML]{FFB7C5}{pink}}. We tune the merging coefficient via validation-based search over $s\in\{0.1,0.2,\ldots,10.0\}$ (including $s{=}1.0$), making results optimistic for the merger.}

\label{tab:avg_only_method_space_three_backbones}

\resizebox{\linewidth}{!}{%
\begin{tabular}{l c ccc ccc ccc ccc}
\toprule
\midrule
\multicolumn{1}{c}{\textbf{Protection}} &
\multicolumn{13}{c}{\textbf{Merging Method / Merging Space}} \\
\cmidrule(lr){1-1}\cmidrule(lr){2-14}

& \multicolumn{1}{c}{\textbf{TA}}
& \multicolumn{3}{c}{\textbf{TIES}}
& \multicolumn{3}{c}{\textbf{TIES+DARE}}
& \multicolumn{3}{c}{\textbf{TSV}}
& \multicolumn{3}{c}{\textbf{CART}} \\
\cmidrule(lr){2-2}
\cmidrule(lr){3-5}
\cmidrule(lr){6-8}
\cmidrule(lr){9-11}
\cmidrule(lr){12-14}

& \multicolumn{1}{c}{\textbf{Full}}
& \multicolumn{1}{c}{\textbf{Full}} & \multicolumn{1}{c}{\textbf{KnOTS}} & \multicolumn{1}{c}{\textbf{Core}}
& \multicolumn{1}{c}{\textbf{Full}} & \multicolumn{1}{c}{\textbf{KnOTS}} & \multicolumn{1}{c}{\textbf{Core}}
& \multicolumn{1}{c}{\textbf{Full}} & \multicolumn{1}{c}{\textbf{KnOTS}} & \multicolumn{1}{c}{\textbf{Core}}
& \multicolumn{1}{c}{\textbf{Full}} & \multicolumn{1}{c}{\textbf{KnOTS}} & \multicolumn{1}{c}{\textbf{Core}} \\

\midrule\midrule

\multicolumn{14}{l}{\zs\makebox[0pt][l]{\textbf{Backbone: \quad \; ViT-B/32} \; (Zero-shot: 42.483)}} \\

\zs \textit{Unprotected}
& \zs 48.273
& \zs 48.020 & \zs 49.925 & \zs 53.264
& \zs 48.180 & \zs 49.926 & \zs 54.745
& \zs 51.442 & \zs 49.202 & \zs 55.014
& \zs 49.553 & \zs 49.850 & \zs 51.201 \\

\midrule

PaRaMS$^\star$
& \barcellabs{48.290}{48.273}
& \barcellabs{48.027}{48.020} & \barcellabs{49.847}{49.925} & \barcellabs{53.054}{53.264}
& \barcellabs{48.329}{48.180} & \barcellabs{49.801}{49.926} & \barcellabs{54.506}{54.745}
& \barcellabs{51.901}{51.442} & \barcellabs{49.249}{49.202} & \barcellabs{55.049}{55.014}
& \barcellabs{49.663}{49.553} & \barcellabs{49.908}{49.850} & \barcellabs{51.922}{51.201} \\

PaRaMS$^\dagger$
& \barcellabs{48.104}{48.273}
& \barcellabs{48.078}{48.020} & \barcellabs{49.872}{49.925} & \barcellabs{53.022}{53.264}
& \barcellabs{48.208}{48.180} & \barcellabs{49.872}{49.926} & \barcellabs{54.080}{54.745}
& \barcellabs{51.418}{51.442} & \barcellabs{49.033}{49.202} & \barcellabs{54.509}{55.014}
& \barcellabs{49.418}{49.553} & \barcellabs{49.791}{49.850} & \barcellabs{50.647}{51.201} \\

\textsc{Trap$^2$} (Ours)
& \barcellabs{23.121}{48.273}
& \barcellabs{36.480}{48.020} & \barcellabs{37.514}{49.925} & \barcellabs{37.213}{53.264}
& \barcellabs{33.880}{48.180} & \barcellabs{28.383}{49.926} & \barcellabs{36.752}{54.745}
& \barcellabs{24.815}{51.442} & \barcellabs{24.210}{49.202} & \barcellabs{24.449}{55.014}
& \barcellabs{41.324}{49.553} & \barcellabs{40.967}{49.850} & \barcellabs{41.602}{51.201} \\

\midrule\midrule

\multicolumn{14}{l}{\zs\makebox[0pt][l]{\textbf{Backbone: \quad \; ViT-L/14} \; (Zero-shot: 60.629)}} \\

\zs \textit{Unprotected}
& \zs 62.914
& \zs 67.909 & \zs 68.879 & \zs 68.825
& \zs 67.970 & \zs 69.870 & \zs 68.779
& \zs 72.411 & \zs 66.725 & \zs 74.565
& \zs 64.399 & \zs 65.014 & \zs 66.185 \\

\midrule

PaRaMS$^\star$
& \barcellabs{62.926}{62.914}
& \barcellabs{67.821}{67.909} & \barcellabs{69.583}{68.879} & \barcellabs{69.229}{68.825}
& \barcellabs{67.915}{67.970} & \barcellabs{69.631}{69.870} & \barcellabs{69.215}{68.779}
& \barcellabs{71.985}{72.411} & \barcellabs{66.740}{66.725} & \barcellabs{74.592}{74.565}
& \barcellabs{64.358}{64.399} & \barcellabs{65.012}{65.014} & \barcellabs{66.231}{66.185} \\

PaRaMS$^\dagger$
& \barcellabs{62.826}{62.914}
& \barcellabs{67.875}{67.909} & \barcellabs{69.581}{68.879} & \barcellabs{69.287}{68.825}
& \barcellabs{67.877}{67.970} & \barcellabs{69.596}{69.870} & \barcellabs{69.302}{68.779}
& \barcellabs{72.150}{72.411} & \barcellabs{66.355}{66.725} & \barcellabs{74.097}{74.565}
& \barcellabs{64.262}{64.399} & \barcellabs{64.811}{65.014} & \barcellabs{65.825}{66.185} \\

\textsc{Trap$^2$} (Ours)
& \barcellabs{33.619}{62.914}
& \barcellabs{53.162}{67.909} & \barcellabs{47.749}{68.879} & \barcellabs{54.462}{68.825}
& \barcellabs{50.630}{67.970} & \barcellabs{43.269}{69.870} & \barcellabs{54.847}{68.779}
& \barcellabs{41.993}{72.411} & \barcellabs{37.369}{66.725} & \barcellabs{39.389}{74.565}
& \barcellabs{58.634}{64.399} & \barcellabs{58.444}{65.014} & \barcellabs{59.514}{66.185} \\

\midrule \midrule

\multicolumn{14}{l}{\zs\makebox[0pt][l]{\textbf{Backbone: \quad \; ConvNeXt} \; (Zero-shot: 54.558)}} \\

\zs \textit{Unprotected}
& \zs 49.203
& \zs 59.603 & \zs 60.201 & \zs 63.601
& \zs 59.620 & \zs 60.305 & \zs 63.573
& \zs 65.133 & \zs 60.351 & \zs 65.836
& \zs 60.069 & \zs 59.997 & \zs 61.215 \\

\midrule

\textsc{Trap$^2$} (Ours)
& \barcellabs{14.719}{49.203}
& \barcellabs{32.883}{59.603} & \barcellabs{15.187}{60.201} & \barcellabs{17.188}{63.601}
& \barcellabs{15.636}{59.620} & \barcellabs{13.054}{60.305} & \barcellabs{25.335}{63.573}
& \barcellabs{14.085}{65.133} & \barcellabs{15.850}{60.351} & \barcellabs{15.296}{65.836}
& \barcellabs{46.973}{60.069} & \barcellabs{45.885}{59.997} & \barcellabs{47.204}{61.215} \\

\midrule
\bottomrule
\end{tabular}%
}
\vspace{-2.5mm}
\end{table*}

In this section, we evaluate unmergeability under a realistic hub-style LoRA deployment setting, where a fixed base model is shared and only low-rank adapters are released and composed. Detailed settings (e.g., hyperparameters) for experiments can be found in Appendix \ref{sec:app_d}.

\vspace{-2.5mm}

\subsection{Baselines} \label{sec:baselines}

We compare \textsc{Trap$^2$} against two post-hoc defenses, PaRaMS~\citep{Junhao_2025_ICCV} and Merge-Lock~\citep{wang2025modelunmergingmakingmodels}. Since they are not LoRA-native, we use two adapter-only variants: (i) an \emph{adapter-space} variant ($^\star$) that transforms the released update $\Delta W$ (for PaRaMS, this matches its LoRA setting using attention-head scaling only); and (ii) a \emph{refitting} variant ($^\dagger$) that transforms $W=W_0+\Delta W$ and refits a rank-$r$ LoRA update via truncated SVD, which may incur approximation error. Detailed explanations for these baselines are provided in Appendix \ref{sec:post-hoc}.

\vspace{-2.5mm}

\subsection{Experiments on LoRA Merging} \label{sec:vision_exp}

We consider eight standard image classification benchmarks: Cars \citep{6755945}, DTD \citep{6909856}, EuroSAT \citep{8519248}, GTSRB \citep{STALLKAMP2012323}, MNIST \citep{6296535}, RESISC \citep{7891544}, Aircraft \citep{maji13fine-grained}, and SVHN \citep{37648}. We use CLIP \citep{pmlr-v139-radford21a} ViT-B/32 by default, and evaluate ViT-L/14 and ConvNeXt \citep{liu2022convnet}-based CLIP models additionally. We fix the LoRA rank to $r=16$ and fine-tune only LoRA parameters on a frozen base model. For each task, we train (i) an unprotected adapter, to which post-hoc baselines are applied after fine-tuning, and (ii) a \textsc{Trap$^2$} adapter trained under Eq.~\eqref{eq:trap2_obj}.

After training, adapters are merged using TA \citep{ilharcoediting}, TIES \citep{yadav2023tiesmerging}, TIES+DARE \citep{yu2024language}, TSV \cite{11092448}, and CART \citep{choi2024revisiting}, in Full space as well as KnOTS \citep{stoica2025knots} and Core Space \citep{panariello2025accurate}. Unless otherwise specified, we tune the merging coefficient via a simple grid search starting from $0.1$ with step size $0.1$ and select the best value on the validation set, using early stopping with patience 10. This procedure always evaluates the nominal coefficient $1.0$ and spans both down-scaling and up-scaling of constituent updates, so the reported results reflect an optimistic (strong) merger that searches for the most favorable coefficient under each operator and space. Details of the merging operators and spaces are provided in Appendix~\ref{sec:app_b}.

\vspace{-2mm}

\paragraph{Standalone Performance}

A practical protection mechanism must preserve standalone utility under adapter-only deployment. Table~\ref{tab:main_8datasets_wide_bar} reports per-task standalone accuracy on the eight vision benchmarks for \textsc{Trap$^2$} and the two post-hoc baselines, PaRaMS and Merge-Lock, using the two variants described in Section~\ref{sec:baselines}.

Merge-Lock collapses under both variants, often yielding accuracy close to chance level even before merging, making it unsuitable for adapter-only deployment. In contrast, PaRaMS and \textsc{Trap$^2$} retain standalone performance comparable to unprotected fine-tuning across tasks, satisfying the standalone-utility requirement.

\vspace{-2mm}

\begin{figure*}[h!]
    \centering
    \begin{subfigure}[t]{0.33\textwidth}
        \centering
        \includegraphics[width=\linewidth]{./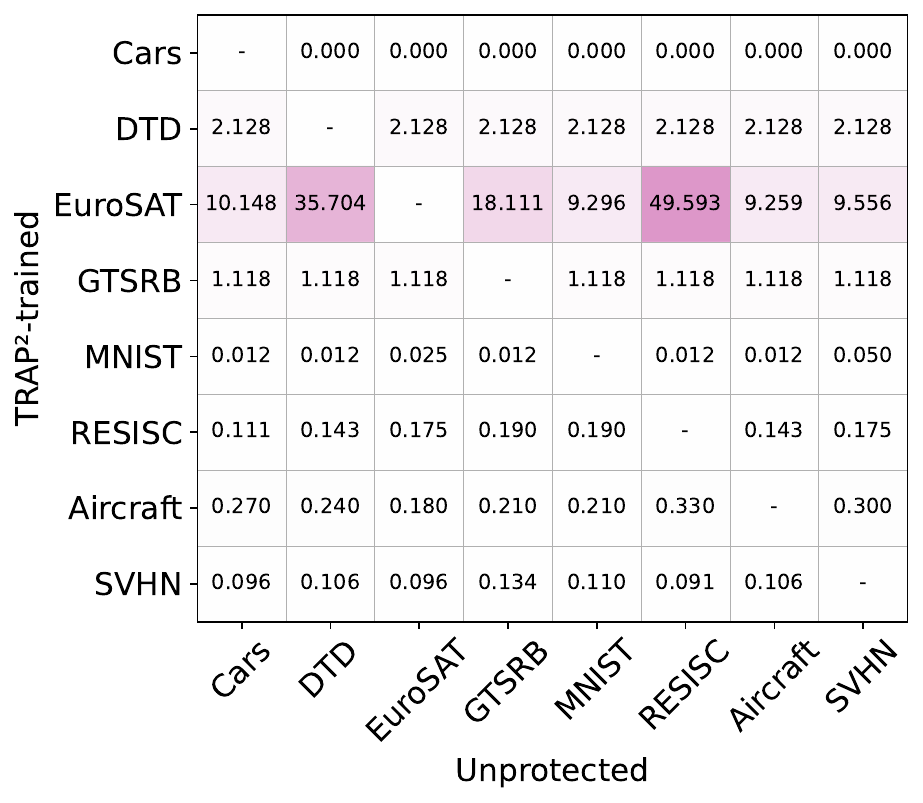}
        \caption{\textsc{Trap${^2}$}}
        \label{fig:pairwise_trap}
    \end{subfigure}\hfill
    \begin{subfigure}[t]{0.33\textwidth}
        \centering
        \includegraphics[width=\linewidth]{./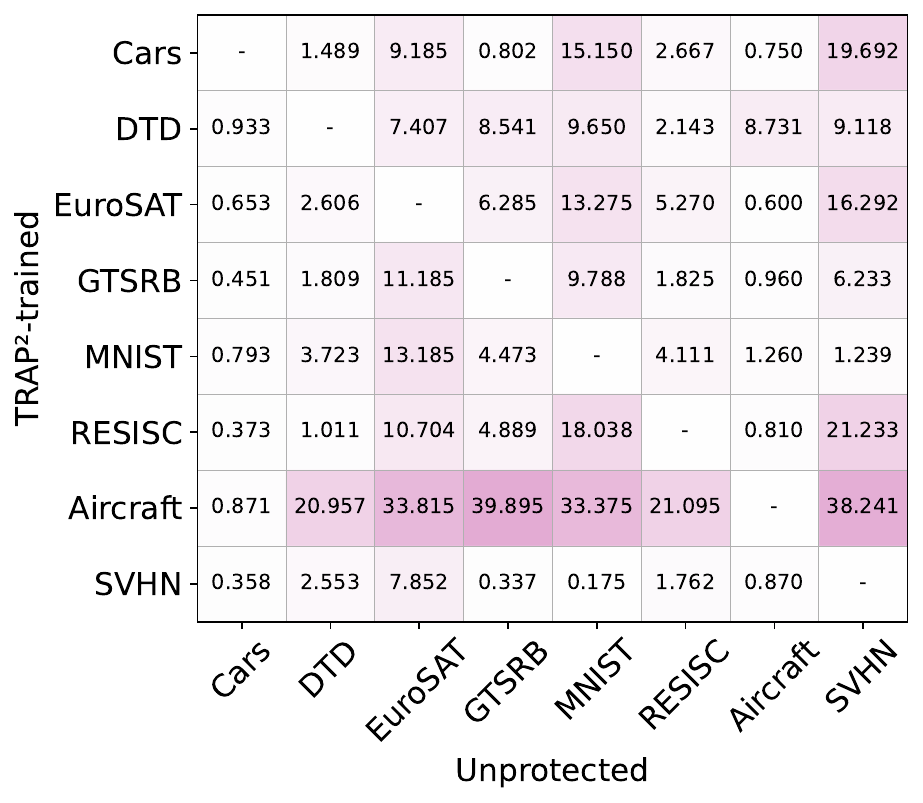}
        \caption{Unprotected}
        \label{fig:pairwise_unprotected}
    \end{subfigure}
        \begin{subfigure}[t]{0.33\textwidth}
        \centering
        \includegraphics[width=\linewidth]{./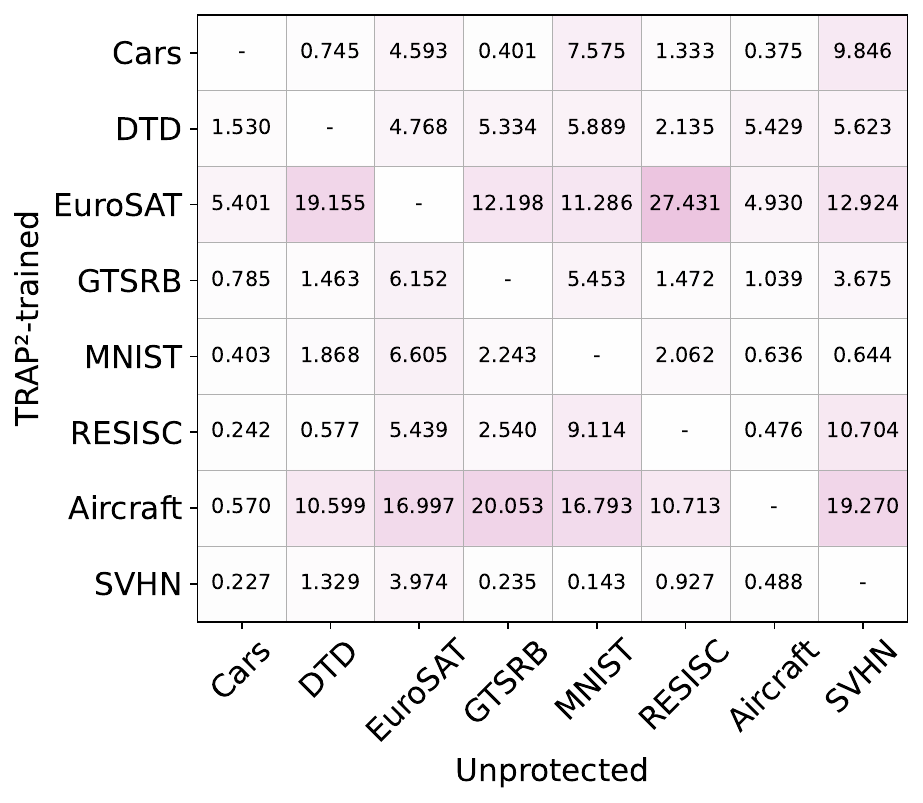}
        \caption{Average}
        \label{fig:pairwise_average}
    \end{subfigure}

    \caption{Results of pairwise LoRA merging at scale $s = 0.8$ on CLIP ViT-B/32. Each cell merges one \textsc{Trap$^2$}-trained adapter for the row task with one unprotected adapter for the column task, and reports per-task accuracy (\%; $\downarrow$) on (a) the protected task, (b) the unprotected task, and (c) their average. Cell color intensity encodes accuracy, with darker shading indicating lower accuracy (stronger degradation). The protected adapter consistently self-collapses after merging, often inducing collateral degradation on the unprotected task.}
    \label{fig:pairwise}
\vspace{-2mm}
\end{figure*}

\begin{table*}[t!]
\centering
\footnotesize
\setlength{\tabcolsep}{2.8pt}
\renewcommand{\arraystretch}{1.10}
\caption{Standalone per-task accuracy (\%; $\uparrow$) on 8 vision benchmarks for CLIP ViT-B/32 under full fine-tuning. \textsc{Trap$^2$} preserves standalone performance comparable to standard fine-tuning.}

\label{tab:fft_8datasets_wide_bar}

\newcommand{\methodhdr}{\multicolumn{1}{c}{\textbf{Method}}}

\begin{tabular}{c l *{9}{>{\centering\arraybackslash}m{1.2cm}}}
\toprule
\midrule
\textbf{Backbone} & \methodhdr & \textbf{Cars} & \textbf{DTD} & \textbf{EuroSAT} & \textbf{GTSRB} & \textbf{MNIST} & \textbf{RESISC} & \textbf{Aircraft} & \textbf{SVHN} & \textbf{Average} \\
\midrule

\multirow{3}{*}[-0.025em]{ViT-B/32} 
& \zs Zero-Shot
& \zs 59.397 & \zs 44.096 & \zs 45.481 & \zs 32.255 & \zs 47.988 & \zs 60.302 & \zs 18.962 & \zs 31.384 & \zs 42.483 \\
& Fine-Tuned
& 99.767 & 65.000 & 98.630 & 97.615 & 99.212 & 93.175 & 51.785 & 95.602 & 87.598 \\
& \textsc{Trap$^{2}$} (Ours)
& \barcellabs{99.798}{99.767} & \barcellabs{62.766}{65.000} & \barcellabs{98.444}{98.630} & \barcellabs{99.109}{97.615} & \barcellabs{99.212}{99.212} & \barcellabs{93.317}{93.175} & \barcellabs{49.625}{36.964} & \barcellabs{96.706}{97.105} & \barcellabs{87.372}{84.461} \\
\midrule
\bottomrule
\end{tabular}
\vspace{-3mm}
\end{table*}

\paragraph{Performance Degradation under Merging}

Standalone utility alone does not guarantee unmergeability. Indiscriminate merging should yield a degraded merged model, potentially impairing both the protected task and other merged tasks. Table~\ref{tab:avg_only_method_space_three_backbones} reports controlled merging results averaged over eight trials, where we protect one target adapter, train the remaining seven unprotected, and then merge all eight adapters under each operator and space. 

We quantify unmergeability by comparing each protected merging to the corresponding unprotected merging under the same operator and space. PaRaMS exhibits only small deviations from \textit{Unprotected} across datasets and operators, indicating that task performance largely survives standard merging. In contrast, \textsc{Trap$^2$} incurs substantial post-merge degradation, often pushing the accuracy below the zero-shot baseline. Complete results are provided in Appendix~\ref{sec:app_c}.

\vspace{-2mm}

\paragraph{Scaling and Architecture Generalization}
We further evaluate robustness to backbone scale and architecture by repeating the same protocol on CLIP ViT-L/14 and ConvNeXt-based CLIP models.
On ViT-L/14, \textsc{Trap$^2$} preserves strong standalone performance while still inducing sharp post-merge degradation, whereas post-hoc baselines become unreliable under adapter-only deployment.
On ConvNeXt, attention-symmetry-based defenses are not applicable, but \textsc{Trap$^2$} transfers without architectural assumptions and exhibits the same qualitative behavior, suggesting that the unmergeability effect persists across scale and architecture.

\begin{table}[t!]
\centering
\footnotesize
\setlength{\tabcolsep}{3.5pt}
\renewcommand{\arraystretch}{1.10}
\caption{Averaged per-task accuracy (\%; $\downarrow$) on 8 vision benchmarks for CLIP ViT-B/32 under full-model merging. \textsc{Trap$^2$} remains effective under full fine-tuning, inducing strong degradation after merging compared to the unprotected baseline.}

\label{tab:avg_only_method_three_backbones_fullft}

\resizebox{\linewidth}{!}{%
\begin{tabular}{l ccccc}
\toprule
\midrule
\multicolumn{1}{c}{\textbf{Protection}} &
\multicolumn{5}{c}{\textbf{Merging Method}} \\
\cmidrule(lr){1-1}\cmidrule(lr){2-6}

& \multicolumn{1}{c}{\textbf{TA}}
& \multicolumn{1}{c}{\textbf{TIES}}
& \multicolumn{1}{c}{\textbf{TIES+DARE}}
& \multicolumn{1}{c}{\textbf{TSV}}
& \multicolumn{1}{c}{\textbf{CART}} \\
\midrule\midrule

\multicolumn{6}{l}{\zs\makebox[0pt][l]{\textbf{Backbone: \quad \; ViT-B/32} \; (Zero-shot: 42.483)}} \\

\zs \textit{Unprotected}
& \zs 49.963 & \zs 50.951 & \zs 51.673 & \zs 62.419 & \zs 61.031 \\

\midrule

\textsc{Trap$^2$} (Ours)
& \barcellabs{28.475}{49.963} & \barcellabs{37.970}{50.951} & \barcellabs{37.899}{51.673} & \barcellabs{38.181}{62.419} & \barcellabs{42.116}{61.031} \\

\midrule
\bottomrule
\end{tabular}%
}
\vspace{-1.5mm}
\end{table}

\paragraph{Pairwise Merging}

Following prior works \citep{Junhao_2025_ICCV, wang2025modelunmergingmakingmodels}, we merge one \textsc{Trap$^2$}-trained adapter with one unprotected adapter and report per-task and average accuracy, as shown in Figure~\ref{fig:pairwise}. Across task pairs, \textsc{Trap$^2$} consistently self-collapses: protected-task accuracy drops sharply, driving the decrease in average performance. We also observe collateral damage on unprotected tasks, consistent with Theorems~\ref{thm:general_downscale} and~\ref{thm:cross-merge-degrade}. Taken together, \textsc{Trap$^2$} induces degradation even at the two-adapter level.

\vspace{-2mm}

\subsection{Experiments on Generalization}\label{sec:generalize}
In this subsection, we ask whether the protection generalizes across three settings: 
full fine-tuning, mathematical reasoning tasks, and other LoRA variants.

\vspace{-1.5mm}

\paragraph{Full Fine-Tuning}
As discussed in Section~\ref{sec:methodology}, \textsc{Trap$^2$} extends beyond LoRA to full fine-tuning, by applying the objective on a step-dependent update $\Delta W_t := W_t - W_0$. We apply it on CLIP ViT-B/32 and merge the resulting checkpoints via standard weight averaging. Table~\ref{tab:fft_8datasets_wide_bar} reports standalone performance under full fine-tuning, showing that \textsc{Trap$^2$} preserves utility comparable to the unprotected baseline. Table~\ref{tab:avg_only_method_three_backbones_fullft} then evaluates full-model merging, where the merged model degrades reliably, mirroring the LoRA setting and indicating that the effect is not LoRA-specific.

\vspace{-1.5mm}

\paragraph{Mathematical Reasoning Tasks}

Beyond vision classification, we apply \textsc{Trap$^2$} to Llama-3.1-8B~\citep{grattafiori2024llama} with LoRA adapters of rank $r=32$ on two mathematical reasoning benchmarks, GSM8K~\citep{cobbe2021trainingverifierssolvemath} and ASDiv~\citep{miao-etal-2020-diverse}, measured by Exact-Match (EM) accuracy. As shown in Figure \ref{fig:math} in Appendix \ref{sec:app_c}, standalone performance remains comparable to unprotected fine-tuning on both tasks, while pairwise interpolation involving a \textsc{Trap$^2$}-protected adapter collapses sharply across a broad coefficient range, mirroring the vision setting.

\vspace{-1.5mm}

\begin{table}[t!]
\centering
\footnotesize
\setlength{\tabcolsep}{3.5pt}
\renewcommand{\arraystretch}{1.10}
\caption{Averaged accuracy (\%; $\downarrow$) under stronger threat models on 
CLIP ViT-B/32. \textsc{Trap$^2$} substantially degrades 
performance under both data-dependent merging (RegMean, CoM) and data-driven 
recovery (ProDistill, SFT), relative to the unprotected baseline.}
\label{tab:avg_only_strong_vitb32}
\resizebox{\linewidth}{!}{%
\begin{tabular}{l cccc}
\toprule
\midrule
\multicolumn{1}{c}{\textbf{Protection}} &
\multicolumn{2}{c}{\textbf{Data-dependent Merging}} &
\multicolumn{2}{c}{\textbf{Data-driven Recovery}} \\
\cmidrule(lr){1-1}\cmidrule(lr){2-3}\cmidrule(lr){4-5}
& \multicolumn{1}{c}{\textbf{RegMean}}
& \multicolumn{1}{c}{\textbf{CoM}}
& \multicolumn{1}{c}{\textbf{ProDistill}}
& \multicolumn{1}{c}{\textbf{SFT}} \\
\midrule\midrule
\multicolumn{5}{l}{\zs\makebox[0pt][l]{\textbf{Backbone: \quad \; ViT-B/32} \; (Zero-shot: 42.483)}} \\
\zs \textit{Unprotected}
& \zs 49.107 & \zs 64.776 & \zs 73.823 & \zs 56.859 \\
\midrule
\textsc{Trap$^2$} (Ours)
& \barcellabs{9.480}{49.107} & \barcellabs{32.331}{64.776} & \barcellabs{49.153}{73.823} & \barcellabs{45.496}{56.859} \\
\midrule
\bottomrule
\end{tabular}%
}
\vspace{-1.5mm}
\end{table}

\paragraph{Other LoRA Variants}

To check whether the protection extends across LoRA variants, we evaluate \textsc{Trap$^2$} with QLoRA~\citep{dettmers2023qlora} and DoRA~\citep{liu2024dora} on the GTSRB--Cars pair using the ViT-B/32 backbone. As shown in Figure~\ref{fig:lora_variants} in Appendix~\ref{sec:app_c}, standalone \textsc{Trap$^2$} matches unprotected fine-tuning under both variants, while pairwise interpolation with an unprotected adapter still collapses sharply across a broad coefficient range, indicating that the protection is not specific to standard LoRA.

\subsection{Robustness against Data-aware Methods}

In this subsection, we conduct stress tests on \textsc{Trap$^2$} under stronger threat 
models with access to task-relevant data.

\vspace{-1mm}

\paragraph{Data-dependent Merging}
We evaluate \textsc{Trap$^2$} with two representative data-dependent mergers, RegMean~\citep{jin2023dataless} and Chain-of-Merges (CoM)~\citep{buzzega2025rethinkinglayerwisemodelmerging}, which use additional task-specific samples to compute aggregation weights for merging. As shown in Table~\ref{tab:avg_only_strong_vitb32}, \textsc{Trap$^2$} induces substantial post-merge degradation under both RegMean and CoM, suggesting that protection remains effective even when the merger leverages additional data signals. We also report the coefficients selected by CoM in Figure~\ref{fig:com_coeff} as a diagnostic: unlike RegMean, CoM selects task-wise weights, and the coefficients show that it often assigns comparable or even larger weight to the \textsc{Trap$^2$}-protected adapter than to its unprotected counterpart, rather than down-weighting it to recover utility.

\vspace{-1mm}

\paragraph{Data-driven Recovery}
As a stronger threat model, we consider data-driven recovery, a white-box attack where the adversary uses a small task-relevant dataset to recover utility from a merged model. We evaluate two such procedures: ProDistill~\citep{xu2025scalable}, which learns merging coefficients via teacher-student distillation, and post-merge Supervised Fine-Tuning (SFT) of the merged model. As shown in Table~\ref{tab:avg_only_strong_vitb32}, recovery from a \textsc{Trap$^2$}-protected merge gains only a few points above the zero-shot reference, whereas the unprotected baseline gains far more. These results suggest that \textsc{Trap$^2$} substantially impedes recovery even when the adversary has access to in-domain samples.

\begin{figure}[t!]
  \centering
  \vspace{-2mm}
  \includegraphics[width=\columnwidth]{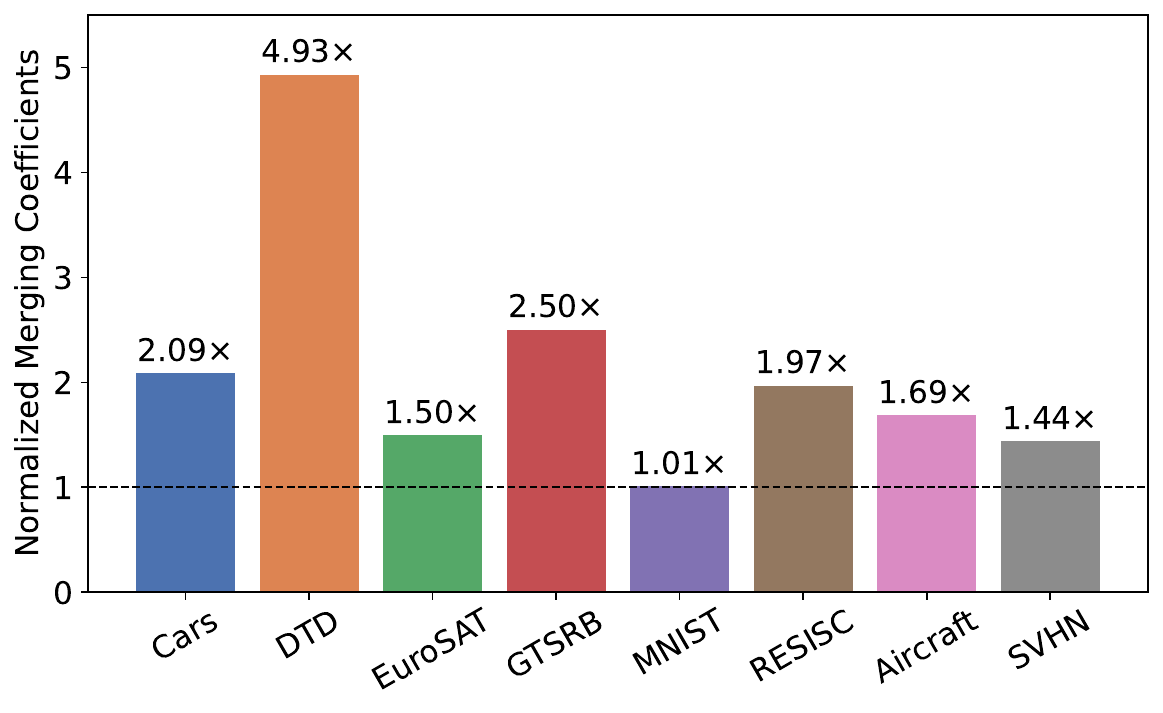} 
    \caption{Normalized merging coefficients selected by CoM, when protecting each dataset-wise adapter with \textsc{Trap$^2$}. Each coefficient is normalized by the corresponding unprotected baseline, showing that data-driven re-weighting of CoM does not prevent the collapse of \textsc{Trap$^2$}-protected adapters under merging.}
  \label{fig:com_coeff}
  \vspace{-3mm}
\end{figure}

\begin{figure*}[t]
    \centering
    \begin{subfigure}[t]{0.24\textwidth}
        \centering
        \includegraphics[width=\linewidth]{./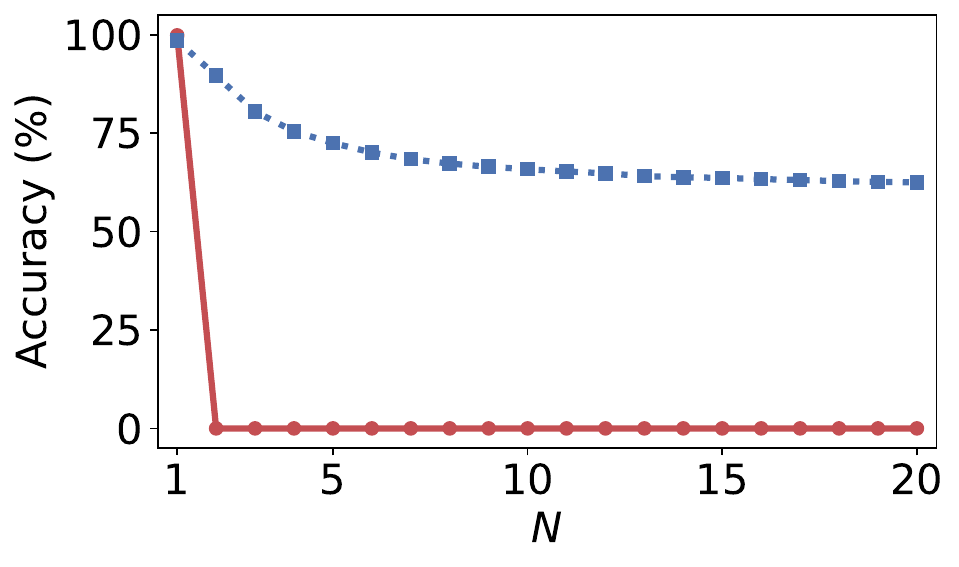}
        \caption{Cars}
        \label{fig:ua-cars}
    \end{subfigure}\hfill
    \begin{subfigure}[t]{0.24\textwidth}
        \centering
        \includegraphics[width=\linewidth]{./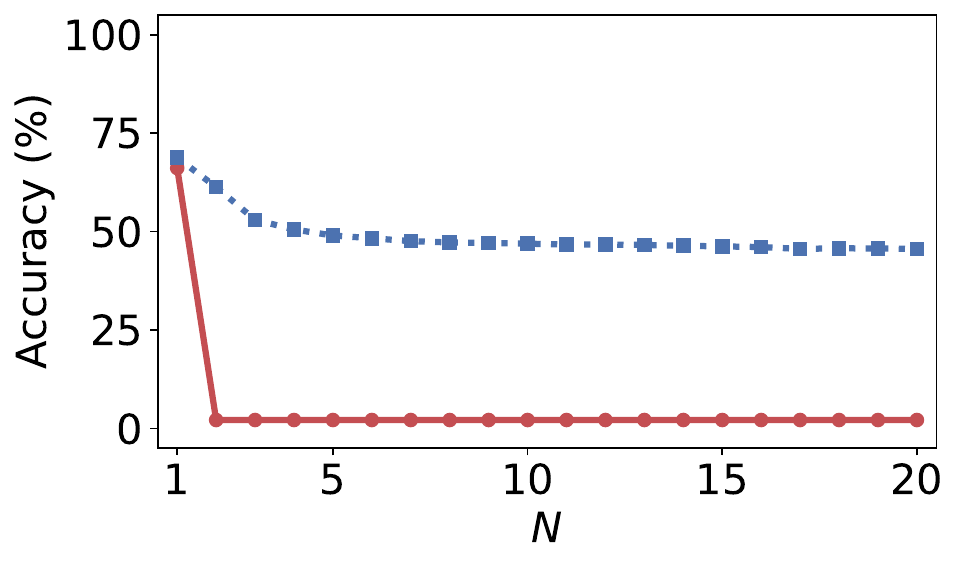}
        \caption{DTD}
        \label{fig:ua-dtd}
    \end{subfigure}\hfill
    \begin{subfigure}[t]{0.24\textwidth}
        \centering
        \includegraphics[width=\linewidth]{./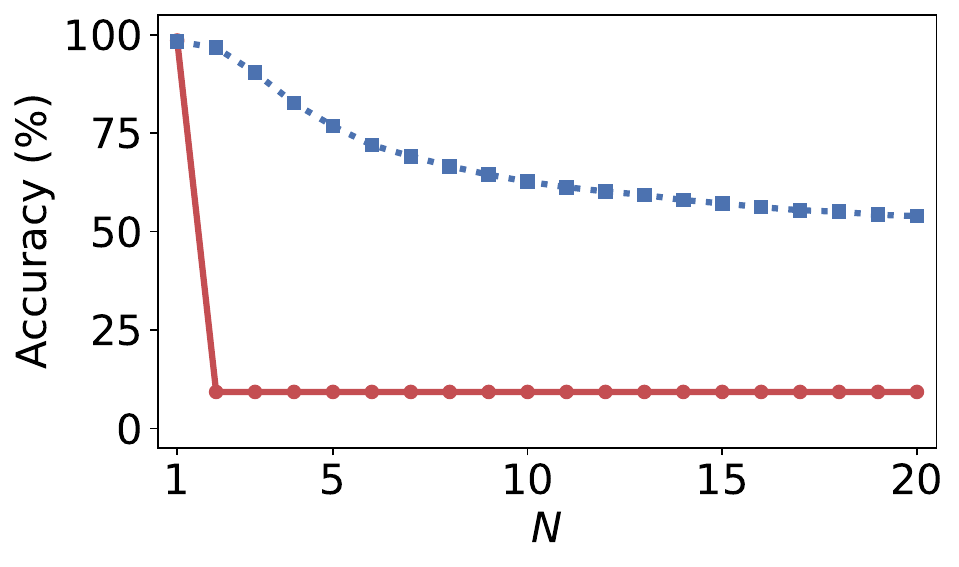}
        \caption{EuroSAT}
        \label{fig:ua-eurosat}
    \end{subfigure}\hfill
    \begin{subfigure}[t]{0.24\textwidth}
        \centering
        \includegraphics[width=\linewidth]{./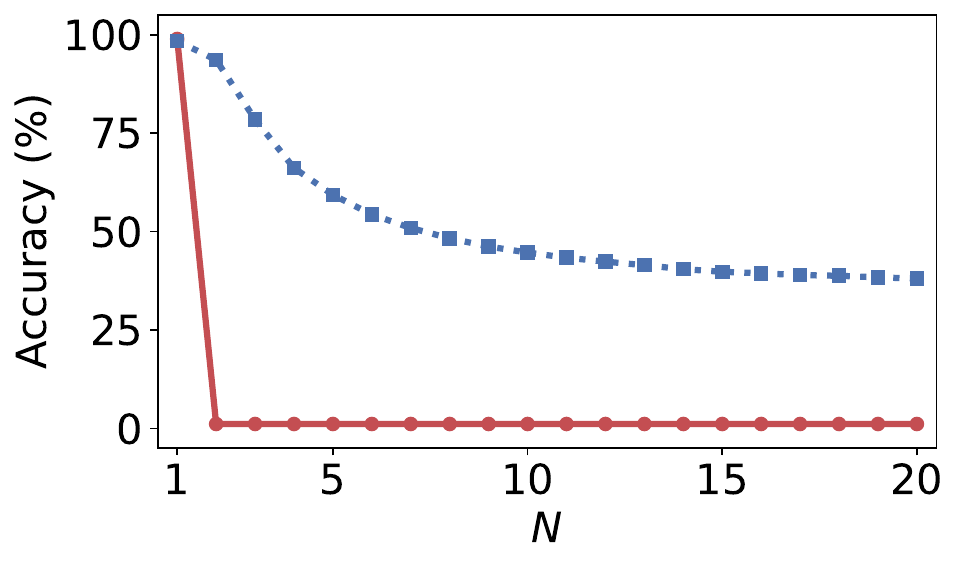}
        \caption{GTSRB}
        \label{fig:ua-gtsrb}
    \end{subfigure}

    \begin{subfigure}[t]{0.24\textwidth}
        \centering
        \includegraphics[width=\linewidth]{./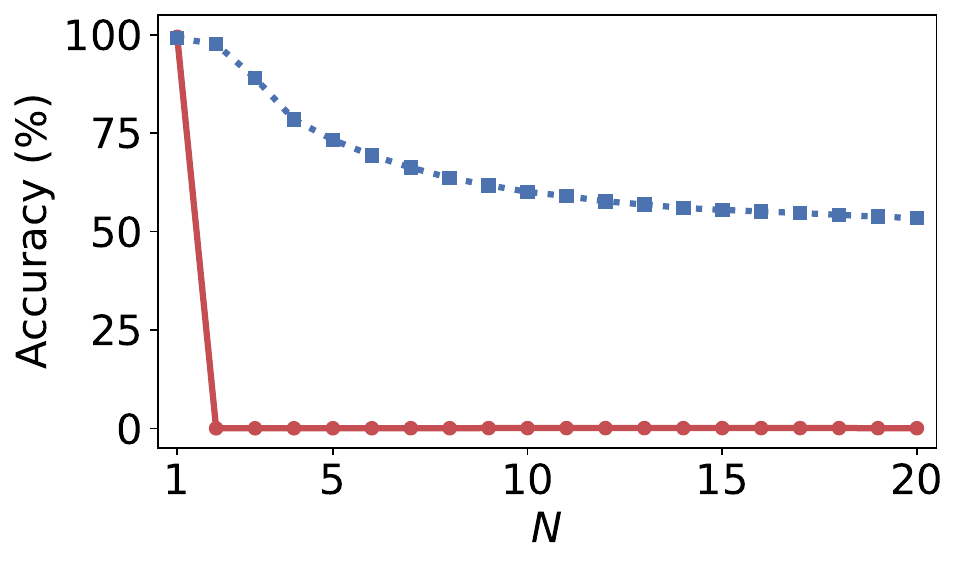}
        \caption{MNIST}
        \label{fig:ua-mnist}
    \end{subfigure}\hfill
    \begin{subfigure}[t]{0.24\textwidth}
        \centering
        \includegraphics[width=\linewidth]{./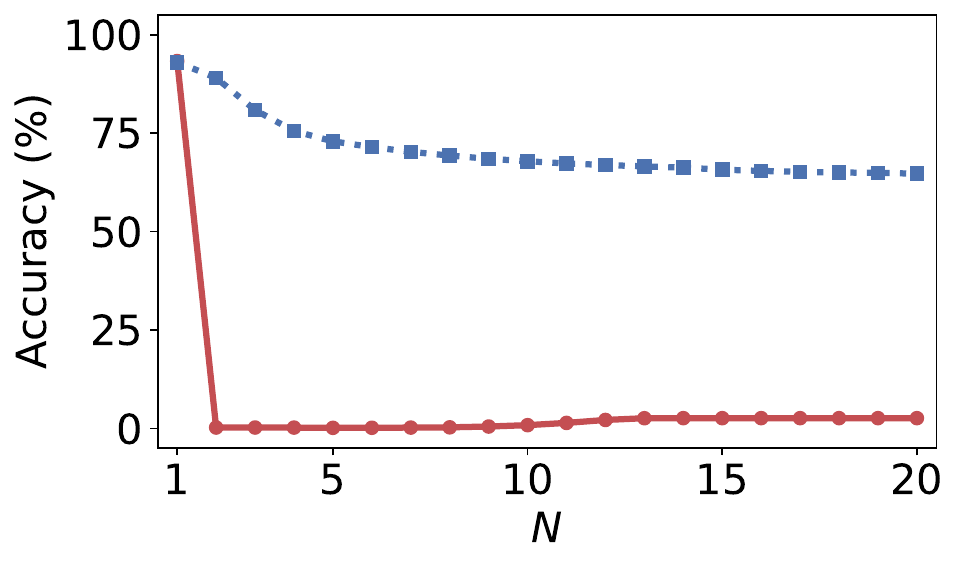}
        \caption{RESISC}
        \label{fig:ua-resisc}
    \end{subfigure}\hfill
    \begin{subfigure}[t]{0.24\textwidth}
        \centering
        \includegraphics[width=\linewidth]{./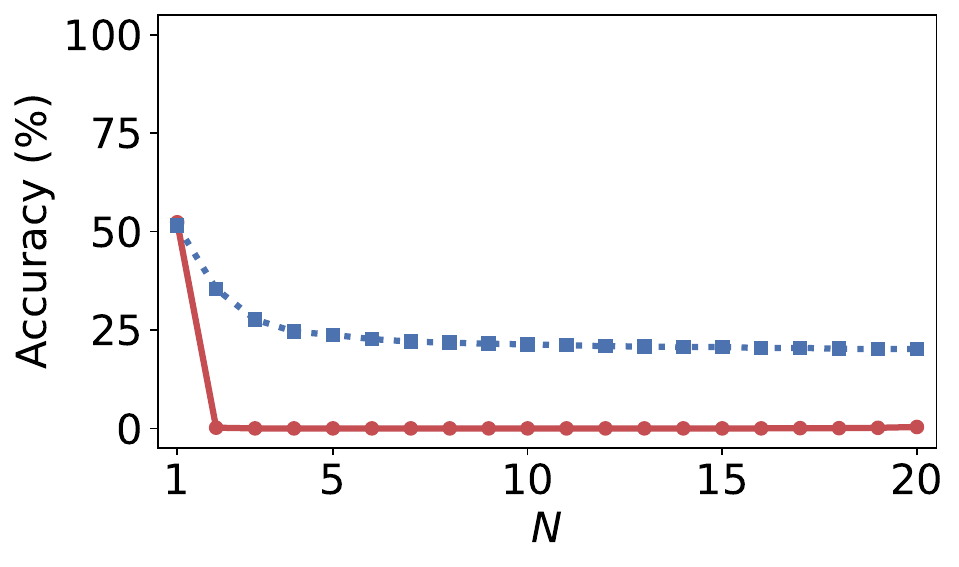}
        \caption{Aircraft}
        \label{fig:ua-aircraft}
    \end{subfigure}\hfill
    \begin{subfigure}[t]{0.24\textwidth}
        \centering
        \includegraphics[width=\linewidth]{./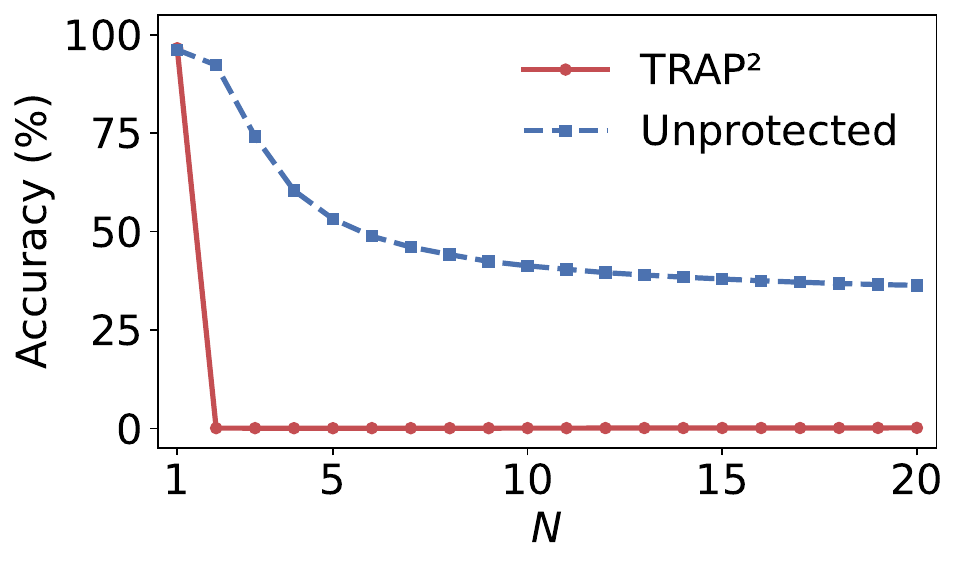}
        \caption{SVHN}
        \label{fig:ua-svhn}
    \end{subfigure}
    \caption{Uniform averaging proxy across 8 vision benchmarks. For each dataset, we evaluate a single task-specific adapter under the scaling proxy $s = 1/N$, where $1/N$ corresponds to the coefficient assigned to each constituent update in a naive uniform average of $N$ adapters. Thus, $N$ denotes a hypothetical averaging factor rather than an explicitly constructed $N$-way merge. We report accuracy (\%) as a function of $N$ for each dataset. \textsc{Trap$^2$} exhibits catastrophic degradation as soon as $N \geq 2$, while unprotected adapters remain relatively stable, indicating robustness of the protection against uniform averaging.}
    \label{fig:uniform-avg-8vision}
    \vspace{-2mm}
\end{figure*}

\begin{table*}[t!]
\centering
\footnotesize
\caption{Training cost of \textsc{Trap$^2$} relative to unprotected LoRA on 8 ViT-B/32 adapters. \textbf{Speedup} ($\times$): wall-clock ratio (Unprotected $/$ \textsc{Trap$^2$}) to a shared target accuracy, where $>1$ means \textsc{Trap$^2$} is faster. \textbf{$\Delta$ Acc} (pp): validation accuracy gap (\textsc{Trap$^2$} $-$ Unprotected) at a shared wall-clock budget, where $>0$ means \textsc{Trap$^2$} is higher. \textsc{Trap$^2$} is faster on 5 out of 8 datasets and higher on 6 out of 8 datasets.}
\label{tab:training_cost}
\begin{tabular}{@{}l cccccccc@{}}
\toprule\midrule
\textbf{Metric} & \textbf{Cars} & \textbf{DTD} & \textbf{EuroSAT} 
& \textbf{GTSRB} & \textbf{MNIST} & \textbf{RESISC} 
& \textbf{Aircraft} & \textbf{SVHN} \\
\midrule
Speedup ($\times$) & 0.268 & \textbf{2.097} & 0.661 & \textbf{4.264} 
& 0.763 & \textbf{2.694} & \textbf{6.097} & \textbf{2.269} \\
$\Delta$Acc (pp) & $-$2.052 & \textbf{$+$4.787} & $-$0.296 
& \textbf{$+$2.811} & \textbf{$+$0.050} & \textbf{$+$0.904} 
& \textbf{$+$13.741} & \textbf{$+$0.768} \\
\midrule\bottomrule
\end{tabular}
\vspace{-3mm}
\end{table*}

\subsection{Additional Analysis}

We provide additional analysis to test \textsc{Trap$^2$} with a uniform-averaging proxy that scales a single adapter by $s = 1/N$, and to quantify the per-step and end-to-end training cost, relative to standard LoRA fine-tuning.

\vspace{-1mm}

\paragraph{Uniform Averaging Proxy}

We test \textsc{Trap$^2$} with a uniform-averaging proxy, using update re-scaling as a lightweight pre-release check for merging behavior. For each task-specific adapter, we evaluate the scaling proxy $s = 1/N$, where $1/N$ corresponds to the coefficient assigned to each constituent update in a naive uniform average of $N$ adapters. Figure~\ref{fig:uniform-avg-8vision} reports the resulting accuracy across eight vision datasets in ViT-B/32, where each subplot directly compares \textsc{Trap$^2$} against the unprotected baseline within the same dataset. Across all datasets, \textsc{Trap$^2$} exhibits a consistent catastrophic degradation once $N \geq 2$, while the unprotected baseline remains largely stable as $N$ increases. These results indicate that \textsc{Trap$^2$} remains effective against indiscriminate uniform averaging, a common plug-and-play reuse pattern in open adapter ecosystems.

\paragraph{Training Cost}

We quantify the training overhead introduced by the \textsc{Trap$^2$} objective, relative to standard LoRA fine-tuning. On a single RTX A6000 GPU, the additional sampling and gradient computation in Eq.~(\ref{eq:trap2_obj}) increase per-step time by $1.72\times $, while leaving peak GPU memory nearly unchanged. However, end-to-end wall-clock cost depends not only on per-step time but also on the number of optimization steps required to reach a target. As shown in Table~\ref{tab:training_cost}, under a shared target accuracy \textsc{Trap$^2$} reaches the target faster on 5 out of 8 datasets, and under a shared wall-clock budget it achieves higher validation accuracy on 6 out of 8 datasets. These results indicate that the per-step overhead does not translate to a proportional end-to-end cost in practice. Indeed, \textsc{Trap$^2$} tends to require fewer optimization steps to reach a given target accuracy, partially offsetting the per-step overhead.

\section{Conclusion}
\label{sec:conclusion}
This paper studies protection of released fine-tuning updates against unauthorized post-release merging in model-sharing ecosystems. We propose \textsc{Trap$^2$}, a training-time objective that embeds unmergeability directly into the released update by promoting sensitivity to off-nominal scaling, a lightweight proxy for the off-nominal coefficients and update transformations applied by common composition pipelines. Across experiments spanning architectures and release formats, \textsc{Trap$^2$} preserves strong standalone utility while consistently degrading performance under widely used merging operators, without requiring architecture-specific symmetries or access to full weights.

\vspace{-1mm}

\paragraph{Limitations}

First, our claim of architecture-agnostic protection via \textsc{Trap$^2$} is objective- and formulation-level, rather than covering all architectures or deployment settings. Broader validation on larger and more diverse LLMs and multi-modal models remains an open direction. Second, the trade-off coefficient $\lambda$ requires per-dataset tuning. While this is a one-time pre-release cost rather than a deployment burden, automatic selection remains future work. Finally, our formulation assumes a shared base $W_0$ and targets a restrictive protection setting. This leaves base-model drift and permission-aware composition outside the current scope.

\vspace{-1mm}

\section*{Impact Statement}

This paper proposes a training-time protection mechanism to discourage unauthorized or indiscriminate merging and to improve the integrity of adapter reuse in open ecosystems. Although our method uses adversarial perturbations during training, these are intended as a defensive tool for uploader protection, rather than to enable attacks. We do not anticipate negative societal impacts beyond those commonly associated with research on model security and responsible model sharing. However, as any protection mechanism may be misused (e.g., to disrupt third-party merging pipelines), we encourage responsible release practices such as clear labeling and usage guidelines. For example, a protected release can be accompanied by a short README stating that it is intended for standalone use and may not behave reliably under downstream merging.

\vspace{-1mm}

\section*{Acknowledgements}

This work was partly supported by the IITP grants and the NRF grants funded by Ministry of Science and ICT, Korea (No.RS-2019-II191906, Artificial Intelligence Graduate School Program (POSTECH); No.RS-2024-00457882, AI Research Hub Project; RS-2024-00509258, Global AI Frontier Lab).

\vspace{-1mm}

\bibliography{main}

@inproceedings{
hu2022lora,
title={Lo{RA}: Low-Rank Adaptation of Large Language Models},
author={Edward J Hu and Yelong Shen and Phillip Wallis and Zeyuan Allen-Zhu and Yuanzhi Li and Shean Wang and Lu Wang and Weizhu Chen},
booktitle={International Conference on Learning Representations},
year={2022},
url={https://openreview.net/forum?id=nZeVKeeFYf9}
}

@inproceedings{
    dettmers2023qlora,
    title={{QL}o{RA}: Efficient Finetuning of Quantized {LLM}s},
    author={Tim Dettmers and Artidoro Pagnoni and Ari Holtzman and Luke Zettlemoyer},
    booktitle={Thirty-seventh Conference on Neural Information Processing Systems},
    year={2023},
    url={https://openreview.net/forum?id=OUIFPHEgJU}
}

@inproceedings{
    liu2024dora,
    title={Do{RA}: Weight-Decomposed Low-Rank Adaptation},
    author={Shih-Yang Liu and Chien-Yi Wang and Hongxu Yin and Pavlo Molchanov and Yu-Chiang Frank Wang and Kwang-Ting Cheng and Min-Hung Chen},
    booktitle={Forty-first International Conference on Machine Learning},
    year={2024},
    url={https://openreview.net/forum?id=3d5CIRG1n2}
}

@misc{wang2025modelunmergingmakingmodels,
      title={Model Unmerging: Making Your Models Unmergeable for Secure Model Sharing}, 
      author={Zihao Wang and Enneng Yang and Lu Yin and Shiwei Liu and Li Shen},
      year={2025},
      eprint={2509.01548},
      archivePrefix={arXiv},
      primaryClass={cs.LG},
      url={https://arxiv.org/abs/2509.01548}, 
}

@InProceedings{Junhao_2025_ICCV,
    author    = {Junhao, Wei and Zhe, Yu and Sakuma, Jun},
    title     = {Disrupting Model Merging: A Parameter-Level Defense Without Sacrificing Accuracy},
    booktitle = {Proceedings of the IEEE/CVF International Conference on Computer Vision (ICCV)},
    month     = {October},
    year      = {2025},
    pages     = {17698-17707}
}

@inproceedings{10.1145/3689217.3690614,
    author = {Cong, Tianshuo and Ran, Delong and Liu, Zesen and He, Xinlei and Liu, Jinyuan and Gong, Yichen and Li, Qi and Wang, Anyu and Wang, Xiaoyun},
    title = {Have You Merged My Model? On The Robustness of Large Language Model IP Protection Methods Against Model Merging},
    year = {2024},
    isbn = {9798400712098},
    publisher = {Association for Computing Machinery},
    address = {New York, NY, USA},
    url = {https://doi.org/10.1145/3689217.3690614},
    doi = {10.1145/3689217.3690614},
    abstract = {Model merging is a promising lightweight model empowerment technique that does not rely on expensive computing devices (e.g., GPUs) or require the collection of specific training data. Instead, it involves editing different upstream model parameters to absorb their downstream task capabilities. However, uncertified model merging can infringe upon the Intellectual Property (IP) rights of the original upstream models. In this paper, we conduct the first study on the robustness of IP protection methods under model merging scenarios. Specifically, we investigate two state-of-the-art IP protection techniques: Quantization Watermarking and Instructional Fingerprint, along with various advanced model merging technologies, such as Task Arithmetic, TIES-MERGING, and so on. Experimental results indicate that current Large Language Model (LLM) watermarking techniques cannot survive in the merged models, whereas model fingerprinting techniques can. Our research aims to highlight that model merging should be an indispensable consideration in the robustness assessment of model IP protection techniques, thereby promoting the healthy development of the open-source LLM community.},
    booktitle = {Proceedings of the 1st ACM Workshop on Large AI Systems and Models with Privacy and Safety Analysis},
    pages = {69–76},
    numpages = {8},
    keywords = {intellectual property, large language models, model merging},
    location = {Salt Lake City, UT, USA},
    series = {LAMPS '24}
}

@inproceedings{xu-etal-2025-evertracer,
    title = "{E}ver{T}racer: Hunting Stolen Large Language Models via Stealthy and Robust Probabilistic Fingerprint",
    author = "Xu, Zhenhua  and
      Han, Meng  and
      Xing, Wenpeng",
    editor = "Christodoulopoulos, Christos  and
      Chakraborty, Tanmoy  and
      Rose, Carolyn  and
      Peng, Violet",
    booktitle = "Proceedings of the 2025 Conference on Empirical Methods in Natural Language Processing",
    month = nov,
    year = "2025",
    address = "Suzhou, China",
    publisher = "Association for Computational Linguistics",
    url = "https://aclanthology.org/2025.emnlp-main.358/",
    doi = "10.18653/v1/2025.emnlp-main.358",
    pages = "7019--7042",
    ISBN = "979-8-89176-332-6",
    abstract = "The proliferation of large language models (LLMs) has intensified concerns over model theft and license violations, necessitating robust and stealthy ownership verification. Existing fingerprinting methods either require impractical white-box access or introduce detectable statistical anomalies. We propose EverTracer, a novel gray-box fingerprinting framework that ensures stealthy and robust model provenance tracing. EverTracer is the first to repurpose Membership Inference Attacks (MIAs) for defensive use, embedding ownership signals via memorization instead of artificial trigger-output overfitting. It consists of Fingerprint Injection, which fine-tunes the model on any natural language data without detectable artifacts, and Verification, which leverages calibrated probability variation signal to distinguish fingerprinted models. This approach remains robust against adaptive adversaries, including input level modification, and model-level modifications. Extensive experiments across architectures demonstrate EverTracer{'}s state-of-the-art effectiveness, stealthness, and resilience, establishing it as a practical solution for securing LLM intellectual property."
}

@inproceedings{hammoud-etal-2024-model,
    title = "Model Merging and Safety Alignment: One Bad Model Spoils the Bunch",
    author = "Hammoud, Hasan Abed Al Kader  and
      Michieli, Umberto  and
      Pizzati, Fabio  and
      Torr, Philip  and
      Bibi, Adel  and
      Ghanem, Bernard  and
      Ozay, Mete",
    editor = "Al-Onaizan, Yaser  and
      Bansal, Mohit  and
      Chen, Yun-Nung",
    booktitle = "Findings of the Association for Computational Linguistics: EMNLP 2024",
    month = nov,
    year = "2024",
    address = "Miami, Florida, USA",
    publisher = "Association for Computational Linguistics",
    url = "https://aclanthology.org/2024.findings-emnlp.762",
    doi = "10.18653/v1/2024.findings-emnlp.762",
    pages = "13033--13046",
}

@inproceedings{rosati-etal-2024-immunization,
    title = "Immunization against harmful fine-tuning attacks",
    author = "Rosati, Domenic  and
      Wehner, Jan  and
      Williams, Kai  and
      Bartoszcze, Lukasz  and
      Sajjad, Hassan  and
      Rudzicz, Frank",
    editor = "Al-Onaizan, Yaser  and
      Bansal, Mohit  and
      Chen, Yun-Nung",
    booktitle = "Findings of the Association for Computational Linguistics: EMNLP 2024",
    month = nov,
    year = "2024",
    address = "Miami, Florida, USA",
    publisher = "Association for Computational Linguistics",
    url = "https://aclanthology.org/2024.findings-emnlp.301/",
    doi = "10.18653/v1/2024.findings-emnlp.301",
    pages = "5234--5247",
    abstract = "Large Language Models (LLMs) are often trained with safety guards intended to prevent harmful text generation. However, such safety training can be removed by fine-tuning the LLM on harmful datasets. While this emerging threat (harmful fine-tuning attacks) has been characterized by previous work, there is little understanding of how we should proceed in constructing and validating defenses against these attacks especially in the case where defenders would not have control of the fine-tuning process. We introduce a formal framework based on the training budget of an attacker which we call ``Immunization'' conditions. Using a formal characterisation of the harmful fine-tuning problem, we provide a thorough description of what a successful defense must comprise of and establish a set of guidelines on how rigorous defense research that gives us confidence should proceed."
}

@article{RepresentationSurgery_ICML_2024,
  title={Representation Surgery for Multi-Task Model Merging},
  author={Yang, Enneng and Shen, Li and Wang, Zhenyi and Guo, Guibing and Chen, Xiaojun and Wang, Xingwei and Tao, Dacheng},
  journal={Forty-first International Conference on Machine Learning},
  year={2024}
}

@inproceedings{
    xu2025scalable,
    title={Scalable Model Merging with Progressive Layer-wise Distillation},
    author={Jing Xu and Jiazheng Li and Jingzhao Zhang},
    booktitle={Forty-second International Conference on Machine Learning},
    year={2025},
    url={https://openreview.net/forum?id=xX8NJShgny}
}

@article{choi2024revisiting,
  title={Revisiting weight averaging for model merging},
  author={Choi, Jiho and Kim, Donggyun and Lee, Chanhyuk and Hong, Seunghoon},
  journal={arXiv preprint arXiv:2412.12153},
  year={2024}
}

@INPROCEEDINGS{11092448,
  author={Gargiulo, Antonio Andrea and Crisostomi, Donato and Bucarelli, Maria Sofia and Scardapane, Simone and Silvestri, Fabrizio and Rodolà, Emanuele},
  booktitle={2025 IEEE/CVF Conference on Computer Vision and Pattern Recognition (CVPR)}, 
  title={Task Singular Vectors: Reducing Task Interference in Model Merging}, 
  year={2025},
  volume={},
  number={},
  pages={18695-18705},
  keywords={Training;Analytical models;Accuracy;Merging;Buildings;Interference;Vectors;Matrix decomposition;Through-silicon vias;Tuning;model merging;parameter-efficient fine-tuning (peft);task vectors;singular value decomposition (svd);model compression;multi-task learning;deep learning;neural networks;computer vision},
  doi={10.1109/CVPR52734.2025.01742}
}

@inproceedings{
    jin2023dataless,
    title={Dataless Knowledge Fusion by Merging Weights of Language Models},
    author={Xisen Jin and Xiang Ren and Daniel Preotiuc-Pietro and Pengxiang Cheng},
    booktitle={The Eleventh International Conference on Learning Representations},
    year={2023},
    url={https://openreview.net/forum?id=FCnohuR6AnM}
}

@inproceedings{yu2024language,
  title={Language Models are Super Mario: Absorbing Abilities from Homologous Models as a Free Lunch},
  author={Yu, Le and Yu, Bowen and Yu, Haiyang and Huang, Fei and Li, Yongbin},
  booktitle={International Conference on Machine Learning},
  year={2024},
  organization={PMLR}
}

@article{stoica2025knots,
      title={Model Merging with SVD to Tie the Knots}, 
      author={Stoica, George and Ramesh, Pratik and Ecsedi, Boglarka and Choshen, Leshem and Hoffman, Judy},
      journal={ICLR},
      year={2025},
}

@inproceedings{panariello2025accurate,
  title     = {Accurate and Efficient Low-Rank Model Merging in Core Space},
  author    = {Panariello, Aniello and Marczak, Daniel and Magistri, Simone and Porrello, Angelo and Twardowski, Bart{\l}omiej and Bagdanov, Andrew D. and Calderara, Simone and van de Weijer, Joost},
  booktitle = {Advances in Neural Information Processing Systems (NeurIPS)},
  year      = {2025}
}

@misc{buzzega2025rethinkinglayerwisemodelmerging,
      title={Rethinking Layer-wise Model Merging through Chain of Merges}, 
      author={Pietro Buzzega and Riccardo Salami and Angelo Porrello and Simone Calderara},
      year={2025},
      eprint={2508.21421},
      archivePrefix={arXiv},
      primaryClass={cs.LG},
      url={https://arxiv.org/abs/2508.21421}, 
}

@inproceedings{ilharcoediting,
  title={Editing models with task arithmetic},
  author={Ilharco, Gabriel and Ribeiro, Marco Tulio and Wortsman, Mitchell and Schmidt, Ludwig and Hajishirzi, Hannaneh and Farhadi, Ali},
  booktitle={The Eleventh International Conference on Learning Representations},
  year={2023},
  url={https://openreview.net/forum?id=6t0Kwf8-jrj},
}

@inproceedings{
      yadav2023tiesmerging,
      title={{TIES}-Merging: Resolving Interference When Merging Models},
      author={Prateek Yadav and Derek Tam and Leshem Choshen and Colin Raffel and Mohit Bansal},
      booktitle={Thirty-seventh Conference on Neural Information Processing Systems},
      year={2023},
      url={https://openreview.net/forum?id=xtaX3WyCj1}
}

@inproceedings{
    sun2024improving,
    title={Improving Lo{RA} in Privacy-preserving Federated Learning},
    author={Youbang Sun and Zitao Li and Yaliang Li and Bolin Ding},
    booktitle={The Twelfth International Conference on Learning Representations},
    year={2024},
    url={https://openreview.net/forum?id=NLPzL6HWNl}
}

@inproceedings{
    bai2024federated,
    title={Federated Fine-tuning of Large Language Models under Heterogeneous Tasks and Client Resources},
    author={Jiamu Bai and Daoyuan Chen and Bingchen Qian and Liuyi Yao and Yaliang Li},
    booktitle={The Thirty-eighth Annual Conference on Neural Information Processing Systems},
    year={2024},
    url={https://openreview.net/forum?id=gkOzoHBXUw}
}

@inproceedings{koo-etal-2025-towards,
    title = "Towards Robust and Efficient Federated Low-Rank Adaptation with Heterogeneous Clients",
    author = "Koo, Jabin  and
      Jang, Minwoo  and
      Ok, Jungseul",
    editor = "Che, Wanxiang  and
      Nabende, Joyce  and
      Shutova, Ekaterina  and
      Pilehvar, Mohammad Taher",
    booktitle = "Proceedings of the 63rd Annual Meeting of the Association for Computational Linguistics (Volume 1: Long Papers)",
    month = jul,
    year = "2025",
    address = "Vienna, Austria",
    publisher = "Association for Computational Linguistics",
    url = "https://aclanthology.org/2025.acl-long.19/",
    doi = "10.18653/v1/2025.acl-long.19",
    pages = "416--429",
    ISBN = "979-8-89176-251-0",
    abstract = "Federated fine-tuning for Large Language Models (LLMs) has recently gained attention due to the heavy communication overhead of transmitting large model updates. Low Rank Adaptation (LoRA) has been proposed as a solution, yet its application in federated learning is complicated by discordance in aggregation. Existing methods addressing this discordance often suffer from performance degradation at low ranks in heterogeneous data settings. In response, we introduce LoRA-A{\textasciicircum}2 (Low Rank Adaptation with Alternating freeze and Adaptive rank selection), which demonstrates robustness in challenging settings with low ranks and high data heterogeneity. Our experimental findings reveal that LoRA-A{\textasciicircum}2 maintains performance even under extreme heterogeneity and low rank conditions, achieving up to a 99.8{\%} reduction in uploaded parameters compared to full fine-tuning without compromising performance. This adaptive mechanism boosts robustness and communication efficiency in federated fine-tuning, enabling the practical deployment of LLMs in resource-constrained environments."
}

@inproceedings{
    chen2025robust,
    title={Robust Federated Finetuning of {LLM}s via Alternating Optimization of Lo{RA}},
    author={Shuangyi Chen and Yuanxin Guo and Yue Ju and Hardik Dalal and Zhongwen Zhu and Ashish J Khisti},
    booktitle={The Thirty-ninth Annual Conference on Neural Information Processing Systems},
    year={2025},
    url={https://openreview.net/forum?id=e8DrPuJekZ}
}

@article{grattafiori2024llama,
  title={The llama 3 herd of models},
  author={Grattafiori, Aaron and Dubey, Abhimanyu and Jauhri, Abhinav and Pandey, Abhinav and Kadian, Abhishek and Al-Dahle, Ahmad and Letman, Aiesha and Mathur, Akhil and Schelten, Alan and Vaughan, Alex and others},
  journal={arXiv preprint arXiv:2407.21783},
  year={2024}
}

@InProceedings{pmlr-v139-radford21a,
  title = 	 {Learning Transferable Visual Models From Natural Language Supervision},
  author =       {Radford, Alec and Kim, Jong Wook and Hallacy, Chris and Ramesh, Aditya and Goh, Gabriel and Agarwal, Sandhini and Sastry, Girish and Askell, Amanda and Mishkin, Pamela and Clark, Jack and Krueger, Gretchen and Sutskever, Ilya},
  booktitle = 	 {Proceedings of the 38th International Conference on Machine Learning},
  pages = 	 {8748--8763},
  year = 	 {2021},
  editor = 	 {Meila, Marina and Zhang, Tong},
  volume = 	 {139},
  series = 	 {Proceedings of Machine Learning Research},
  month = 	 {18--24 Jul},
  publisher =    {PMLR},
  pdf = 	 {http://proceedings.mlr.press/v139/radford21a/radford21a.pdf},
  url = 	 {https://proceedings.mlr.press/v139/radford21a.html},
  abstract = 	 {State-of-the-art computer vision systems are trained to predict a fixed set of predetermined object categories. This restricted form of supervision limits their generality and usability since additional labeled data is needed to specify any other visual concept. Learning directly from raw text about images is a promising alternative which leverages a much broader source of supervision. We demonstrate that the simple pre-training task of predicting which caption goes with which image is an efficient and scalable way to learn SOTA image representations from scratch on a dataset of 400 million (image, text) pairs collected from the internet. After pre-training, natural language is used to reference learned visual concepts (or describe new ones) enabling zero-shot transfer of the model to downstream tasks. We study the performance of this approach by benchmarking on over 30 different existing computer vision datasets, spanning tasks such as OCR, action recognition in videos, geo-localization, and many types of fine-grained object classification. The model transfers non-trivially to most tasks and is often competitive with a fully supervised baseline without the need for any dataset specific training. For instance, we match the accuracy of the original ResNet-50 on ImageNet zero-shot without needing to use any of the 1.28 million training examples it was trained on.}
}

@inproceedings{NIPS2017_3f5ee243,
	author = {Vaswani, Ashish and Shazeer, Noam and Parmar, Niki and Uszkoreit, Jakob and Jones, Llion and Gomez, Aidan N and Kaiser, \L ukasz and Polosukhin, Illia},
	booktitle = {Advances in Neural Information Processing Systems},
	editor = {I. Guyon and U. Von Luxburg and S. Bengio and H. Wallach and R. Fergus and S. Vishwanathan and R. Garnett},
	publisher = {Curran Associates, Inc.},
	title = {Attention is All you Need},
	url = {https://proceedings.neurips.cc/paper_files/paper/2017/file/3f5ee243547dee91fbd053c1c4a845aa-Paper.pdf},
	volume = {30},
	year = {2017},
	bdsk-url-1 = {https://proceedings.neurips.cc/paper_files/paper/2017/file/3f5ee243547dee91fbd053c1c4a845aa-Paper.pdf}}

@INPROCEEDINGS {7780459,
    author = { He, Kaiming and Zhang, Xiangyu and Ren, Shaoqing and Sun, Jian },
    booktitle = { 2016 IEEE Conference on Computer Vision and Pattern Recognition (CVPR) },
    title = {{ Deep Residual Learning for Image Recognition }},
    year = {2016},
    volume = {},
    ISSN = {1063-6919},
    pages = {770-778},
    abstract = { Deeper neural networks are more difficult to train. We present a residual learning framework to ease the training of networks that are substantially deeper than those used previously. We explicitly reformulate the layers as learning residual functions with reference to the layer inputs, instead of learning unreferenced functions. We provide comprehensive empirical evidence showing that these residual networks are easier to optimize, and can gain accuracy from considerably increased depth. On the ImageNet dataset we evaluate residual nets with a depth of up to 152 layers—8× deeper than VGG nets [40] but still having lower complexity. An ensemble of these residual nets achieves 3.57% error on the ImageNet test set. This result won the 1st place on the ILSVRC 2015 classification task. We also present analysis on CIFAR-10 with 100 and 1000 layers. The depth of representations is of central importance for many visual recognition tasks. Solely due to our extremely deep representations, we obtain a 28% relative improvement on the COCO object detection dataset. Deep residual nets are foundations of our submissions to ILSVRC & COCO 2015 competitions1, where we also won the 1st places on the tasks of ImageNet detection, ImageNet localization, COCO detection, and COCO segmentation. },
    keywords = {Training;Degradation;Complexity theory;Image recognition;Neural networks;Visualization;Image segmentation},
    doi = {10.1109/CVPR.2016.90},
    url = {https://doi.ieeecomputersociety.org/10.1109/CVPR.2016.90},
    publisher = {IEEE Computer Society},
    address = {Los Alamitos, CA, USA},
    month = Jun
}

@Article{liu2022convnet,
  author  = {Zhuang Liu and Hanzi Mao and Chao-Yuan Wu and Christoph Feichtenhofer and Trevor Darrell and Saining Xie},
  title   = {A ConvNet for the 2020s},
  journal = {Proceedings of the IEEE/CVF Conference on Computer Vision and Pattern Recognition (CVPR)},
  year    = {2022},
}

@INPROCEEDINGS{6755945,
  author={Krause, Jonathan and Stark, Michael and Deng, Jia and Fei-Fei, Li},
  booktitle={2013 IEEE International Conference on Computer Vision Workshops}, 
  title={3D Object Representations for Fine-Grained Categorization}, 
  year={2013},
  volume={},
  number={},
  pages={554-561},
  keywords={Three-dimensional displays;Geometry;Solid modeling;Design automation;Training data;Training;Feature extraction},
  doi={10.1109/ICCVW.2013.77}}

@INPROCEEDINGS{6909856,
  author={Cimpoi, Mircea and Maji, Subhransu and Kokkinos, Iasonas and Mohamed, Sammy and Vedaldi, Andrea},
  booktitle={2014 IEEE Conference on Computer Vision and Pattern Recognition}, 
  title={Describing Textures in the Wild}, 
  year={2014},
  volume={},
  number={},
  pages={3606-3613},
  keywords={Materials;Vectors;Visualization;Image color analysis;Object recognition;Vocabulary;Internet;texture;recognition;attribute;material;Fisher Vector;convolutional neural network},
  doi={10.1109/CVPR.2014.461}}

@INPROCEEDINGS{8519248,
  author={Helber, Patrick and Bischke, Benjamin and Dengel, Andreas and Borth, Damian},
  booktitle={IGARSS 2018 - 2018 IEEE International Geoscience and Remote Sensing Symposium}, 
  title={Introducing Eurosat: A Novel Dataset and Deep Learning Benchmark for Land Use and Land Cover Classification}, 
  year={2018},
  volume={},
  number={},
  pages={204-207},
  keywords={Satellites;Earth;Agriculture;Remote sensing;Rivers;Road transportation;Image color analysis;Deep Learning;Machine Learning;Convolutional Neural Network;Land Use Classification;Land Cover Classification;Earth Observation;Dataset},
  doi={10.1109/IGARSS.2018.8519248}}

@article{STALLKAMP2012323,
	abstract = {Traffic signs are characterized by a wide variability in their visual appearance in real-world environments. For example, changes of illumination, varying weather conditions and partial occlusions impact the perception of road signs. In practice, a large number of different sign classes needs to be recognized with very high accuracy. Traffic signs have been designed to be easily readable for humans, who perform very well at this task. For computer systems, however, classifying traffic signs still seems to pose a challenging pattern recognition problem. Both image processing and machine learning algorithms are continuously refined to improve on this task. But little systematic comparison of such systems exist. What is the status quo? Do today's algorithms reach human performance? For assessing the performance of state-of-the-art machine learning algorithms, we present a publicly available traffic sign dataset with more than 50,000 images of German road signs in 43 classes. The data was considered in the second stage of the German Traffic Sign Recognition Benchmark held at IJCNN 2011. The results of this competition are reported and the best-performing algorithms are briefly described. Convolutional neural networks (CNNs) showed particularly high classification accuracies in the competition. We measured the performance of human subjects on the same data---and the CNNs outperformed the human test persons.},
	author = {J. Stallkamp and M. Schlipsing and J. Salmen and C. Igel},
	doi = {https://doi.org/10.1016/j.neunet.2012.02.016},
	issn = {0893-6080},
	journal = {Neural Networks},
	keywords = {Traffic sign recognition, Machine learning, Convolutional neural networks, Benchmarking},
	note = {Selected Papers from IJCNN 2011},
	pages = {323-332},
	title = {Man vs. computer: Benchmarking machine learning algorithms for traffic sign recognition},
	url = {https://www.sciencedirect.com/science/article/pii/S0893608012000457},
	volume = {32},
	year = {2012},
	bdsk-url-1 = {https://www.sciencedirect.com/science/article/pii/S0893608012000457},
	bdsk-url-2 = {https://doi.org/10.1016/j.neunet.2012.02.016}}

@ARTICLE{7891544,
  author={Cheng, Gong and Han, Junwei and Lu, Xiaoqiang},
  journal={Proceedings of the IEEE}, 
  title={Remote Sensing Image Scene Classification: Benchmark and State of the Art}, 
  year={2017},
  volume={105},
  number={10},
  pages={1865-1883},
  keywords={Remote sensing;Benchmark testing;Spatial resolution;Social network services;Satellites;Image analysis;Machine learning;Unsupervised learning;Classification;Benchmark data set;deep learning;handcrafted features;remote sensing image;scene classification;unsupervised feature learning},
  doi={10.1109/JPROC.2017.2675998}}

@techreport{maji13fine-grained,
   title         = {Fine-Grained Visual Classification of Aircraft},
   author        = {S. Maji and J. Kannala and E. Rahtu
                    and M. Blaschko and A. Vedaldi},
   year          = {2013},
   archivePrefix = {arXiv},
   eprint        = {1306.5151},
   primaryClass  = "cs-cv",
}

@inproceedings{37648,
    title	= {Reading Digits in Natural Images with Unsupervised Feature Learning},
    author	= {Yuval Netzer and Tao Wang and Adam Coates and Alessandro Bissacco and Bo Wu and Andrew Y. Ng},
    year	= {2011},
    URL	= {http://ufldl.stanford.edu/housenumbers/nips2011_housenumbers.pdf},
    booktitle	= {NIPS Workshop on Deep Learning and Unsupervised Feature Learning 2011}
}

@ARTICLE{6296535,
  author={Deng, Li},
  journal={IEEE Signal Processing Magazine}, 
  title={The MNIST Database of Handwritten Digit Images for Machine Learning Research [Best of the Web]}, 
  year={2012},
  volume={29},
  number={6},
  pages={141-142},
  keywords={Machine learning},
  doi={10.1109/MSP.2012.2211477}}

@misc{cobbe2021trainingverifierssolvemath,
      title={Training Verifiers to Solve Math Word Problems}, 
      author={Karl Cobbe and Vineet Kosaraju and Mohammad Bavarian and Mark Chen and Heewoo Jun and Lukasz Kaiser and Matthias Plappert and Jerry Tworek and Jacob Hilton and Reiichiro Nakano and Christopher Hesse and John Schulman},
      year={2021},
      eprint={2110.14168},
      archivePrefix={arXiv},
      primaryClass={cs.LG},
      url={https://arxiv.org/abs/2110.14168}, 
}

@inproceedings{miao-etal-2020-diverse,
    title = "A Diverse Corpus for Evaluating and Developing {E}nglish Math Word Problem Solvers",
    author = "Miao, Shen-yun  and
      Liang, Chao-Chun  and
      Su, Keh-Yih",
    editor = "Jurafsky, Dan  and
      Chai, Joyce  and
      Schluter, Natalie  and
      Tetreault, Joel",
    booktitle = "Proceedings of the 58th Annual Meeting of the Association for Computational Linguistics",
    month = jul,
    year = "2020",
    address = "Online",
    publisher = "Association for Computational Linguistics",
    url = "https://aclanthology.org/2020.acl-main.92/",
    doi = "10.18653/v1/2020.acl-main.92",
    pages = "975--984",
    abstract = "We present ASDiv (Academia Sinica Diverse MWP Dataset), a diverse (in terms of both language patterns and problem types) English math word problem (MWP) corpus for evaluating the capability of various MWP solvers. Existing MWP corpora for studying AI progress remain limited either in language usage patterns or in problem types. We thus present a new English MWP corpus with 2,305 MWPs that cover more text patterns and most problem types taught in elementary school. Each MWP is annotated with its problem type and grade level (for indicating the level of difficulty). Furthermore, we propose a metric to measure the lexicon usage diversity of a given MWP corpus, and demonstrate that ASDiv is more diverse than existing corpora. Experiments show that our proposed corpus reflects the true capability of MWP solvers more faithfully."
}

@software{torchvision2016,
    title        = {TorchVision: PyTorch's Computer Vision library},
    author       = {TorchVision maintainers and contributors},
    year         = 2016,
    journal      = {GitHub repository},
    publisher    = {GitHub},
    howpublished = {\url{https://github.com/pytorch/vision}}
}

@inproceedings{lhoest-etal-2021-datasets,
    title = "Datasets: A Community Library for Natural Language Processing",
    author = "Lhoest, Quentin  and
      Villanova del Moral, Albert  and
      Jernite, Yacine  and
      Thakur, Abhishek  and
      von Platen, Patrick  and
      Patil, Suraj  and
      Chaumond, Julien  and
      Drame, Mariama  and
      Plu, Julien  and
      Tunstall, Lewis  and
      Davison, Joe  and
      {\v{S}}a{\v{s}}ko, Mario  and
      Chhablani, Gunjan  and
      Malik, Bhavitvya  and
      Brandeis, Simon  and
      Le Scao, Teven  and
      Sanh, Victor  and
      Xu, Canwen  and
      Patry, Nicolas  and
      McMillan-Major, Angelina  and
      Schmid, Philipp  and
      Gugger, Sylvain  and
      Delangue, Cl{\'e}ment  and
      Matussi{\`e}re, Th{\'e}o  and
      Debut, Lysandre  and
      Bekman, Stas  and
      Cistac, Pierric  and
      Goehringer, Thibault  and
      Mustar, Victor  and
      Lagunas, Fran{\c{c}}ois  and
      Rush, Alexander  and
      Wolf, Thomas",
    editor = "Adel, Heike  and
      Shi, Shuming",
    booktitle = "Proceedings of the 2021 Conference on Empirical Methods in Natural Language Processing: System Demonstrations",
    month = nov,
    year = "2021",
    address = "Online and Punta Cana, Dominican Republic",
    publisher = "Association for Computational Linguistics",
    url = "https://aclanthology.org/2021.emnlp-demo.21/",
    doi = "10.18653/v1/2021.emnlp-demo.21",
    pages = "175--184",
    abstract = "The scale, variety, and quantity of publicly-available NLP datasets has grown rapidly as researchers propose new tasks, larger models, and novel benchmarks. Datasets is a community library for contemporary NLP designed to support this ecosystem. Datasets aims to standardize end-user interfaces, versioning, and documentation, while providing a lightweight front-end that behaves similarly for small datasets as for internet-scale corpora. The design of the library incorporates a distributed, community-driven approach to adding datasets and documenting usage. After a year of development, the library now includes more than 650 unique datasets, has more than 250 contributors, and has helped support a variety of novel cross-dataset research projects and shared tasks. The library is available at \url{https://github.com/huggingface/datasets}."
}

@inproceedings{wolf-etal-2020-transformers,
    title = "Transformers: State-of-the-Art Natural Language Processing",
    author = "Thomas Wolf and Lysandre Debut and Victor Sanh and Julien Chaumond and Clement Delangue and Anthony Moi and Pierric Cistac and Tim Rault and Rémi Louf and Morgan Funtowicz and Joe Davison and Sam Shleifer and Patrick von Platen and Clara Ma and Yacine Jernite and Julien Plu and Canwen Xu and Teven Le Scao and Sylvain Gugger and Mariama Drame and Quentin Lhoest and Alexander M. Rush",
    booktitle = "Proceedings of the 2020 Conference on Empirical Methods in Natural Language Processing: System Demonstrations",
    month = oct,
    year = "2020",
    address = "Online",
    publisher = "Association for Computational Linguistics",
    url = "https://www.aclweb.org/anthology/2020.emnlp-demos.6",
    pages = "38--45"
}

@inproceedings{
    schuhmann2022laionb,
    title={{LAION}-5B: An open large-scale dataset for training next generation image-text models},
    author={Christoph Schuhmann and Romain Beaumont and Richard Vencu and Cade W Gordon and Ross Wightman and Mehdi Cherti and Theo Coombes and Aarush Katta and Clayton Mullis and Mitchell Wortsman and Patrick Schramowski and Srivatsa R Kundurthy and Katherine Crowson and Ludwig Schmidt and Robert Kaczmarczyk and Jenia Jitsev},
    booktitle={Thirty-sixth Conference on Neural Information Processing Systems Datasets and Benchmarks Track},
    year={2022},
    url={https://openreview.net/forum?id=M3Y74vmsMcY}
}

@software{ilharco_gabriel_2021_5143773,
  author       = {Ilharco, Gabriel and
                  Wortsman, Mitchell and
                  Wightman, Ross and
                  Gordon, Cade and
                  Carlini, Nicholas and
                  Taori, Rohan and
                  Dave, Achal and
                  Shankar, Vaishaal and
                  Namkoong, Hongseok and
                  Miller, John and
                  Hajishirzi, Hannaneh and
                  Farhadi, Ali and
                  Schmidt, Ludwig},
  title        = {{OpenCLIP}},
  month        = jul,
  year         = 2021,
  publisher    = {Zenodo},
  version      = {0.1},
  doi          = {10.5281/zenodo.5143773},
  url          = {https://doi.org/10.5281/zenodo.5143773}
}

@Misc{peft,
  title =        {{PEFT}: State-of-the-art Parameter-Efficient Fine-Tuning methods},
  author =       {Sourab Mangrulkar and Sylvain Gugger and Lysandre Debut and Younes Belkada and Sayak Paul and Benjamin Bossan and Marian Tietz},
  howpublished = {\url{https://github.com/huggingface/peft}},
  year =         {2022}
}

@inproceedings{
    loshchilov2018decoupled,
    title={Decoupled Weight Decay Regularization},
    author={Ilya Loshchilov and Frank Hutter},
    booktitle={International Conference on Learning Representations},
    year={2019},
    url={https://openreview.net/forum?id=Bkg6RiCqY7},
}

@inproceedings{NEURIPS2022_8bb0d291,
	author = {Kojima, Takeshi and Gu, Shixiang (Shane) and Reid, Machel and Matsuo, Yutaka and Iwasawa, Yusuke},
	booktitle = {Advances in Neural Information Processing Systems},
	editor = {S. Koyejo and S. Mohamed and A. Agarwal and D. Belgrave and K. Cho and A. Oh},
	pages = {22199--22213},
	publisher = {Curran Associates, Inc.},
	title = {Large Language Models are Zero-Shot Reasoners},
	volume = {35},
	year = {2022}}
\bibliographystyle{icml2026}


\newpage
\appendix
\onecolumn

\section{Theoretical Analysis of \textsc{Trap$^2$}}
\label{sec:app_a}
\begin{theorem}[Stationarity of \textsc{Trap$^2$} under SGD]
\label{thm:sgd_stationarity}
Let $J(\Delta W)$ be the \textnormal{\textsc{Trap$^2$}} objective defined in Eq.~\eqref{eq:trap2_obj}, with a nonnegative weight $w(s)$. Assume $J$ is $L$-smooth and bounded below by $J_{\inf}>-\infty$. Assume the scale distribution $\mathcal{S}$ is supported on two off-nominal intervals: for $0 < s_{\text{min}} < 1 - \delta < 1 + \delta < s_{\text{max}}$,
\[
\mathrm{supp}(\mathcal{S})\subseteq [s_{\text{min}}, 1 - \delta ]\ \cup\ [1 + \delta, s_{\text{max}}].
\]
Consider SGD with step size $\eta\in(0,1/L]$:
\[
\Delta W_{t+1}=\Delta W_t-\eta\,G_t.
\]
Let $\{(z_t,s_t)\}_{t\ge 0}$ be i.i.d.\ and define the filtration $\mathcal{F}_t:=\sigma(\Delta W_0,(z_0,s_0),\dots,(z_{t-1},s_{t-1}))$. Assume the stochastic gradient satisfies, almost surely,
\[
\mathbb{E}[G_t\mid \mathcal{F}_t] = \nabla J(\Delta W_t), \qquad
\mathbb{E}\!\left[\|G_t - \nabla J(\Delta W_t)\|_F^2 \mid \mathcal{F}_t \right] \leq \sigma_w^2 .
\]
Then for any $T\geq 1$,
\[
\min_{0\le t\le T-1}\mathbb{E}\|\nabla J(\Delta W_t)\|_F^2
\leq
\frac{2(\mathbb{E}[J(\Delta W_0)]-J_{\inf})}{\eta T}
+ L\eta\sigma_w^2.
\]
In particular, with $\eta=\Theta(1/\sqrt{T})$, we obtain
\[
\min_{0\le t\le T-1}\mathbb{E}\|\nabla J(\Delta W_t)\|_F^2
=O(1/\sqrt{T}).
\]
Here, $\sigma_w^2$ may depend on $w$ and $\mathrm{supp}(\mathcal{S})$ (e.g., for $w(s)=1/s$, it depends on $s_{\text{min}}>0$).

\end{theorem}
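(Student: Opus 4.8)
The plan is to apply the standard descent-lemma analysis for stochastic gradient descent on an $L$-smooth, bounded-below nonconvex objective, since the stated hypotheses (unbiasedness, bounded variance, $L$-smoothness, and $\eta \le 1/L$) are precisely the textbook ingredients for a $O(1/\sqrt{T})$ stationarity guarantee. First I would invoke $L$-smoothness to write the one-step descent inequality $J(\Delta W_{t+1}) \le J(\Delta W_t) + \langle \nabla J(\Delta W_t),\, \Delta W_{t+1}-\Delta W_t\rangle + \tfrac{L}{2}\|\Delta W_{t+1}-\Delta W_t\|_F^2$, and then substitute the update rule $\Delta W_{t+1}-\Delta W_t = -\eta\,G_t$ to express everything in terms of $\langle \nabla J(\Delta W_t), G_t\rangle$ and $\|G_t\|_F^2$.

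Next I would take the conditional expectation $\mathbb{E}[\cdot\mid\mathcal{F}_t]$. Unbiasedness converts the inner-product term into $-\eta\|\nabla J(\Delta W_t)\|_F^2$, while the bias–variance decomposition $\mathbb{E}[\|G_t\|_F^2\mid\mathcal{F}_t] = \|\nabla J(\Delta W_t)\|_F^2 + \mathbb{E}[\|G_t-\nabla J(\Delta W_t)\|_F^2\mid\mathcal{F}_t]$ together with the variance bound $\sigma_w^2$ controls the quadratic term. Collecting terms gives $\mathbb{E}[J(\Delta W_{t+1})\mid\mathcal{F}_t] \le J(\Delta W_t) - \eta\bigl(1-\tfrac{L\eta}{2}\bigr)\|\nabla J(\Delta W_t)\|_F^2 + \tfrac{L\eta^2\sigma_w^2}{2}$, and the restriction $\eta\le 1/L$ ensures $1-\tfrac{L\eta}{2}\ge\tfrac12$, so the coefficient on the gradient norm is at least $\eta/2$. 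The final step is to take total expectations, telescope the sum over $t=0,\dots,T-1$, and use $J(\Delta W_T)\ge J_{\inf}$ to bound the boundary term; rearranging yields $\tfrac{\eta}{2}\sum_{t=0}^{T-1}\mathbb{E}\|\nabla J(\Delta W_t)\|_F^2 \le \mathbb{E}[J(\Delta W_0)]-J_{\inf}+\tfrac{TL\eta^2\sigma_w^2}{2}$. Dividing by $\eta T/2$ and bounding the minimum by the average gives the claimed inequality, and the choice $\eta=\Theta(1/\sqrt{T})$ balances the two terms to $O(1/\sqrt{T})$.

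I do not expect any genuine mathematical obstacle in the SGD recursion itself, which is entirely routine. The one place that actually uses structure specific to \textsc{Trap$^2$} is the bounded-variance hypothesis $\sigma_w^2$, so this is the step I would treat most carefully. The single-sample estimator draws both a data point and a scale $s\sim\mathcal{S}$, and by the chain rule $\nabla_{\Delta W}\mathcal{L}_{\text{scaled}}(\Delta W;s) = s\,(\nabla\mathcal{L})(W_0+s\,\Delta W)$; the choice $w(s)=1/s$ exactly cancels this $s$ factor, which explains why it is natural and why down-scaled updates do not produce vanishing signal. For the variance to be finite, I would note that $w$ must be bounded on the support, and the two-interval support bounded away from zero is what guarantees this: $\sup_{s\in\mathrm{supp}(\mathcal{S})} w(s)=1/s_{\text{min}}$ is finite precisely because $s_{\text{min}}>0$, so $\sigma_w^2$ inherits its dependence on $s_{\text{min}}$ from this supremum (and the compactness of $[s_{\text{min}},s_{\text{max}}]$ keeps the evaluation region bounded). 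This is really a modeling remark rather than a step in the descent argument, but it is the only non-boilerplate part of the proof.
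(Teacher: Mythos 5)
Your proposal is correct and follows essentially the same route as the paper's proof: the descent lemma, substitution of the update rule, conditional expectation with unbiasedness, the bias--variance decomposition to bound $\mathbb{E}[\|G_t\|_F^2\mid\mathcal{F}_t]$, the $\eta\le 1/L$ simplification, telescoping with $J(\Delta W_T)\ge J_{\inf}$, and bounding the minimum by the average. Your closing remark on why $w(s)=1/s$ together with $s_{\text{min}}>0$ keeps $\sigma_w^2$ finite is a nice justification of the variance hypothesis that the paper only gestures at in the theorem statement, but it does not change the argument.
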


\begin{proof}
By $L$-smoothness of $J$, for any $t$,
\[
\begin{aligned}
J(\Delta W_{t+1})
&\leq
J(\Delta W_t)
+\left\langle \nabla J(\Delta W_t),
\Delta W_{t+1}-\Delta W_t \right\rangle + \frac{L}{2}\|\Delta W_{t+1}-\Delta W_t\|_F^2 .
\end{aligned}
\]
Using $\Delta W_{t+1}-\Delta W_t=-\eta G_t$ yields
\[
\begin{aligned}
J(\Delta W_{t+1})
\le\;&
J(\Delta W_t)
-\eta\left\langle \nabla J(\Delta W_t), G_t \right\rangle +\frac{L\eta^2}{2}\|G_t\|_F^2 .
\end{aligned}
\]
Taking conditional expectation given $\mathcal{F}_t$ and using
$\mathbb{E}[G_t\mid\mathcal{F}_t]=\nabla J(\Delta W_t)$, we have
\[
\mathbb{E}\!\left[\left\langle \nabla J(\Delta W_t),G_t\right\rangle\mid\mathcal{F}_t\right]
=
\left\langle \nabla J(\Delta W_t), \mathbb{E}[G_t\mid\mathcal{F}_t]\right\rangle 
=
\|\nabla J(\Delta W_t)\|_F^2 .
\]
Moreover, by the conditional bias--variance decomposition,
\[
\begin{aligned}
\mathbb{E}[\|G_t\|_F^2\mid\mathcal{F}_t]
=\;&
\|\mathbb{E}[G_t\mid\mathcal{F}_t]\|_F^2
+\mathbb{E}\!\left[\|G_t-\mathbb{E}[G_t\mid\mathcal{F}_t]\|_F^2\mid\mathcal{F}_t\right] \\
=\;&
\|\nabla J(\Delta W_t)\|_F^2
+\mathbb{E}\!\left[\|G_t-\nabla J(\Delta W_t)\|_F^2\mid\mathcal{F}_t\right] \\
\le\;&
\|\nabla J(\Delta W_t)\|_F^2+\sigma_w^2 .
\end{aligned}
\]
Substituting these bounds gives
\[
\begin{aligned}
\mathbb{E}[J(\Delta W_{t+1})\mid\mathcal{F}_t]
\le\;&
J(\Delta W_t)
-\eta\left(1-\frac{L\eta}{2}\right)\|\nabla J(\Delta W_t)\|_F^2 + \frac{L\eta^2}{2}\sigma_w^2 .
\end{aligned}
\]
Since $\eta\le 1/L$, we have $1-\frac{L\eta}{2}\ge \frac12$, so that
\[
\begin{aligned}
\mathbb{E}[J(\Delta W_{t+1})\mid\mathcal{F}_t]
\le\;&
J(\Delta W_t)
-\frac{\eta}{2}\|\nabla J(\Delta W_t)\|_F^2 + \frac{L\eta^2}{2}\sigma_w^2 .
\end{aligned}
\]
Taking total expectation and summing over $t=0,\dots,T-1$ yields
\[
\begin{aligned}
\frac{\eta}{2}\sum_{t=0}^{T-1}\mathbb{E}\|\nabla J(\Delta W_t)\|_F^2
\le\;&
\mathbb{E}[J(\Delta W_0)]-\mathbb{E}[J(\Delta W_T)] + \frac{L\eta^2}{2}\sigma_w^2\,T .
\end{aligned}
\]
Using $J(\Delta W_T)\ge J_{\inf}$ and dividing by $\eta T$ gives
\[
\begin{aligned}
\frac{1}{T}\sum_{t=0}^{T-1}\mathbb{E}\|\nabla J(\Delta W_t)\|_F^2
\le\;&
\frac{2(\mathbb{E}[J(\Delta W_0)]-J_{\inf})}{\eta T}
+L\eta\sigma_w^2 .
\end{aligned}
\]
Finally, $\min_{0\le t\le T-1} a_t \le \frac{1}{T}\sum_{t=0}^{T-1} a_t$ for $a_t\ge 0$
implies the claim.
\end{proof}

\begin{theorem}[Self-degradation under Down-scaling]
\label{thm:general_downscale}
Let $\Delta W$ be an adapter trained on a data distribution $\mathcal{D}$, and
recall $\mathcal{L}_{\text{scaled}}(\Delta W;s)$ from Eq.~\eqref{eq:loss_function_scaled}.
Fix any $s\in(0,1]$.
Assume that $\mathcal{L}_{\text{scaled}}(\Delta W;\cdot)$ is twice differentiable on $[s,1]$ and that
there exist constants $\mu\in\mathbb{R}_{\ge 0}$ and $\varepsilon\in\mathbb{R}_{\ge 0}$ such that
\[
\frac{\partial^2}{\partial u^2}\mathcal{L}_{\text{scaled}}(\Delta W; u) \ge \mu
\quad \text{for all } u\in[s,1],
\qquad\text{and}\qquad
\left|\frac{\partial}{\partial u}\mathcal{L}_{\text{scaled}}(\Delta W; 1)\right| \le \varepsilon.
\]
Then,
\[
\mathcal{L}_{\text{scaled}}(\Delta W; s)-\mathcal{L}_{\text{scaled}}(\Delta W;1)
\ge
\frac{\mu}{2}(1-s)^2-\varepsilon(1-s).
\]
In particular, if $\frac{\mu}{2}(1-s)>\varepsilon$, then
$\mathcal{L}_{\text{scaled}}(\Delta W; s)>\mathcal{L}_{\text{scaled}}(\Delta W;1)$.
\end{theorem}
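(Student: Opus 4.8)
The plan is to reduce the statement to a one-dimensional Taylor expansion of the scaled loss viewed as a function of the scale $u$ alone. Define $\phi(u) := \mathcal{L}_{\text{scaled}}(\Delta W; u)$ on $[s,1]$, holding the adapter $\Delta W$ fixed; since the hypotheses are stated directly on $\mathcal{L}_{\text{scaled}}(\Delta W;\cdot)$, no interchange of expectation and differentiation is needed. In this notation the assumptions read $\phi''(u) \ge \mu$ for all $u \in [s,1]$ and $|\phi'(1)| \le \varepsilon$, and the claim becomes $\phi(s) - \phi(1) \ge \tfrac{\mu}{2}(1-s)^2 - \varepsilon(1-s)$. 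The key idea is to expand $\phi$ around the \emph{nominal} point $u=1$, precisely because that is where we control the first derivative, and then let the curvature lower bound $\mu$ supply the quadratic gain.

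First I would apply Taylor's theorem with Lagrange remainder to $\phi$ at the base point $u=1$, evaluated at $u=s$: since $\phi$ is twice differentiable on $[s,1]$, there exists $\xi \in (s,1)$ with
\[
\phi(s) = \phi(1) + \phi'(1)(s-1) + \tfrac{1}{2}\phi''(\xi)(s-1)^2 .
\]
Rearranging gives the exact identity $\phi(s) - \phi(1) = \phi'(1)(s-1) + \tfrac{1}{2}\phi''(\xi)(1-s)^2$, where I have used $(s-1)^2 = (1-s)^2$.

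The remainder of the argument is pure sign bookkeeping, which is also the only place where a little care is required. Because $s \le 1$ we have $1-s \ge 0$, so the curvature term is bounded below via $\phi''(\xi) \ge \mu$ as $\tfrac{1}{2}\phi''(\xi)(1-s)^2 \ge \tfrac{\mu}{2}(1-s)^2$. For the first-order term, writing $\phi'(1)(s-1) = -\phi'(1)(1-s)$ and invoking $|\phi'(1)| \le \varepsilon$ yields $-\phi'(1)(1-s) \ge -|\phi'(1)|(1-s) \ge -\varepsilon(1-s)$. Adding the two bounds gives $\phi(s) - \phi(1) \ge \tfrac{\mu}{2}(1-s)^2 - \varepsilon(1-s)$, the stated inequality. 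For the final sufficient condition I would factor the right-hand side as $(1-s)\big(\tfrac{\mu}{2}(1-s) - \varepsilon\big)$; when $s < 1$ the factor $1-s$ is strictly positive, and the hypothesis $\tfrac{\mu}{2}(1-s) > \varepsilon$ makes the second factor strictly positive, so $\phi(s) > \phi(1)$. There is no substantive obstacle: an alternative route through the integral form of the remainder, bounding $\phi'(u) \le \phi'(1) - \mu(1-u)$ for $u \in [s,1]$ and integrating over $[s,1]$, produces the identical estimate and could be substituted if one prefers not to name the intermediate point $\xi$.
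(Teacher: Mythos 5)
Your proof is correct and takes essentially the same approach as the paper: both expand $\mathcal{L}_{\text{scaled}}(\Delta W;\cdot)$ to second order around the nominal scale $u=1$, bound the linear term by $\varepsilon$ and the curvature term by $\mu$, and your Lagrange mean-value remainder is just a repackaging of the paper's integral-form remainder (two applications of the fundamental theorem of calculus), which is exactly the alternative you mention at the end. The only cosmetic omission is the degenerate case $s=1$, which the paper disposes of separately and which your final inequality covers trivially as $0 \ge 0$.
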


\begin{proof}
The case $s = 1$ is trivial, since both sides reduce to zero. Hence, without loss of generality, we assume that $s < 1$.
Let $g(u):=\mathcal{L}_{\text{scaled}}(\Delta W; u)$.
By the fundamental theorem of calculus,
\[
g(s)-g(1)=\int_{1}^{s} g'(r)\,dr.
\]
For any $r\in[s,1]$, again by the fundamental theorem of calculus,
\[
g'(r)-g'(1)=\int_{1}^{r} g''(v)\,dv
= -\int_{r}^{1} g''(v)\,dv
\le -\mu(1-r)=\mu(r-1),
\]
where we used $g''(v)\ge\mu$ on $[s,1]$.
Hence $g'(r)\le g'(1)+\mu(r-1)$.
Since $s<1$, the integral is over a reversed interval and the inequality direction flips, giving
\[
\int_{1}^{s} g'(r)\,dr \ge \int_{1}^{s}\big(g'(1)+\mu(r-1)\big)\,dr.
\]
Evaluating the right-hand side yields
\[
g(s)-g(1)\ge g'(1)(s-1)+\frac{\mu}{2}(s-1)^2.
\]
Finally, using $|g'(1)|\le\varepsilon$ and $s-1=-(1-s)$ gives
\[
g(s)-g(1)\ge -\varepsilon(1-s)+\frac{\mu}{2}(1-s)^2,
\]
as desired. The sufficient condition $\frac{\mu}{2}(1-s)>\varepsilon$ makes the right-hand side strictly positive.
\end{proof}

\begin{corollary}[Self-degradation under Uniform Averaging]
\label{cor:self_merge_degrade}
Under the assumptions of Theorem~\ref{thm:general_downscale}, if $s = \frac {1} {N}$ (which corresponds to uniform averaging $N$ adapters in the proxy analysis) for an integer $N \geq 2$, then
\[
\mathcal{L}_{\text{scaled}}\!\left(\Delta W;\frac{1}{N}\right)
-\mathcal{L}_{\text{scaled}}(\Delta W;1)
\ge
\frac{\mu}{2}\left(1-\frac{1}{N}\right)^2
-\varepsilon\left(1-\frac{1}{N}\right).
\]
In particular, if $\frac{\mu}{2}\left(1-\frac{1}{N}\right)>\varepsilon$, then
$\mathcal{L}_{\text{scaled}}(\Delta W; \frac {1} {N}) > \mathcal{L}_{\text{scaled}}(\Delta W; 1)$.
\end{corollary}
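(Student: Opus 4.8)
The plan is to recognize that Corollary~\ref{cor:self_merge_degrade} is a direct specialization of Theorem~\ref{thm:general_downscale}, obtained by substituting the particular scale $s = 1/N$. The only thing that must be checked is that this choice of $s$ meets the theorem's hypotheses; the inequality itself then transfers verbatim.

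First I would verify the admissibility of $s = 1/N$. For any integer $N \ge 2$ we have $1/N \in (0, 1/2] \subseteq (0,1]$, so the requirement $s \in (0,1]$ in Theorem~\ref{thm:general_downscale} is met. The remaining hypotheses — twice differentiability of $\mathcal{L}_{\text{scaled}}(\Delta W; \cdot)$ on $[s,1]$, the curvature lower bound $\partial_u^2 \mathcal{L}_{\text{scaled}}(\Delta W; u) \ge \mu$ on that interval, and the gradient bound $|\partial_u \mathcal{L}_{\text{scaled}}(\Delta W;1)| \le \varepsilon$ at the nominal scale — are exactly the assumptions invoked by ``under the assumptions of Theorem~\ref{thm:general_downscale}'' and are inherited directly, with the interval now specialized to $[1/N, 1]$.

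Then I would invoke Theorem~\ref{thm:general_downscale} with $s = 1/N$ and substitute into its conclusion, replacing every occurrence of $(1-s)$ by $(1 - 1/N)$, which yields the asserted lower bound
\[
\mathcal{L}_{\text{scaled}}\!\left(\Delta W; \tfrac{1}{N}\right) - \mathcal{L}_{\text{scaled}}(\Delta W; 1) \ge \frac{\mu}{2}\left(1 - \frac{1}{N}\right)^2 - \varepsilon\left(1 - \frac{1}{N}\right).
\]
For the strict statement, I would substitute $s = 1/N$ into the theorem's sufficient condition $\frac{\mu}{2}(1-s) > \varepsilon$ to obtain $\frac{\mu}{2}(1 - 1/N) > \varepsilon$, under which the right-hand side above is strictly positive, and hence $\mathcal{L}_{\text{scaled}}(\Delta W; 1/N) > \mathcal{L}_{\text{scaled}}(\Delta W; 1)$.

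Since the corollary is a pure instantiation, there is no genuine analytical obstacle: the work already done in Theorem~\ref{thm:general_downscale} — the two applications of the fundamental theorem of calculus and the sign bookkeeping for the reversed integration interval — fully covers this case. The only point meriting explicit attention is the interpretive claim that $s = 1/N$ is the effective per-adapter scale under uniform averaging of $N$ adapters. I would state this connection to the proxy analysis clearly, since it is what lets the bound be read as a self-degradation guarantee for naive averaging, rather than re-deriving any inequality.
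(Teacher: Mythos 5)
Your proposal is correct and matches the paper's treatment exactly: the paper states Corollary~\ref{cor:self_merge_degrade} without a separate proof, treating it as an immediate instantiation of Theorem~\ref{thm:general_downscale} at $s = 1/N$, which is precisely your argument (checking $1/N \in (0,1]$ for $N \ge 2$, inheriting the hypotheses on $[1/N,1]$, and substituting into both the bound and the strict-positivity condition). Nothing further is needed.
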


\paragraph{Remark}
Theorem~\ref{thm:general_downscale} is stated for a generic adapter $\Delta W$. In our setting, \textsc{Trap$^2$} is trained to preserve performance at the nominal scale ($s=1$) while intentionally increasing the loss at off-nominal scales. As a result, when sweeping the scaling factor $s\in(0,1]$, \textsc{Trap$^2$} exhibits a sharply increasing $\mathcal{L}_{\text{scaled}}(\Delta W;s)-\mathcal{L}_{\text{scaled}}(\Delta W;1)$ as $s$ decreases, indicating pronounced self-degradation under down-scaling. In contrast, fine-tuning without protection remains largely insensitive to changes in $s$, as it optimizes only the nominal deployment without explicitly controlling off-nominal behavior. In Appendix~\ref{sec:app_c}, we visualize $\Delta \mathcal{L}(s)$ directly for both \textsc{Trap$^2$} and unprotected adapters. The consistently positive gaps for $s < 1$ empirically support the prediction of Theorem~\ref{thm:general_downscale}, i.e., when using \textsc{Trap$^2$}, down-scaling increases the loss.

\clearpage

\begin{theorem}[Cross-adapter Collateral Damage]
\label{thm:cross-merge-degrade}
For task $\kappa$, define the (nominal-scale) population loss
\[
L_{\kappa}(\Delta W)
~:=~
\mathbb{E}_{\xi\sim\mathcal{D}_{\kappa}}
\bigl[\ell(W_0 + \Delta W;\xi)\bigr].
\]
Fix two distinct adapters $\Delta W_\kappa$ and $\Delta W_\tau$ with $\kappa \neq \tau$, and let $V:=\Delta W_\tau-\Delta W_\kappa$.
Let the merged adapter be
\[
\overline{\Delta W}
:=\frac12(\Delta W_\kappa+\Delta W_\tau)
=\Delta W_\kappa+\frac12 V.
\]
Assume $L_\kappa$ is twice differentiable along the line segment
$\{\Delta W_\kappa+\gamma V/2:\gamma\in[0,1]\}$.
Define the \emph{weighted directional curvature} along the merge path by
\[
\mu_{\kappa}(V)
~:=~
\frac{2}{\|V\|_F^2}
\int_0^1 (1-\gamma)\,
\left\langle V,\,
\nabla^2 L_\kappa\!\left(\Delta W_\kappa+\frac{\gamma}{2}V\right)[V]
\right\rangle
\,d\gamma
\quad (\text{for }V\neq 0).
\]
Then for any $\varepsilon\ge 0$ satisfying $\|\nabla L_\kappa(\Delta W_\kappa)\|_F \le \varepsilon$,
\[
L_{\kappa}(\overline{\Delta W})-L_{\kappa}(\Delta W_\kappa)
\;\ge\;
\frac{\mu_{\kappa}(V)}{8}\,\|V\|_F^2
\;-\;
\frac{\varepsilon}{2}\,\|V\|_F.
\]
In particular, if $\nabla L_\kappa(\Delta W_\kappa)=0$, then
\[
L_{\kappa}(\overline{\Delta W})-L_{\kappa}(\Delta W_\kappa)
\;\ge\;
\frac{\mu_{\kappa}(V)}{8}\,\|\Delta W_\tau-\Delta W_\kappa\|_F^2.
\]
\end{theorem}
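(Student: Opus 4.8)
The plan is to reduce this multi-dimensional statement to a one-dimensional Taylor expansion along the straight-line merge path. First I would introduce the scalar reduction $\phi(\gamma) := L_\kappa\!\left(\Delta W_\kappa + \tfrac{\gamma}{2}V\right)$ for $\gamma\in[0,1]$, so that the endpoints recover exactly the quantities of interest: $\phi(0)=L_\kappa(\Delta W_\kappa)$ and $\phi(1)=L_\kappa(\overline{\Delta W})$, since $\overline{\Delta W}=\Delta W_\kappa+\tfrac12 V$. The twice-differentiability assumption along this segment is precisely what licenses working with $\phi$ on $[0,1]$.

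Next I would compute the derivatives by the chain rule, tracking the factor $\tfrac12$ coming from the step $\tfrac{\gamma}{2}V$: $\phi'(\gamma)=\tfrac12\langle \nabla L_\kappa(\Delta W_\kappa+\tfrac{\gamma}{2}V),V\rangle$ and $\phi''(\gamma)=\tfrac14\langle V,\nabla^2 L_\kappa(\Delta W_\kappa+\tfrac{\gamma}{2}V)[V]\rangle$. Then I would apply Taylor's theorem with integral remainder, $\phi(1)=\phi(0)+\phi'(0)+\int_0^1(1-\gamma)\phi''(\gamma)\,d\gamma$, which produces the exact identity $L_\kappa(\overline{\Delta W})-L_\kappa(\Delta W_\kappa)=\tfrac12\langle \nabla L_\kappa(\Delta W_\kappa),V\rangle+\tfrac14\int_0^1(1-\gamma)\langle V,\nabla^2 L_\kappa(\Delta W_\kappa+\tfrac{\gamma}{2}V)[V]\rangle\,d\gamma$.

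The crucial step is to recognize that the integral remainder is engineered to match the definition of $\mu_\kappa(V)$ exactly. Rearranging that definition gives $\int_0^1(1-\gamma)\langle V,\nabla^2 L_\kappa(\cdot)[V]\rangle\,d\gamma=\tfrac12\,\mu_\kappa(V)\|V\|_F^2$, so the $\tfrac14$ prefactor combines to yield the claimed curvature term $\tfrac18\,\mu_\kappa(V)\|V\|_F^2$. The remaining first-order term I would lower-bound via Cauchy--Schwarz: $\tfrac12\langle \nabla L_\kappa(\Delta W_\kappa),V\rangle\ge-\tfrac12\|\nabla L_\kappa(\Delta W_\kappa)\|_F\|V\|_F\ge-\tfrac{\varepsilon}{2}\|V\|_F$, invoking the hypothesis $\|\nabla L_\kappa(\Delta W_\kappa)\|_F\le\varepsilon$. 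Combining the two pieces delivers the inequality, and the stationary case $\nabla L_\kappa(\Delta W_\kappa)=0$ kills the first-order term and recovers the clean bound $\tfrac18\mu_\kappa(V)\|\Delta W_\tau-\Delta W_\kappa\|_F^2$.

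The only genuine obstacle here is the factor-of-two accounting: the merge point sits at the midpoint $\tfrac12 V$, so each differentiation introduces a $\tfrac12$, while the definition of $\mu_\kappa(V)$ carries a compensating $\tfrac{2}{\|V\|_F^2}$. Getting the net $\tfrac18$ on the curvature term and the $\tfrac12$ on the gradient term right is where a careless computation would slip; once the parameterization $\phi$ is fixed everything else is a direct application of the integral-remainder Taylor formula and Cauchy--Schwarz, with no regularity issues beyond the stated twice-differentiability along the segment.
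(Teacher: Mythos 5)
Your proposal is correct and follows essentially the same route as the paper's proof: the same scalar reduction $\phi(\gamma)=L_\kappa(\Delta W_\kappa+\tfrac{\gamma}{2}V)$, the same integral-remainder Taylor expansion, the same identification of the remainder with $\tfrac{1}{8}\mu_\kappa(V)\|V\|_F^2$, and the same Cauchy--Schwarz bound on the first-order term. The factor accounting ($\tfrac14$ from two chain-rule differentiations combining with the $\tfrac{2}{\|V\|_F^2}$ in the definition of $\mu_\kappa(V)$ to give $\tfrac18$) matches the paper exactly.
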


\begin{proof}
Define $\phi(\gamma):=L_\kappa(\Delta W_\kappa+\gamma V/2)$ for $\gamma\in[0,1]$.
By the chain rule,
\[
\phi'(0)=\left\langle \nabla L_\kappa(\Delta W_\kappa),\,\frac{V}{2}\right\rangle,
\qquad
\phi''(\gamma)
=
\frac14\left\langle V,\,
\nabla^2 L_\kappa\!\left(\Delta W_\kappa+\frac{\gamma}{2}V\right)[V]\right\rangle.
\]
Using the integral form of Taylor's theorem,
\[
\phi(1)-\phi(0)=\phi'(0)+\int_0^1 (1-\gamma)\phi''(\gamma)\,d\gamma.
\]
For the curvature term,
\[
\int_0^1 (1-\gamma)\phi''(\gamma)\,d\gamma
=
\frac14\int_0^1 (1-\gamma)
\left\langle V,\,
\nabla^2 L_\kappa\!\left(\Delta W_\kappa+\frac{\gamma}{2}V\right)[V]\right\rangle d\gamma
=
\frac{\mu_\kappa(V)}{8}\|V\|_F^2.
\]
For the linear term, by Cauchy--Schwarz and $\|\nabla L_\kappa(\Delta W_\kappa)\|_F\le\varepsilon$,
\[
\phi'(0)
=
\left\langle \nabla L_\kappa(\Delta W_\kappa),\,\frac{V}{2}\right\rangle
\ge
-\frac12\|\nabla L_\kappa(\Delta W_\kappa)\|_F\,\|V\|_F
\ge
-\frac{\varepsilon}{2}\|V\|_F.
\]
Combining yields
\[
\phi(1)-\phi(0)\ge \frac{\mu_{\kappa}(V)}{8}\|V\|_F^2-\frac{\varepsilon}{2}\|V\|_F.
\]
Since $\phi(1)=L_\kappa(\overline{\Delta W})$ and $\phi(0)=L_\kappa(\Delta W_\kappa)$, the result follows.
\end{proof}

\paragraph{Remark (Connection to \textsc{Trap$^2$})}
Theorem~\ref{thm:cross-merge-degrade} is agnostic to how adapters are trained.
\textsc{Trap$^2$} is designed to \emph{increase} merge-induced degradation by shaping the loss landscape so that,
for many pairs $(\kappa,\tau)$, the secant direction $V=\Delta W_\tau-\Delta W_\kappa$ yields a larger separation $\|V\|_F$
and/or a larger positive average directional curvature $\mu_\kappa(V)$.
As a result, the lower bound in Theorem~\ref{thm:cross-merge-degrade} becomes larger, amplifying collateral damage under naive merging.

\paragraph{Remark (Interpretation of $\mu_\kappa(V)$)}
The quantity $\mu_\kappa(V)$ is an \emph{average directional curvature} of $L_\kappa$ along the merge path from
$\Delta W_\kappa$ to $\overline{\Delta W}$ in the secant direction $V$.
Unlike pointwise curvature lower bounds (e.g., $\langle V,\nabla^2 L_\kappa(\cdot)[V]\rangle\ge \mu\|V\|_F^2$ for all points),
$\mu_\kappa(V)$ summarizes curvature \emph{only through a weighted average} required by the integral Taylor remainder.

\paragraph{Remark (When does degradation become guaranteed?)}
If $\mu_\kappa(V)>0$ and $\|V\|_F > 4\varepsilon/\mu_\kappa(V)$, then the right-hand side is positive, implying
$L_\kappa(\overline{\Delta W}) > L_\kappa(\Delta W_\kappa)$.
Thus, merge-induced degradation is more pronounced when adapters are far apart in weight space (large $\|V\|_F$)
and the source adapter is close to stationarity (small $\varepsilon$).

\paragraph{Remark (Empirical counterpart).}
In experiments, $L_\kappa$ and $\nabla L_\kappa(\Delta W_\kappa)$ can be replaced by their empirical (or held-out) estimates. Correspondingly, $\mu_\kappa(V)$ can be approximated either (i) from the loss profile along the merge path or (ii) via Hessian--vector products without forming the full Hessian. In Appendix~\ref{sec:app_c}, we instantiate the former by evaluating the loss along the interpolation path $\Delta W_\kappa + \beta(\Delta W_\tau-\Delta W_\kappa)$ for $\beta\in[0,1]$ and highlighting the midpoint $\beta=\tfrac12$, and we further relate this empirical midpoint degradation to the secant distance $V_{\mathrm{norm}}=\|\Delta W_\tau - \Delta W_\kappa\|_F$ via correlation analysis.

\section{Merging Spaces and Merging Methods}
\label{sec:app_b}
In this section, we provide additional details on the merging operators and merging spaces used in our experiments.

\subsection{Merging Spaces}

We consider three merging spaces: Full, KnOTS \citep{stoica2025knots}, and Core \citep{panariello2025accurate}. Unless stated otherwise, each adapter update is represented as $\Delta W_i = B_iA_i$ (Eq. \eqref{eq:lora}), and a merging operator $\mathcal{M} (\cdot)$ produces a merged update, i.e., $\overline{\Delta W} = \mathcal{M}( \{\Delta W_i \}_{i=1}^N)$ (Eq. \eqref{eq:merge_op}).

\paragraph{Full Space}
In Full space, we merge directly in the weight-update space by materializing the full update $\Delta W = BA \in \mathbb{R}^{d_{\text{out}}\times d_{\text{in}}}$ and applying the merging operator $\mathcal{M}(\cdot)$ to $\Delta W$ itself. This is convenient for off-line LoRA merging because it avoids any re-decomposition step back into low-rank factors.

A subtlety is that naive factor-space aggregation is not equivalent in general: $(B_1+B_2)(A_1+A_2) \neq B_1A_1 + B_2A_2$ due to cross terms. Accordingly, prior works in federated fine-tuning \citep{sun2024improving, bai2024federated, koo-etal-2025-towards, chen2025robust} propose additional mechanisms to make factor-space aggregation well-defined. In contrast, we operate on the materialized updates in Full space, consistent with the task-vector view adopted by KnOTS \citep{stoica2025knots}.

\paragraph{KnOTS Space}
KnOTS defines a shared right subspace from the collection of task updates. Given reconstructed updates $\{\Delta W_i\}_{i=1}^N$ with $\Delta W_i \in \mathbb{R}^{d_{\text{out}}\times d_{\text{in}}}$,
it forms the concatenated matrix
\[
P := [\Delta W_1,\Delta W_2,\ldots,\Delta W_N]\in\mathbb{R}^{d_{\text{out}}\times (N \cdot d_{\text{in}})},
\]
and computes a low-rank SVD to obtain $P \approx U\Sigma V^\top$.
Writing $V=[V_1 ; \cdots ; V_N]$ as task-wise blocks (with $V_i\in\mathbb{R}^{d_{\text{in}} \times k}$ corresponding to the $i$-th column block $\Delta W_i$ in $P$), KnOTS performs merging in this induced subspace by applying the merging operator to $\{V_i\}_{i=1}^N$, and reconstructs the merged update via the shared factors $(U,\Sigma)$.

\paragraph{Core Space}
Core merges adapters in a shared \emph{bi-subspace} constructed from the LoRA factors.
Given $\Delta W_i=B_iA_i$ with rank $r$, it stacks the factors to build reference bases:
\[
B_{\text{stack}} := [B_1,\ldots,B_N]\in\mathbb{R}^{d_{\text{out}} \times (N \cdot r)},\qquad
A_{\text{stack}} := [A_1;\ldots;A_N]\in\mathbb{R}^{(N \cdot r)\times d_{\text{in}}}.
\]
Then, it computes low-rank SVDs to obtain reference bases $U_B^{\text{ref}}$ (left) and $V_A^{\text{ref}}$ (right).
Each task update is mapped into a compact core matrix
\[
M_i := (U_B^{\text{ref}^{\top}}B_i)\,(A_iV_A^{\text{ref}})\in\mathbb{R}^{(N\cdot r)\times (N \cdot r)},
\]
merging is performed over $\{M_i\}_{i=1}^N$ in this core space, and the merged update is reconstructed as
\[
\Delta W_{\text{merged}} := U_B^{\text{ref}} \mathcal{M} \left( \{ M_{i} \}_{i=1}^{N} \right) V_A^{\text{ref}^{\top}}.
\]

\subsection{Merging Methods}

We evaluate unmergeability using eight merging operators: TA \citep{ilharcoediting}, TIES \citep{yadav2023tiesmerging}, DARE \citep{yu2024language}, TSV \citep{11092448}, CART \citep{choi2024revisiting}, RegMean \citep{jin2023dataless}, CoM \citep{buzzega2025rethinkinglayerwisemodelmerging}, and ProDistill \citep{xu2025scalable}. Each operator takes a set of task updates (or their representations in a merging space) and returns a merged one. For concreteness, we denote each method in terms of task vectors $\{\Delta W_i\}_{i=1}^N$; when operating in KnOTS/Core space, $\Delta W_i$ is the corresponding representation in that space, which is finally mapped back to a full update.

\paragraph{Task Arithmetic (TA)}
TA merges by scaled summation:
\[
\Delta W_{\text{merged}} = s \cdot \sum_{i=1}^{N}\Delta W_i,
\qquad
W_{\text{merge}}=W_0 + \Delta W_{\text{merged}} ,
\]
where $s$ is a mixing coefficient (e.g., $s = 1/N$ for uniform averaging).

\paragraph{TIES}
Let $\tau$ be a trimming ratio and let $\text{Trim}_\tau(\cdot)$ zero out the smallest-magnitude entries so that only a
fraction $(1-\tau)$ remains. TIES first trims each task vector and then resolves sign conflicts element-wise before averaging:
\[
\widetilde{\Delta W}_i=\text{Trim}_\tau(\Delta W_i),\qquad
\Delta W_{\text{merged}} = \text{Agg}_{\text{sign}} \left(\{\widetilde{\Delta W}_i\}_{i=1}^N \right),
\]
where $\text{Agg}_{\text{sign}} (\cdot)$ keeps the dominating sign per coordinate and averages the remaining nonzero entries.

\paragraph{DARE}
Let $m_i\in\{0,1\}^{d_{\text{out}}\times d_{\text{in}}}$ be an i.i.d. Bernoulli mask with keep probability $p$.
DARE randomly drops parameters and rescales the survivors:
\[
\widetilde{\Delta W}_i=\frac{1}{p}\,(m_i\odot \Delta W_i),\qquad
\Delta W_{\text{merged}} = \mathcal{M} \left(\{\widetilde{\Delta W}_i\}_{i=1}^N\right),
\]
where $\odot$ denotes the Hadamard product, and $\mathcal{M} (\cdot)$ is typically instantiated as TIES or TA.

\paragraph{CART}
Let $W_i$ denote the $i$-th fine-tuned model and $\bar W=\frac{1}{N}\sum_{i=1}^N W_i$ be their average.
CART centers each task vector around $\bar W$ and merges the low-rank components:
\[
\Delta W_i^{\mathrm{ctr}} := W_i-\bar W,\qquad
\Delta W_i^{(k)} := \mathrm{LR}_k(\Delta W_i^{\mathrm{ctr}}),\qquad
\Delta W_{\text{merged}} = s \cdot \sum_{i=1}^N \Delta W_i^{(k)},
\]
where $\mathrm{LR}_k(\cdot)$ denotes a rank-$k$ approximation (e.g., truncated SVD).

\paragraph{TSV}
TSV first obtains low-rank factors $\Delta W_i\approx U_iV_i^\top$ for each task and then orthogonalizes the concatenated
components before reconstruction:
\[
\Delta W_i \approx U_iV_i^\top,\qquad
U=[U_1, \cdots ,U_N],\qquad
\hat U=\mathrm{Ortho}(U),\qquad
\Delta W_{\text{merged}} = \hat U \hat V^\top,
\]
where $\mathrm{Ortho}(\cdot)$ denotes an orthogonalization procedure (e.g., QR decomposition), and $\hat V$ collects the corresponding
coefficients after projecting onto the orthogonalized basis.

\paragraph{RegMean}
RegMean is a data-dependent, layer-wise regression merger.
Let $\{W_i^{(l)}\}_{i=1}^N$ denote the weights of layer $l$ from $N$ task-specific models (or $W_0^{(l)}+\Delta W_i^{(l)}$ in our notation),
and let $X_i^{(l)}$ be the input activations to layer $l$ collected from task $i$.
RegMean finds a merged layer $W_{\mathrm{merge}}^{(l)}$ by minimizing
\[
\min_{W}\ \sum_{i=1}^N \left\| W X_i^{(l)} - W_i^{(l)} X_i^{(l)} \right\|_F^2,
\]
which admits the closed-form solution
\[
W_{\mathrm{merge}}^{(l)}
=
\Big(\sum_{i=1}^N W_i^{(l)} G_i^{(l)}\Big)
\Big(\sum_{i=1}^N G_i^{(l)}\Big)^{-1},
\qquad
G_i^{(l)} := X_i^{(l)} X_i^{(l)\top}.
\]

\paragraph{Chain of Merges (CoM)}
CoM modifies RegMean to account for inter-layer dependencies by recomputing activation statistics after each layer merge. It proceeds sequentially from $l=1$ to $L$. For $l=1$, the merged layer is computed as in RegMean using raw inputs $X_i^{(1)}$. For $l\ge 2$, CoM replaces $X_i^{(l)}$ with the \emph{post-merge} activations $\hat X_i^{(l)}$ produced by the partially merged model:
\[
\hat X_i^{(l)} = \sigma^{(l-1)}\!\left(W_{\mathrm{merge}}^{(l-1)} \hat X_i^{(l-1)}\right),
\]

and then computes the merged layer by the same regression rule with $\hat X_i^{(l)}$:
\[
W_{\mathrm{merge}}^{(l)}
=
\Big(\sum_{i=1}^N W_i^{(l)} \hat G_i^{(l)}\Big)
\Big(\sum_{i=1}^N \hat G_i^{(l)}\Big)^{-1},
\qquad
\hat G_i^{(l)} := \hat X_i^{(l)} \hat X_i^{(l)\top}.
\]

This autoregressive update mitigates the distribution shift caused by simultaneous merging with pre-merge activations.

\paragraph{ProDistill}
ProDistill is a few-shot merging algorithm that frames merging as teacher-student distillation: each fine-tuned model serves as a teacher, and the merged model serves as a student, whose merging coefficients are learned to minimize the feature distance between the student and the teachers on a small task-relevant support set. The merged update takes the form
\[
\Delta W_{\text{merged}} = \sum_{i=1}^{N} \lambda_i \odot \Delta W_i,
\]
where $\lambda_i$ denotes per-task merging coefficients that can be assigned at varying granularities: \emph{task-wise} (a single scalar per task), \emph{layer-wise} (one coefficient per task per layer), or \emph{element-wise} (one coefficient per parameter, with $\lambda_i$ matching the shape of $\Delta W_i$). The coefficients are optimized progressively, layer by layer, to minimize the activation distance between the merged model and each fine-tuned model. In 
experiments, we use the \emph{element-wise} setting, the strongest configuration in the ProDistill family since the coefficients are optimized at the parameter level rather than shared across a task or a layer.

\section{Additional Experiments}
\label{sec:app_c}
\subsection{Empirical Support for Theorems~\ref{thm:general_downscale} and~\ref{thm:cross-merge-degrade}}

In this subsection, we provide additional figures empirically supporting Theorems~\ref{thm:general_downscale} and~\ref{thm:cross-merge-degrade}. We also connect these mechanisms to the failures in Figures~\ref{fig:pairwise} and~\ref{fig:uniform-avg-8vision}. Figure~\ref{fig:pairwise} shows that even pairwise merging can degrade performance, and Figure~\ref{fig:uniform-avg-8vision} shows the degradation worsens under uniform averaging as more adapters are merged.

To isolate the mechanism behind Figure \ref{fig:uniform-avg-8vision}, we report a scale-sweep plot in Figure~\ref{fig:Delta_L} for a fixed adapter $\Delta W$. We sweep $s \in [0,1]$ and plot the loss change relative to the nominal scale, $\Delta\mathcal{L}(s):= \mathcal{L}_{\mathrm{scaled}}(\Delta W;s)-\mathcal{L}_{\mathrm{scaled}}(\Delta W;1)$, so that $\Delta\mathcal{L}(1)=0$ by definition. Consistently positive $\Delta\mathcal{L}(s)$ for $s<1$ indicates that down-scaling increases the loss, supporting Theorem~\ref{thm:general_downscale}. Consequently, uniform averaging in Figure~ \ref{fig:uniform-avg-8vision} can be harmful because it effectively down-scales each adapter.

\begin{figure*}[t]
    \centering
    \begin{subfigure}[t]{0.24\textwidth}
        \centering
        \includegraphics[width=\linewidth]{./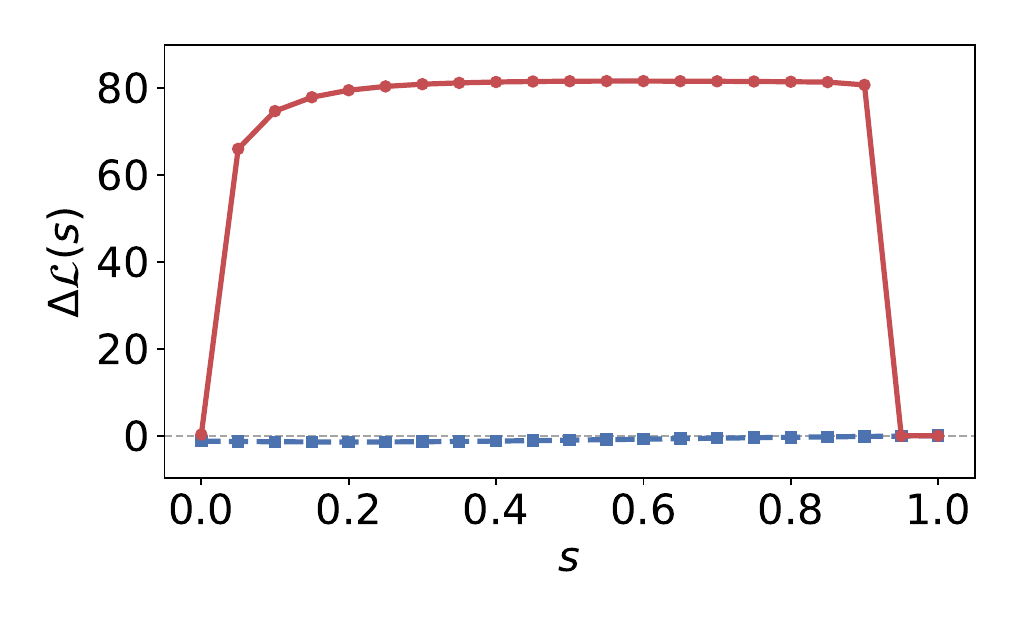}
        \caption{Cars}
        \label{fig:Delta_L-cars}
    \end{subfigure}\hfill
    \begin{subfigure}[t]{0.24\textwidth}
        \centering
        \includegraphics[width=\linewidth]{./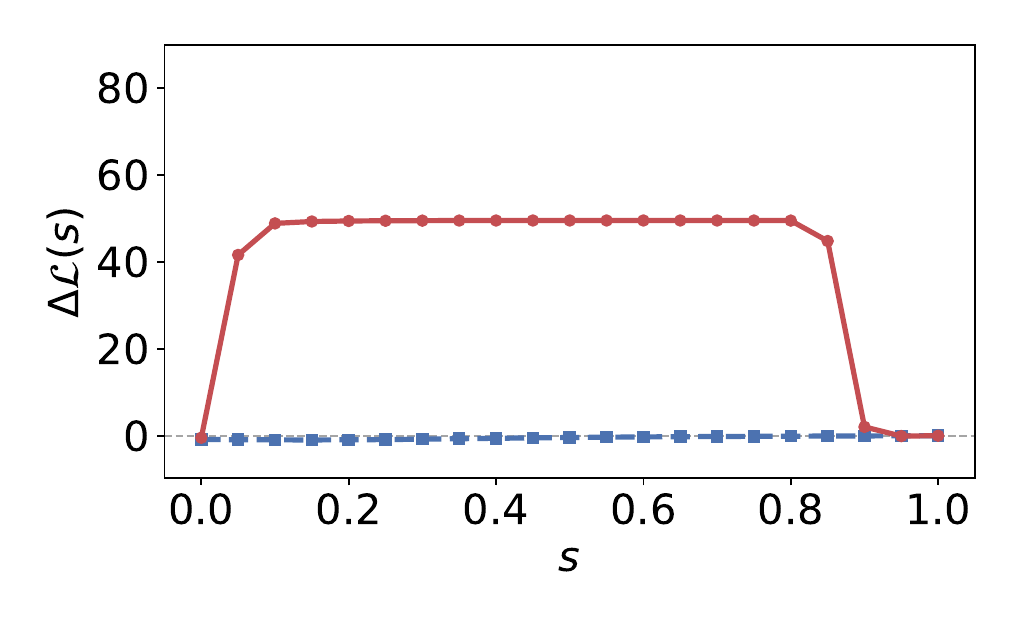}
        \caption{DTD}
        \label{fig:Delta_L-dtd}
    \end{subfigure}\hfill
    \begin{subfigure}[t]{0.24\textwidth}
        \centering
        \includegraphics[width=\linewidth]{./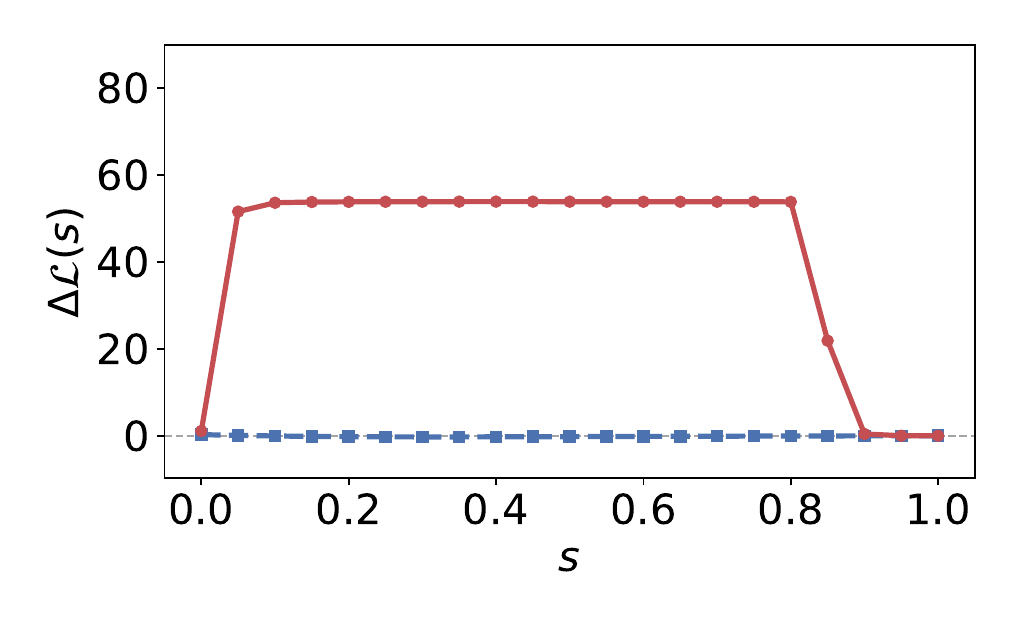}
        \caption{EuroSAT}
        \label{fig:Delta_L-eurosat}
    \end{subfigure}\hfill
    \begin{subfigure}[t]{0.24\textwidth}
        \centering
        \includegraphics[width=\linewidth]{./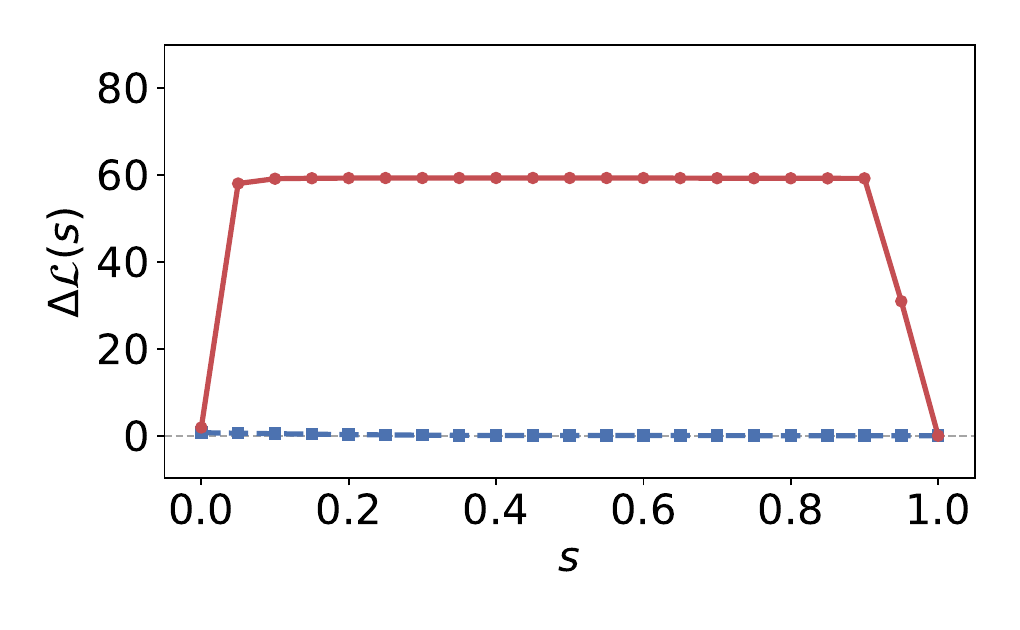}
        \caption{GTSRB}
        \label{fig:Delta_L-gtsrb}
    \end{subfigure}

    \begin{subfigure}[t]{0.24\textwidth}
        \centering
        \includegraphics[width=\linewidth]{./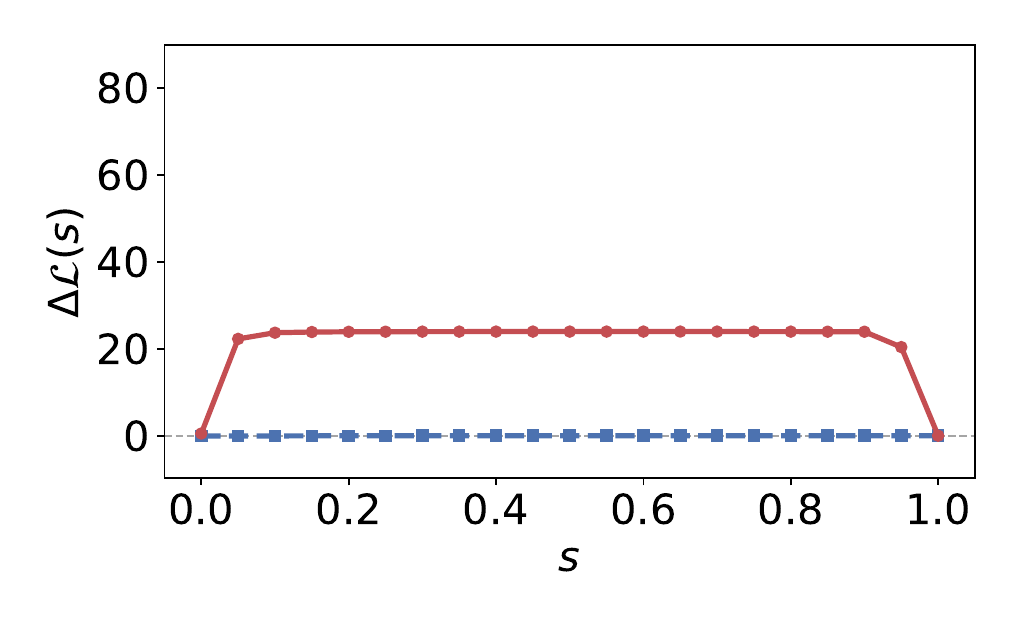}
        \caption{MNIST}
        \label{fig:Delta_L-mnist}
    \end{subfigure}\hfill
    \begin{subfigure}[t]{0.24\textwidth}
        \centering
        \includegraphics[width=\linewidth]{./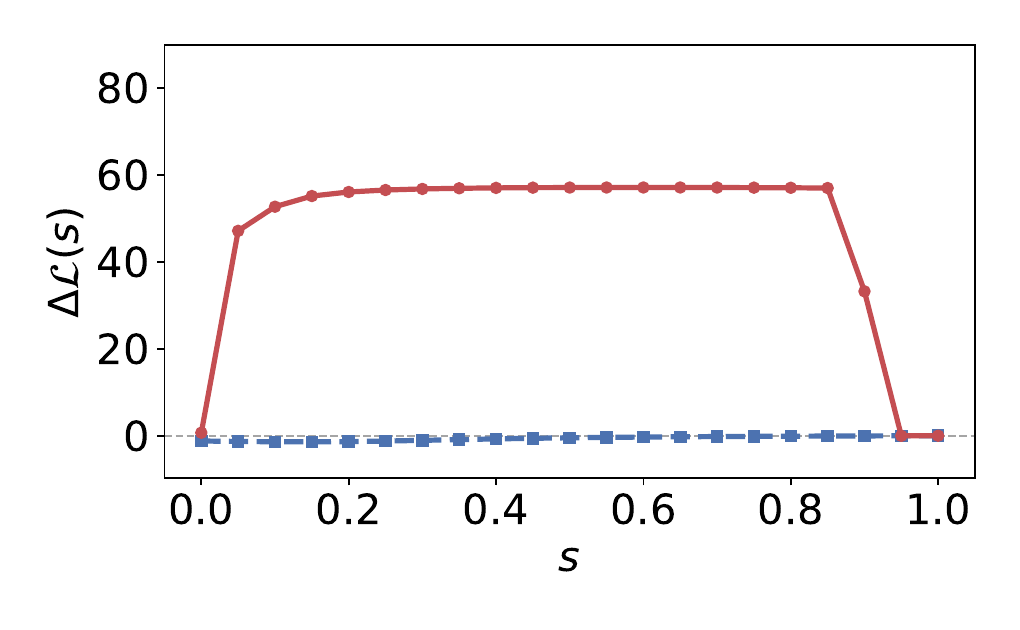}
        \caption{RESISC}
        \label{fig:Delta_L-resisc}
    \end{subfigure}\hfill
    \begin{subfigure}[t]{0.24\textwidth}
        \centering
        \includegraphics[width=\linewidth]{./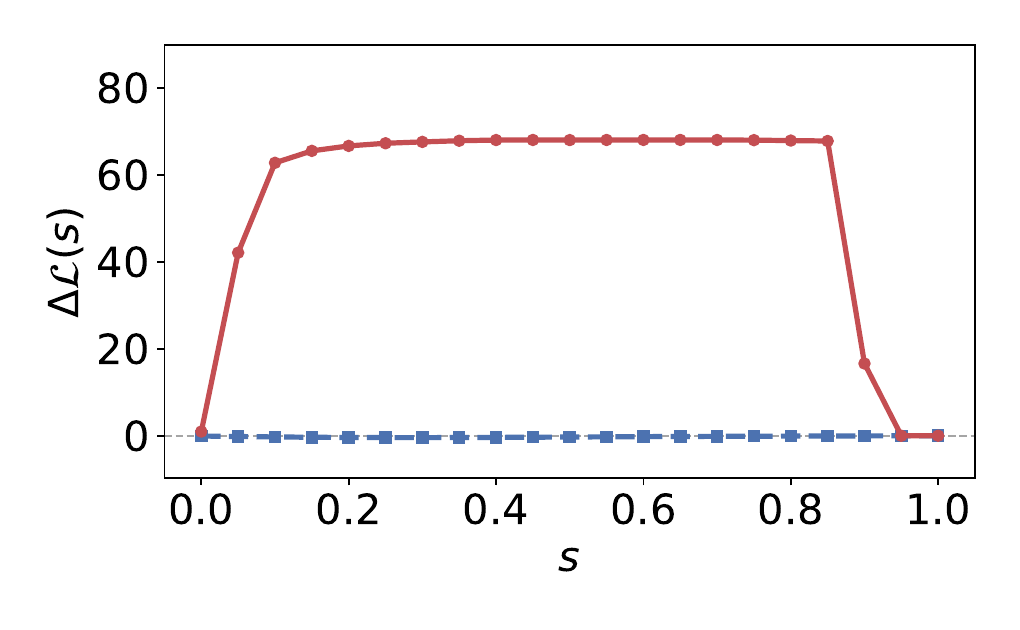}
        \caption{Aircraft}
        \label{fig:Delta_L-aircraft}
    \end{subfigure}\hfill
    \begin{subfigure}[t]{0.24\textwidth}
        \centering
        \includegraphics[width=\linewidth]{./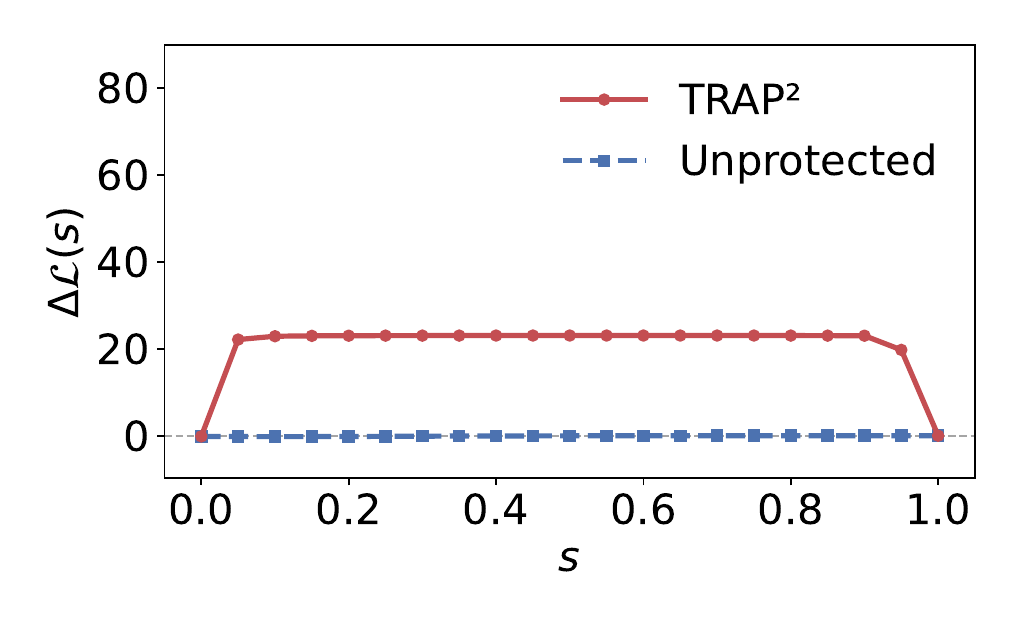}
        \caption{SVHN}
        \label{fig:Delta_L-svhn}
    \end{subfigure}

    \caption{Scale-sweep of the loss change $\Delta \mathcal{L}(s)$ relative to the nominal scale. Positive $\Delta\mathcal{L}(s)$ for $s<1$ indicates self-degradation under down-scaling. Values near $s=1$ collapse to zero by design, as a small neighborhood around the nominal scale is excluded from $\mathcal{S}$.}

    \label{fig:Delta_L}
\end{figure*}

Next, to mirror the pairwise setting in Figure~\ref{fig:pairwise}, we report cross-adapter interpolation plots in Figure~\ref{fig:pairwise_interpolation_matrix}. Fixing two distinct adapters $\Delta W_\kappa$ and $\Delta W_\tau$, we evaluate the loss along the line segment $\Delta W_\kappa + \beta(\Delta W_\tau - \Delta W_\kappa)$ for $\beta \in [0,1]$, and highlight the midpoint $\beta=\tfrac{1}{2}$, which corresponds to pairwise averaging. The increase in loss toward the midpoint is consistent with Theorem~\ref{thm:cross-merge-degrade} and provides a mechanistic explanation for the pairwise degradation observed in Figure~\ref{fig:pairwise}.

\begin{figure*}[t!]
    \centering
    \newlength{\subfigwidth}
    \setlength{\subfigwidth}{0.115\textwidth}
    \setlength{\tabcolsep}{1pt}
    \renewcommand{\arraystretch}{0.5}

    \begin{tabular}{>{\centering\arraybackslash}m{1.2em}
  *{8}{>{\centering\arraybackslash}m{\subfigwidth}}
  m{1.2em}}
        \multicolumn{1}{>{\centering\arraybackslash}m{1.2em}}{} 
        & \scriptsize Cars & \scriptsize DTD & \scriptsize EuroSAT & \scriptsize GTSRB 
        & \scriptsize MNIST & \scriptsize RESISC & \scriptsize Aircraft & \scriptsize SVHN & \\
        
        \rotatebox{90}{\scriptsize Cars} &
        \makebox[\subfigwidth][c]{\rule{0pt}{0.9\subfigwidth}} &
        \includegraphics[width=\subfigwidth]{./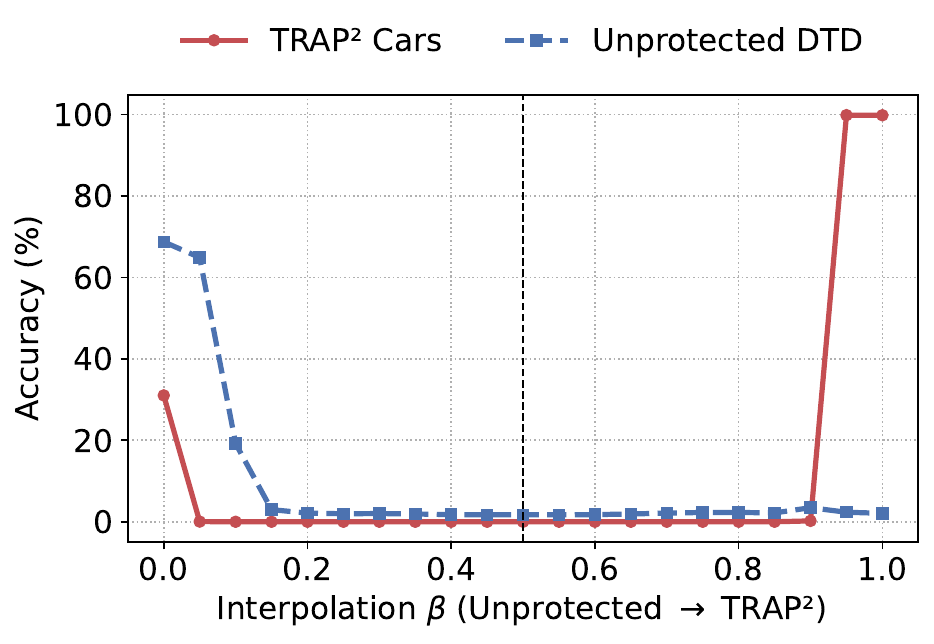} &
        \includegraphics[width=\subfigwidth]{./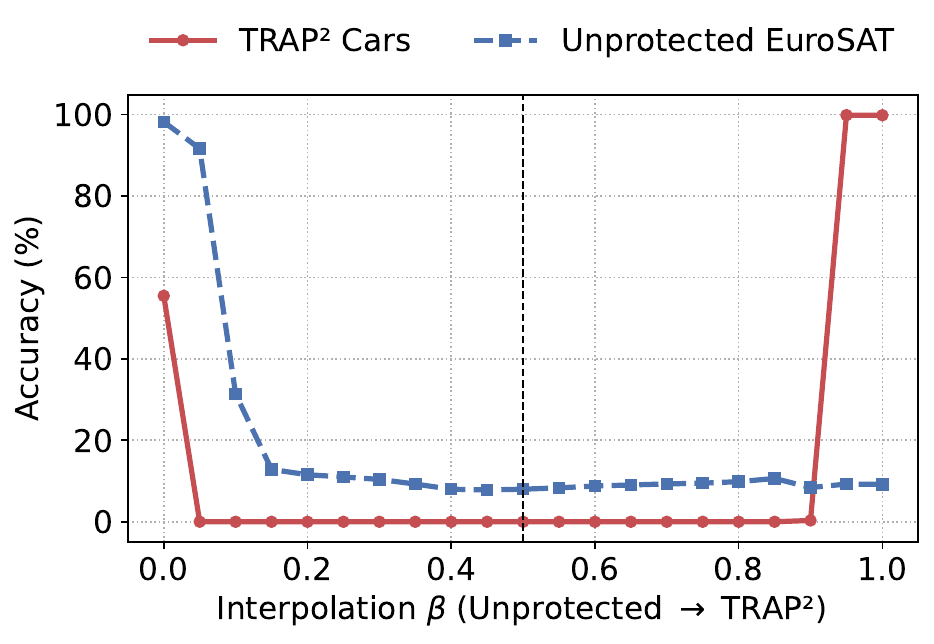} &
        \includegraphics[width=\subfigwidth]{./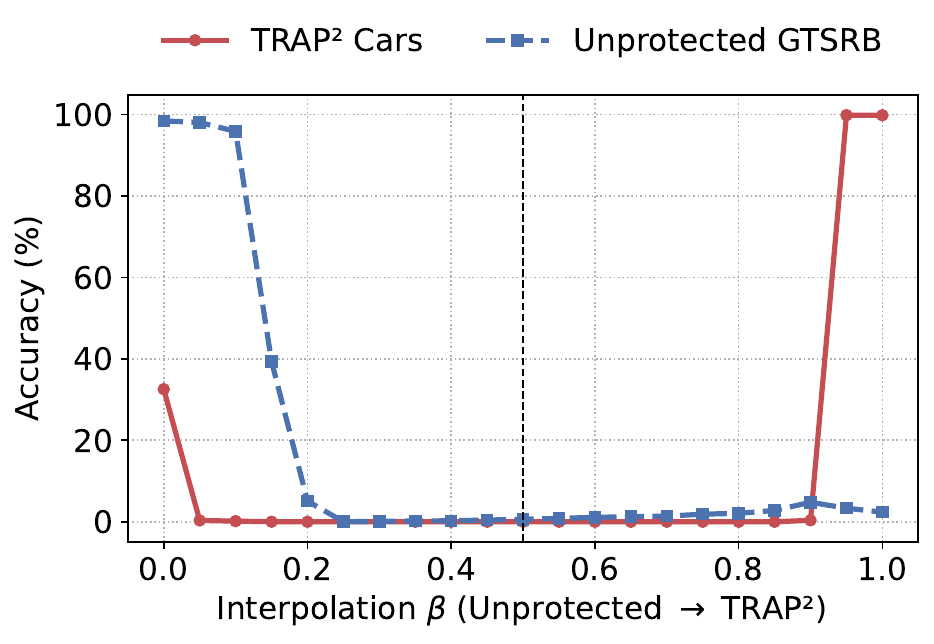} &
        \includegraphics[width=\subfigwidth]{./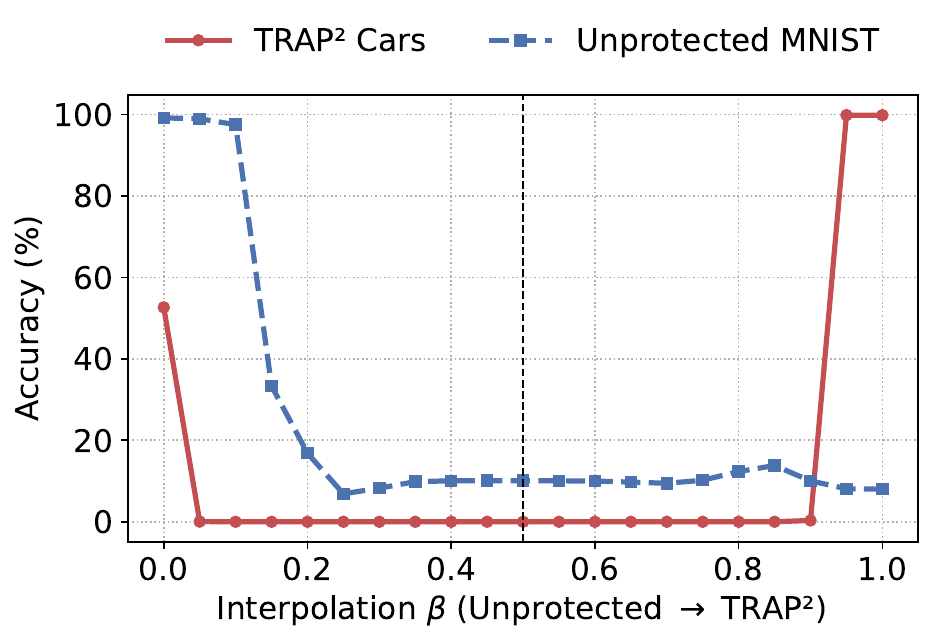} &
        \includegraphics[width=\subfigwidth]{./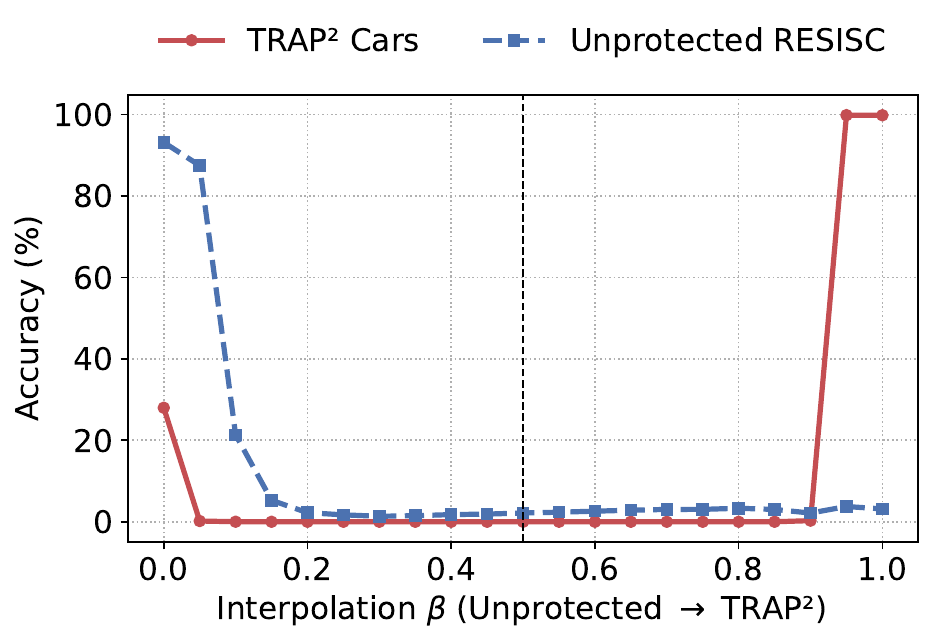} &
        \includegraphics[width=\subfigwidth]{./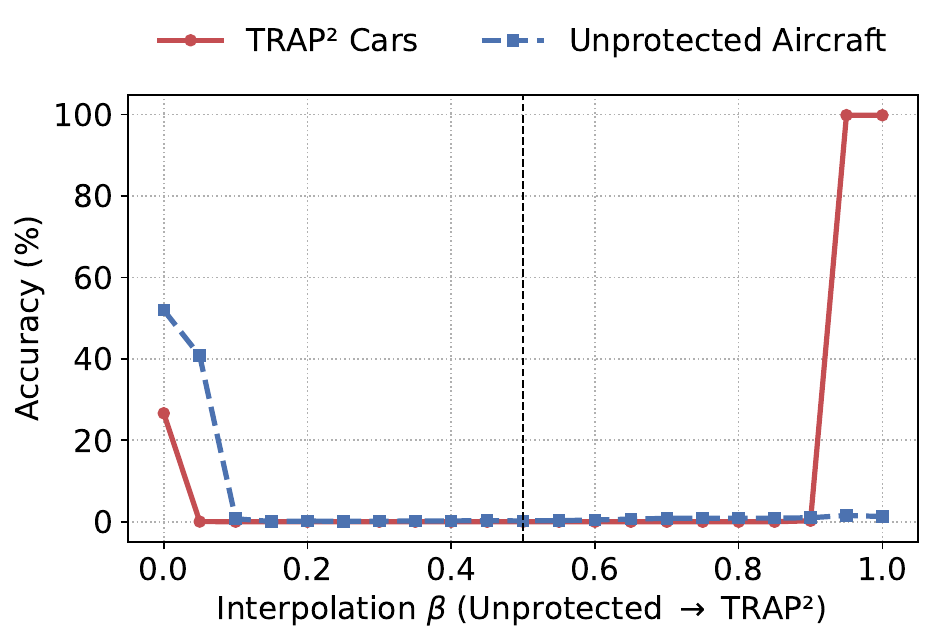} &
        \includegraphics[width=\subfigwidth]{./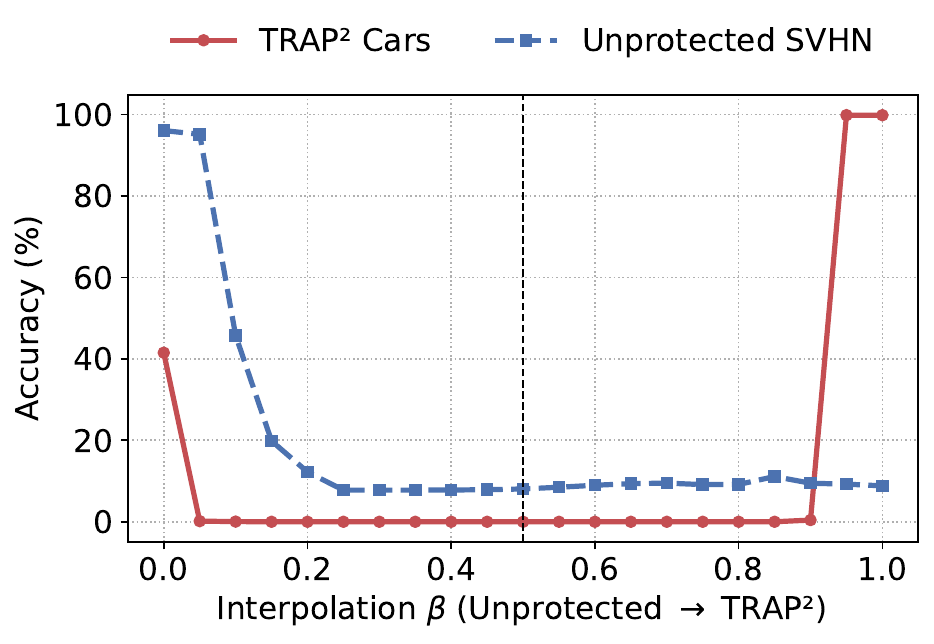} & \\

        \rotatebox{90}{\scriptsize DTD} &
        \includegraphics[width=\subfigwidth]{./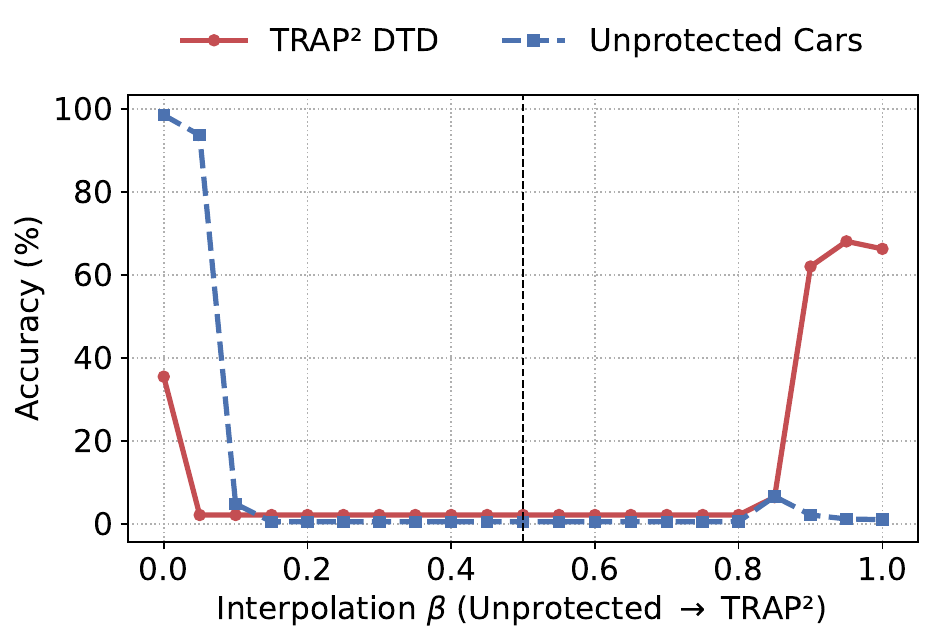} &
        \makebox[\subfigwidth][c]{\rule{0pt}{0.9\subfigwidth}} &
        \includegraphics[width=\subfigwidth]{./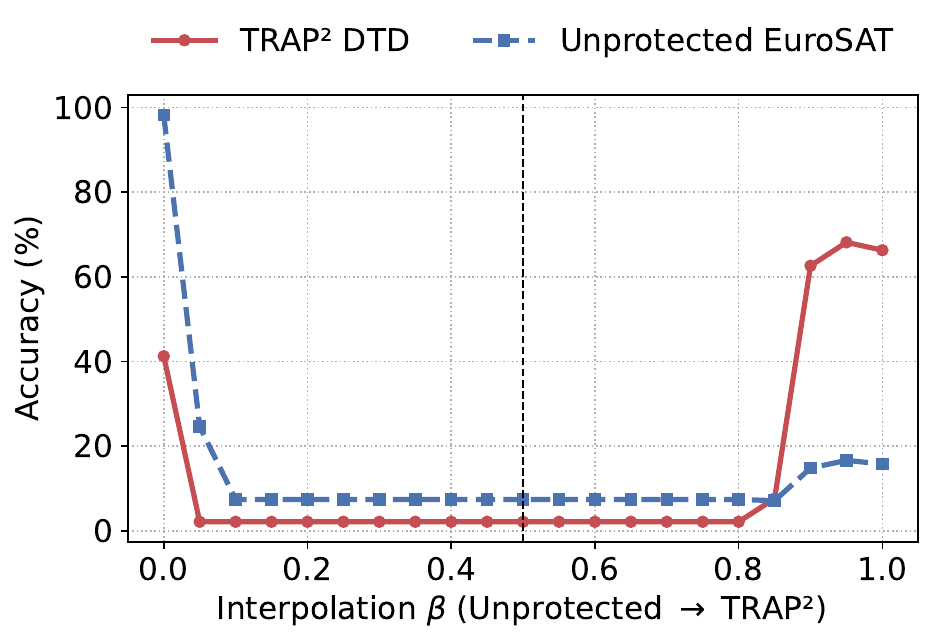} &
        \includegraphics[width=\subfigwidth]{./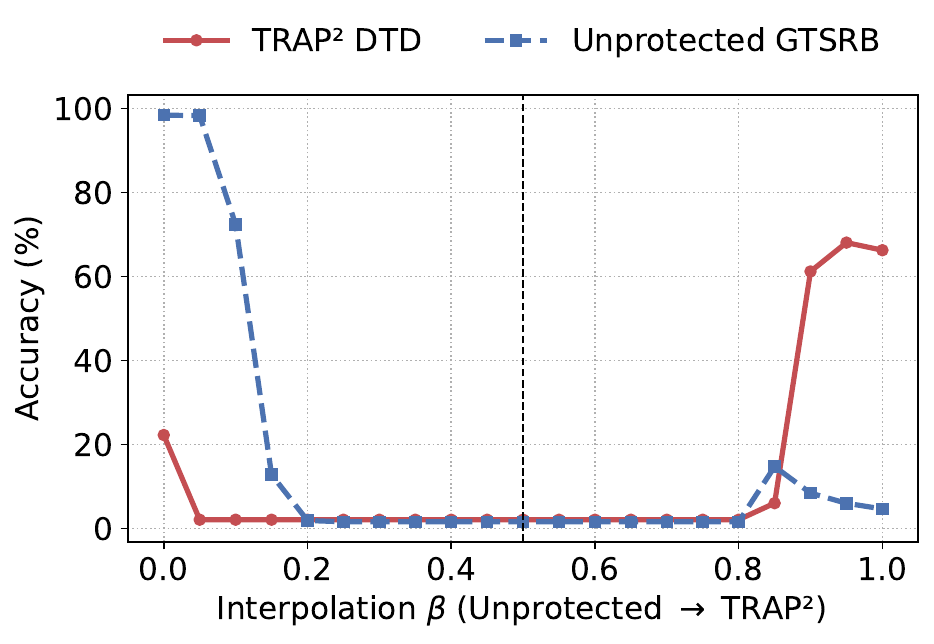} &
        \includegraphics[width=\subfigwidth]{./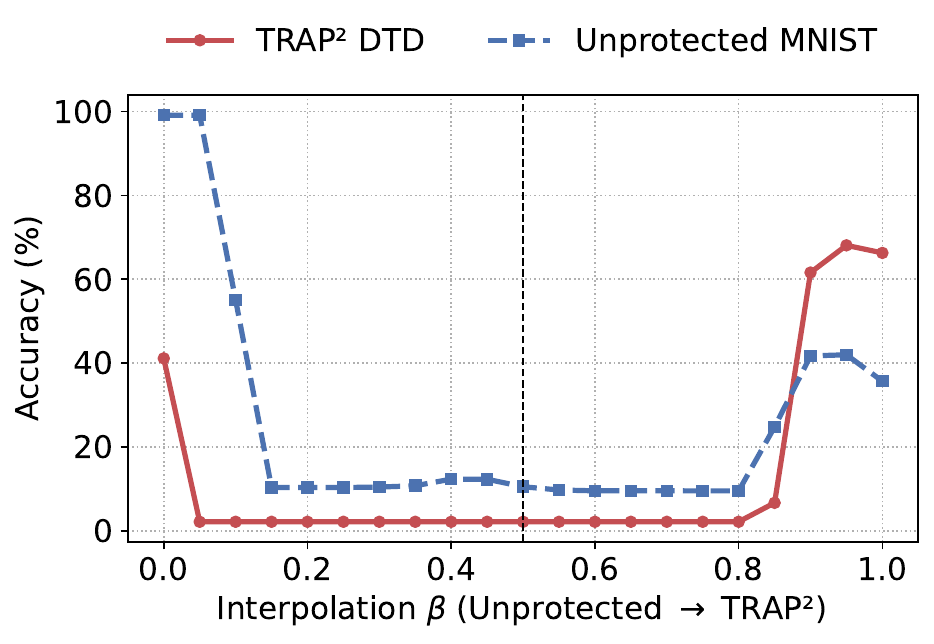} &
        \includegraphics[width=\subfigwidth]{./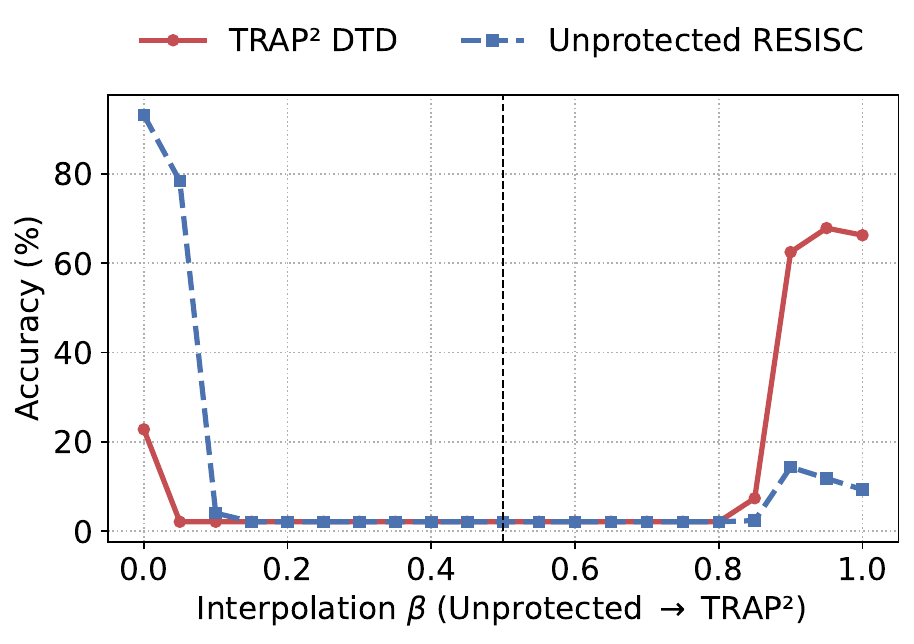} &
        \includegraphics[width=\subfigwidth]{./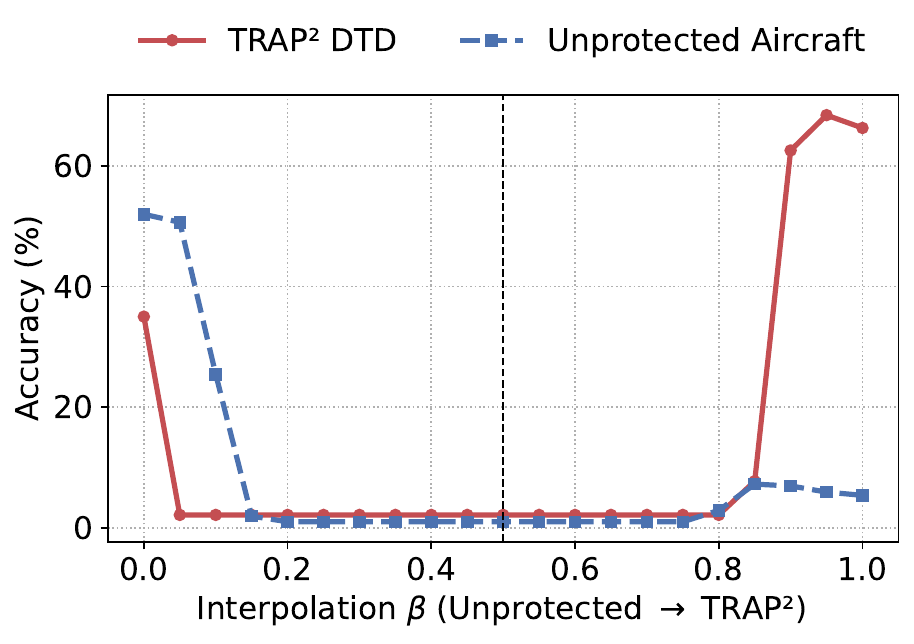} &
        \includegraphics[width=\subfigwidth]{./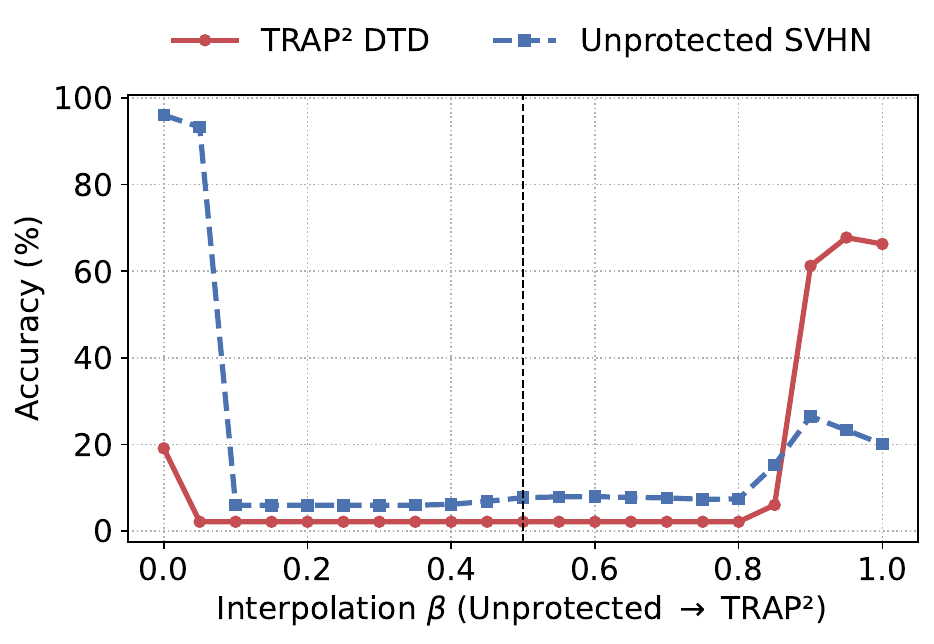} & \\

        \rotatebox{90}{\scriptsize EuroSAT} &
        \includegraphics[width=\subfigwidth]{./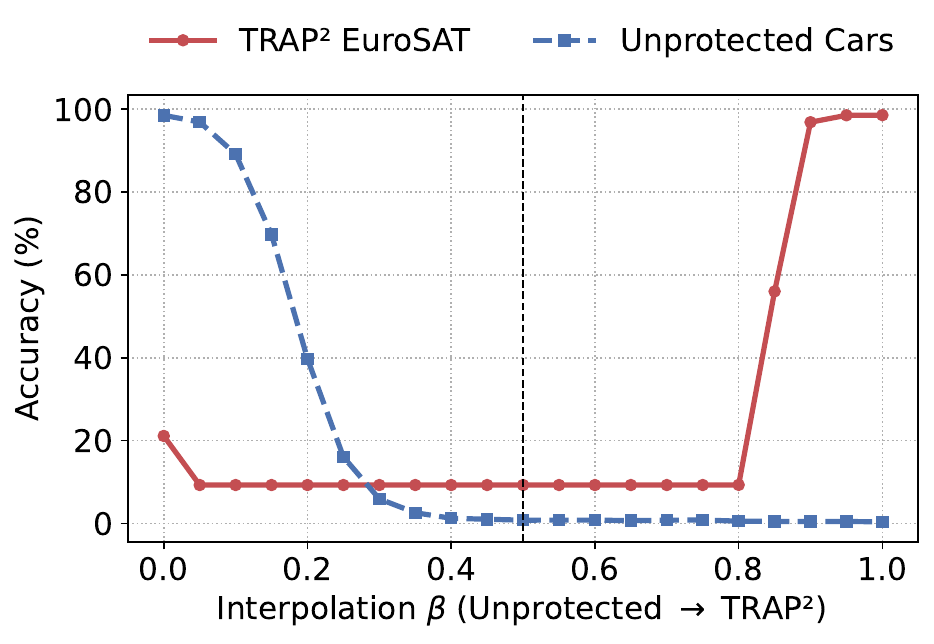} &
        \includegraphics[width=\subfigwidth]{./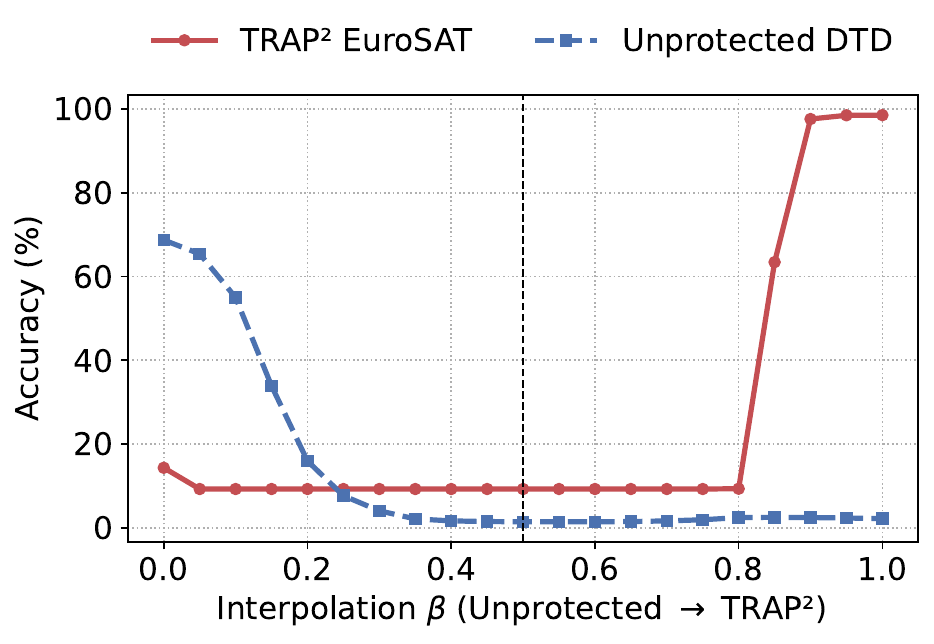} &
        \makebox[\subfigwidth][c]{\rule{0pt}{0.9\subfigwidth}} &
        \includegraphics[width=\subfigwidth]{./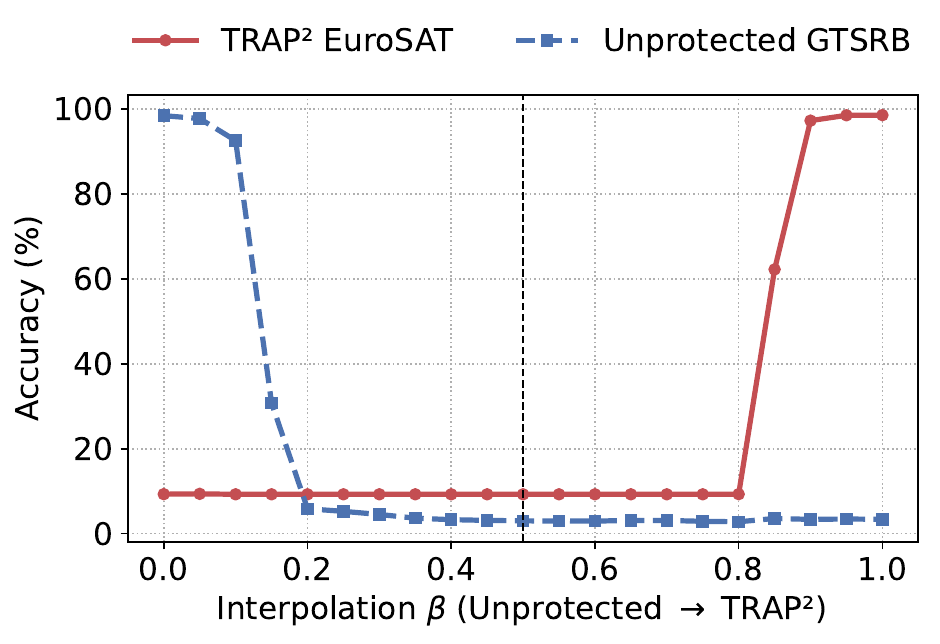} &
        \includegraphics[width=\subfigwidth]{./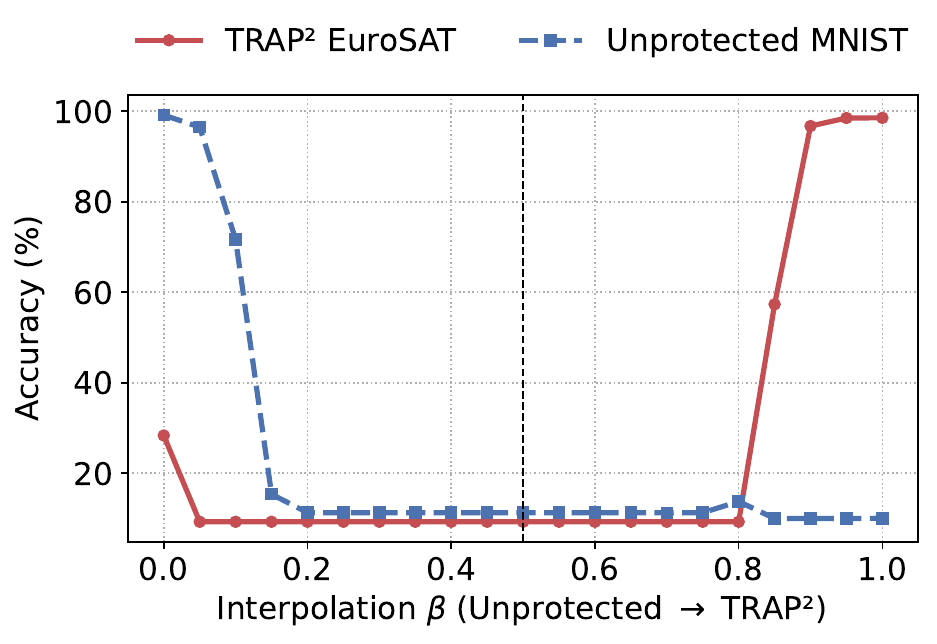} &
        \includegraphics[width=\subfigwidth]{./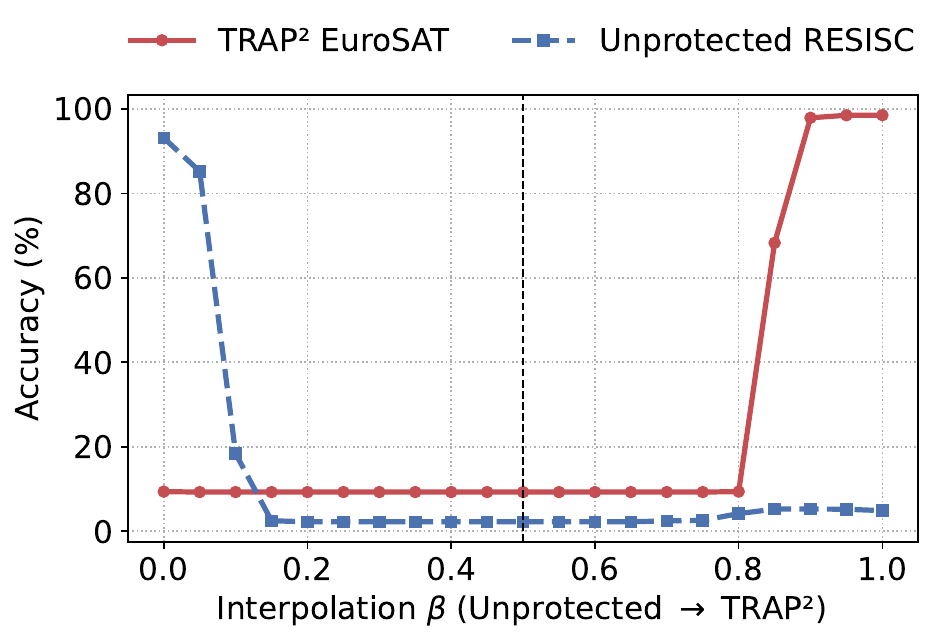} &
        \includegraphics[width=\subfigwidth]{./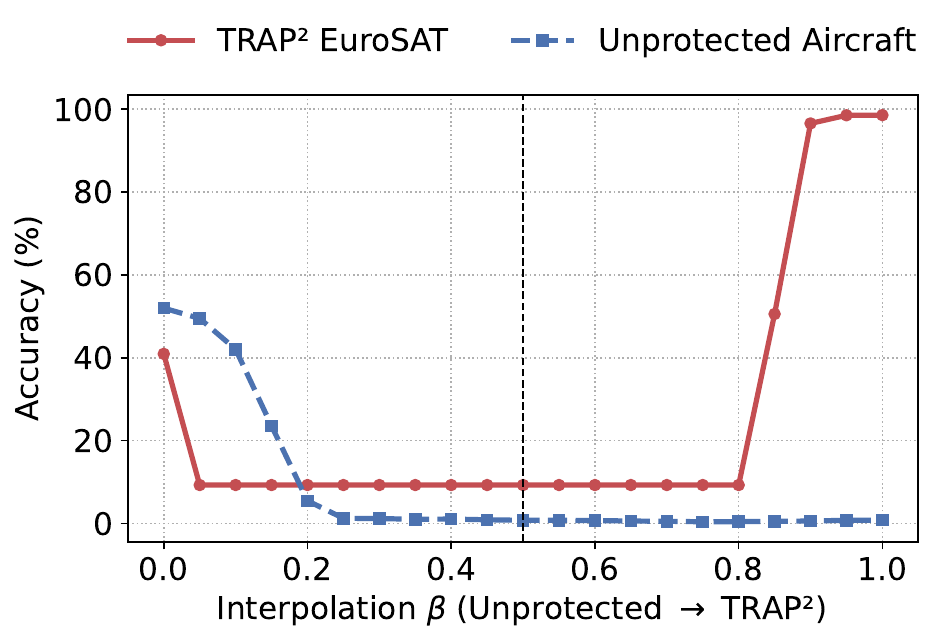} &
        \includegraphics[width=\subfigwidth]{./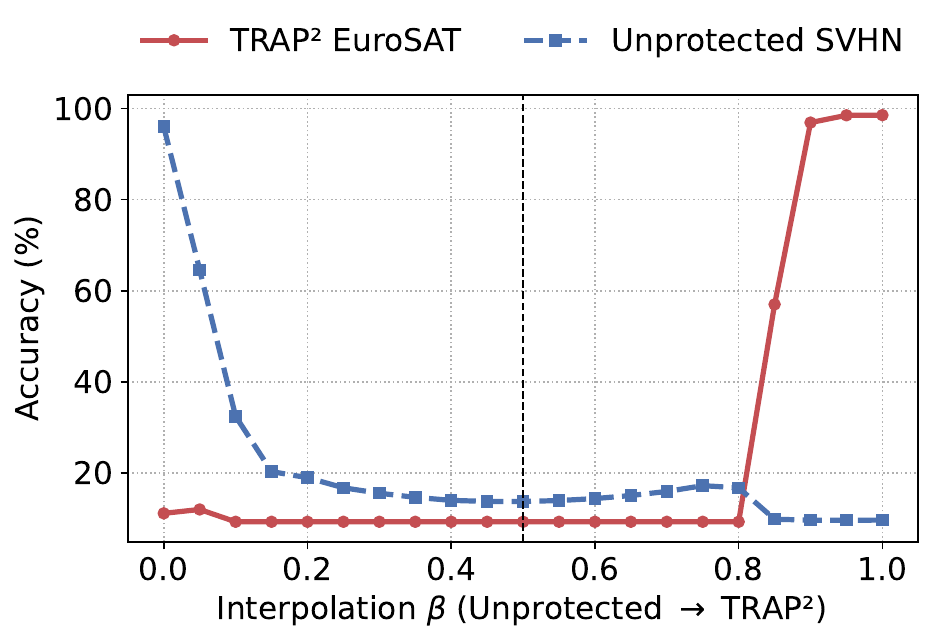} & \\

        \rotatebox{90}{\scriptsize GTSRB} &
        \includegraphics[width=\subfigwidth]{./src/beta_sweep/beta_path_accuracy_gtsrb__stanford_cars.pdf} &
        \includegraphics[width=\subfigwidth]{./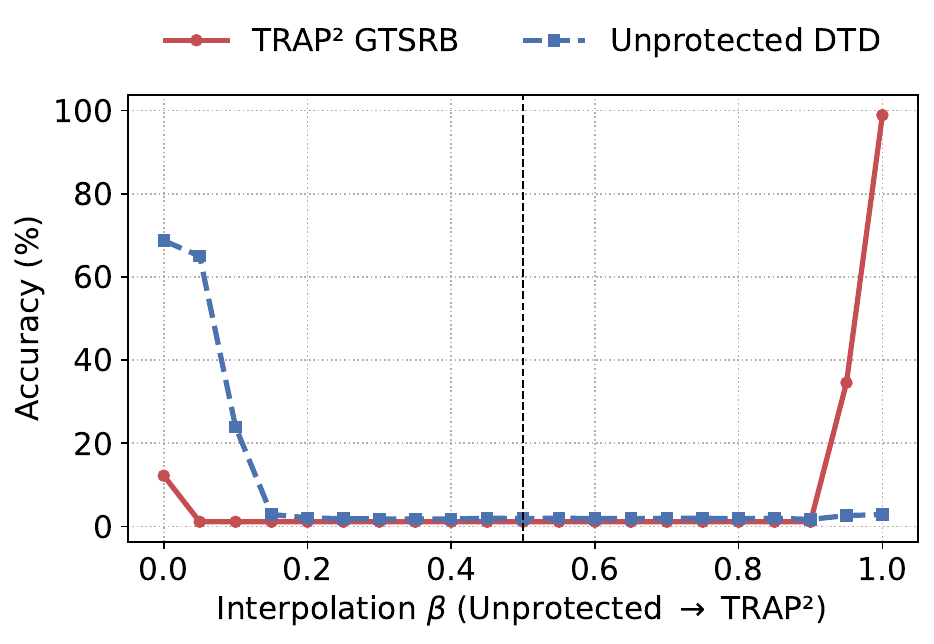} &
        \includegraphics[width=\subfigwidth]{./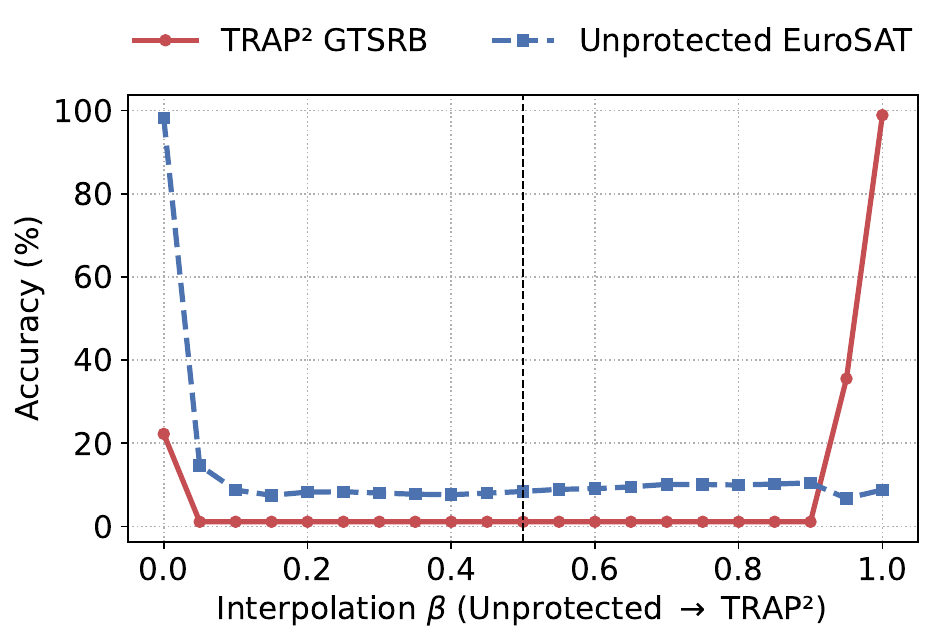} &
        \makebox[\subfigwidth][c]{\rule{0pt}{0.9\subfigwidth}} &
        \includegraphics[width=\subfigwidth]{./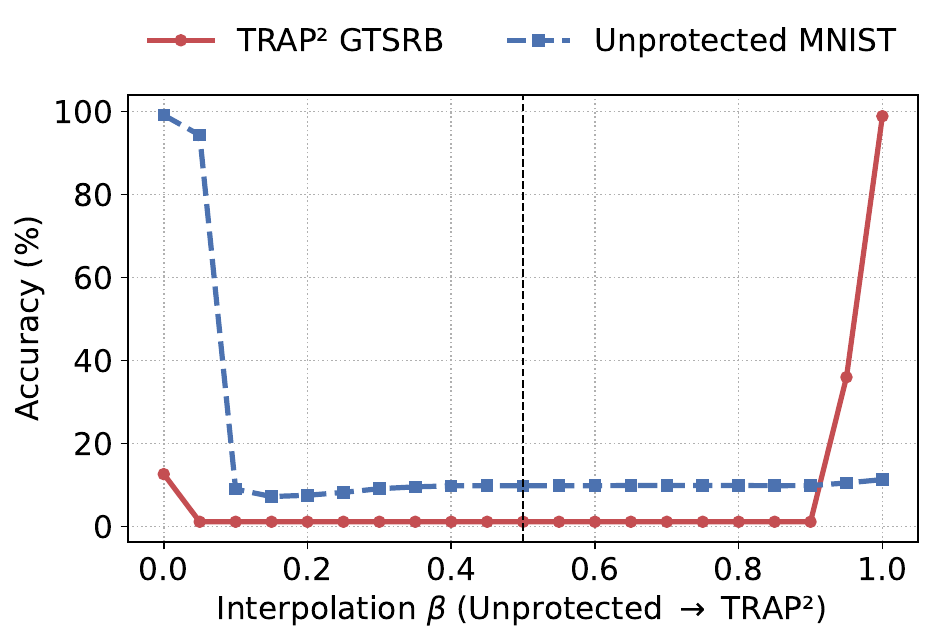} &
        \includegraphics[width=\subfigwidth]{./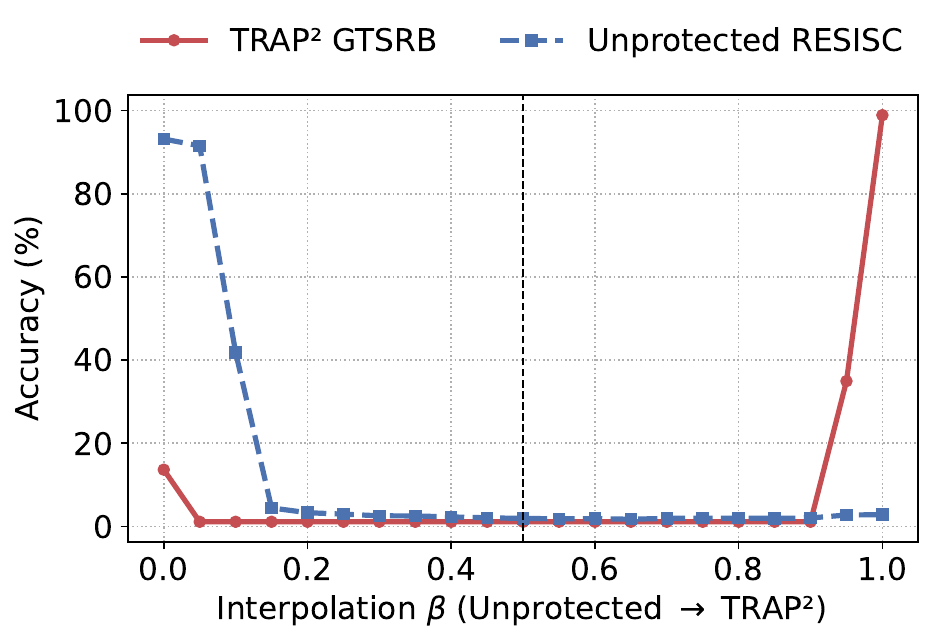} &
        \includegraphics[width=\subfigwidth]{./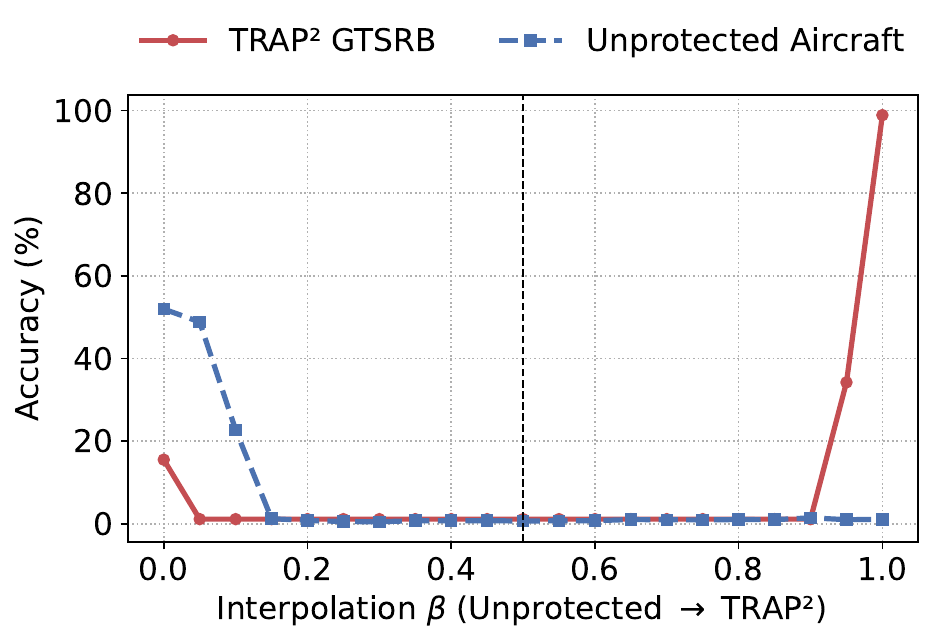} &
        \includegraphics[width=\subfigwidth]{./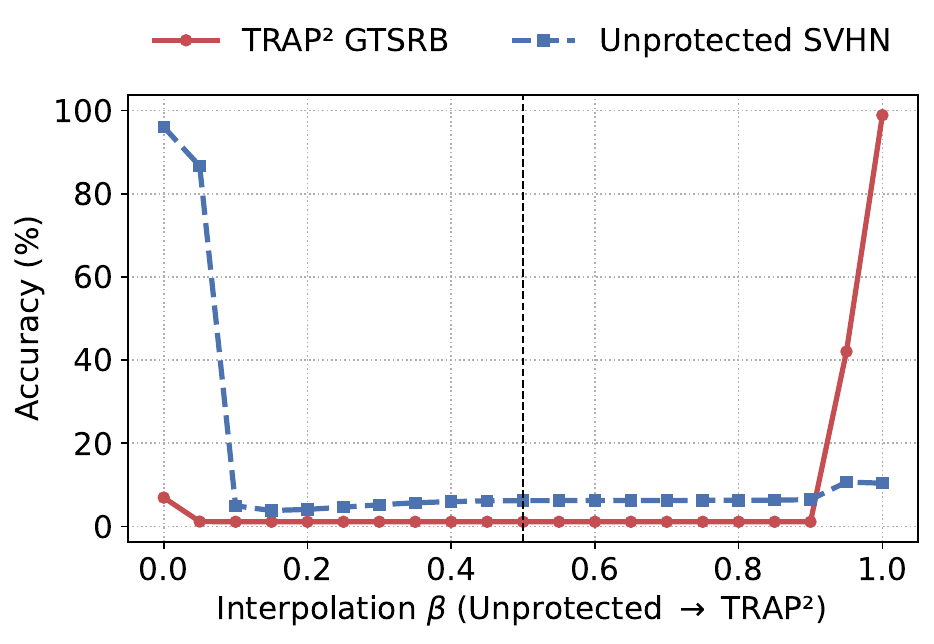} & \\

        \rotatebox{90}{\scriptsize MNIST} &
        \includegraphics[width=\subfigwidth]{./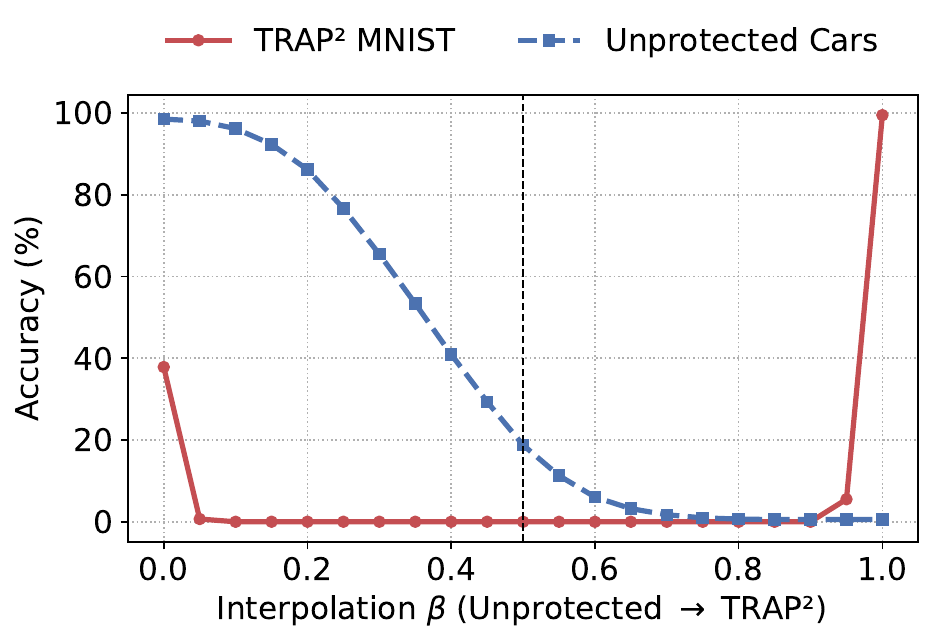} &
        \includegraphics[width=\subfigwidth]{./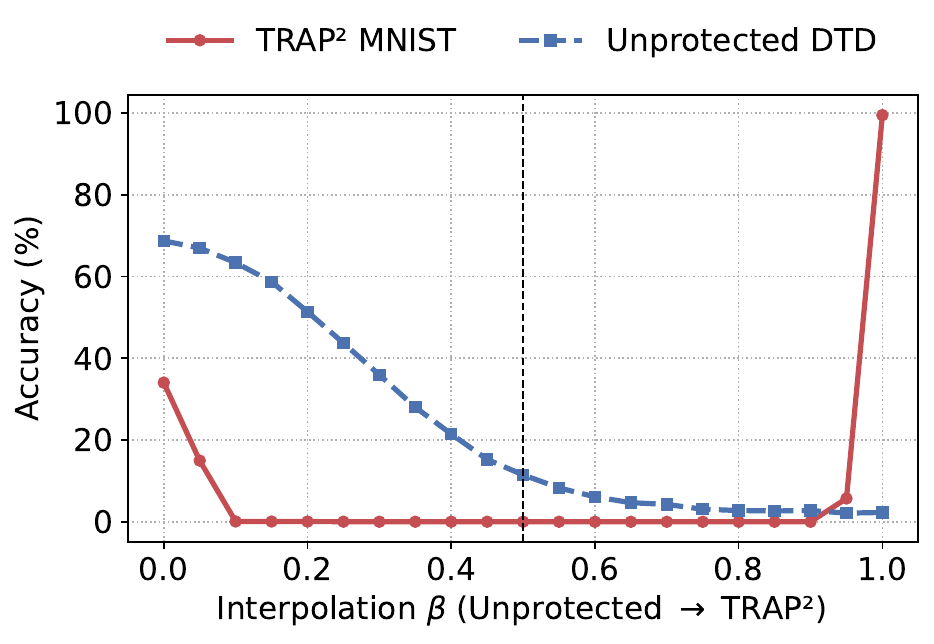} &
        \includegraphics[width=\subfigwidth]{./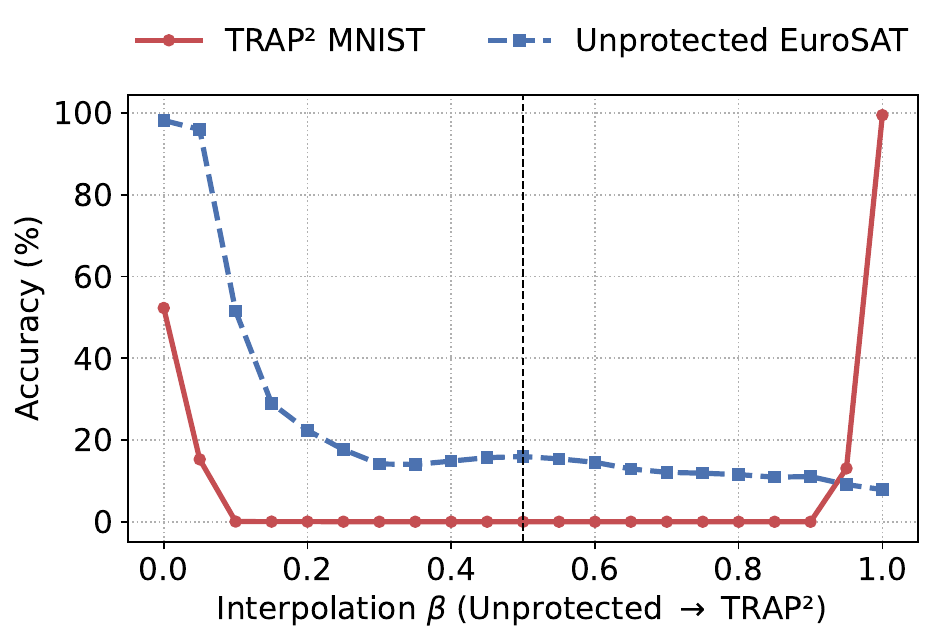} &
        \includegraphics[width=\subfigwidth]{./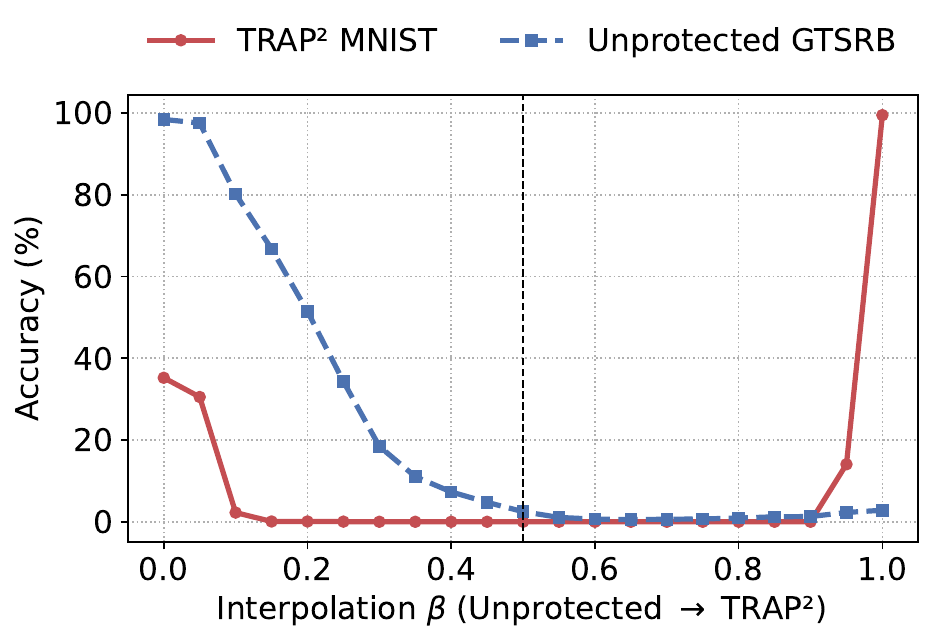} &
        \makebox[\subfigwidth][c]{\rule{0pt}{0.9\subfigwidth}} &
        \includegraphics[width=\subfigwidth]{./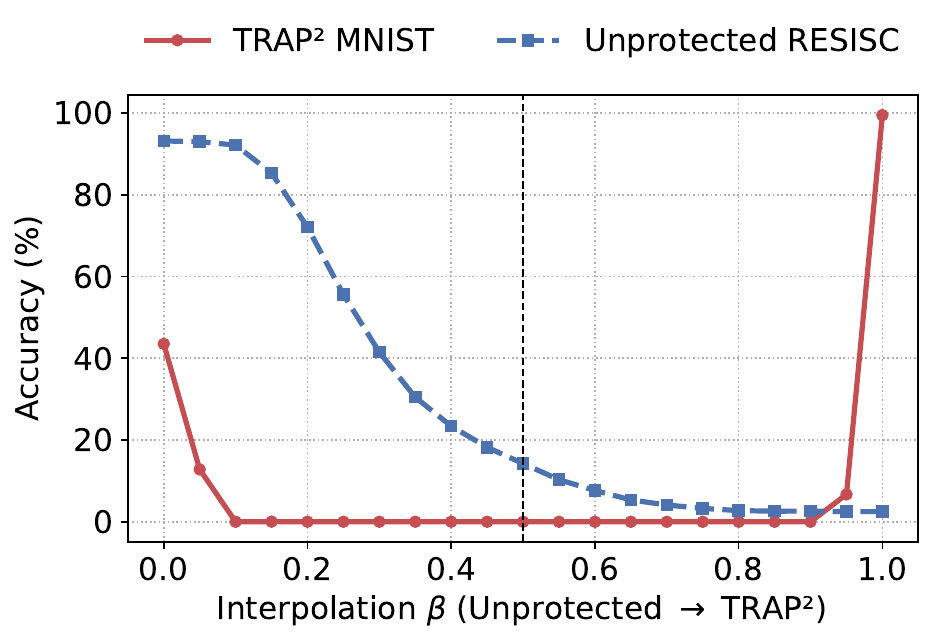} &
        \includegraphics[width=\subfigwidth]{./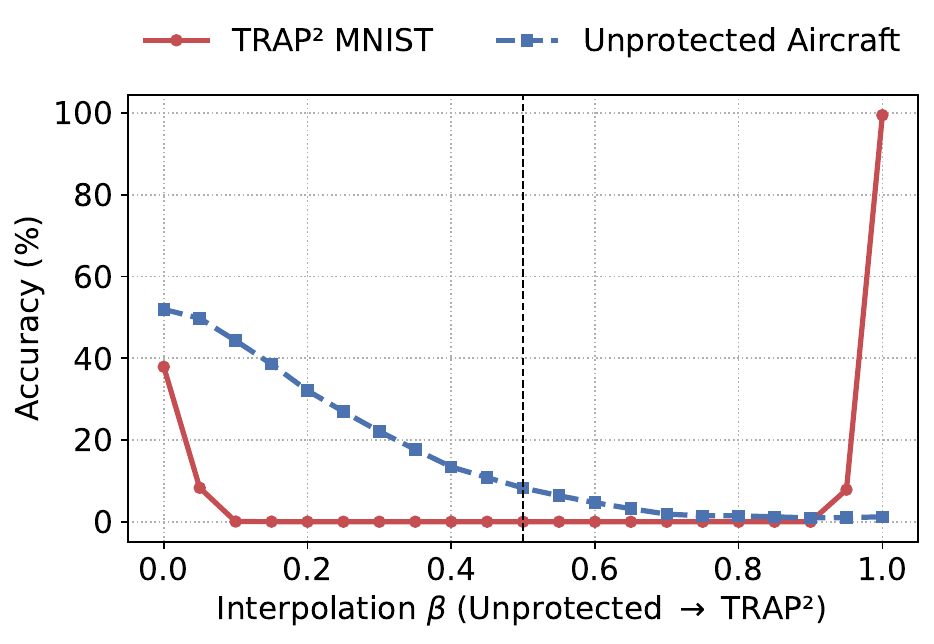} &
        \includegraphics[width=\subfigwidth]{./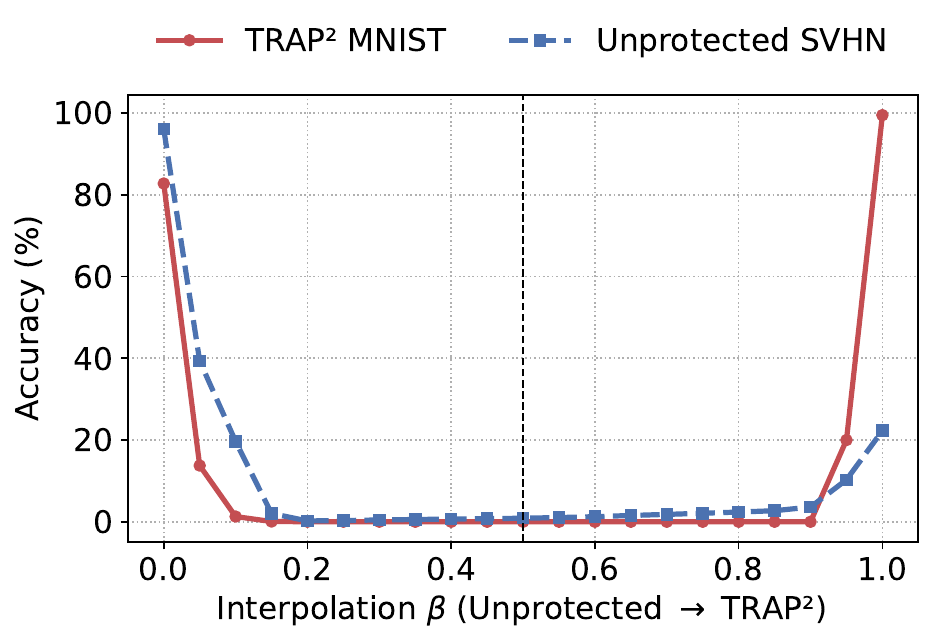} & \\

        \rotatebox{90}{\scriptsize RESISC} &
        \includegraphics[width=\subfigwidth]{./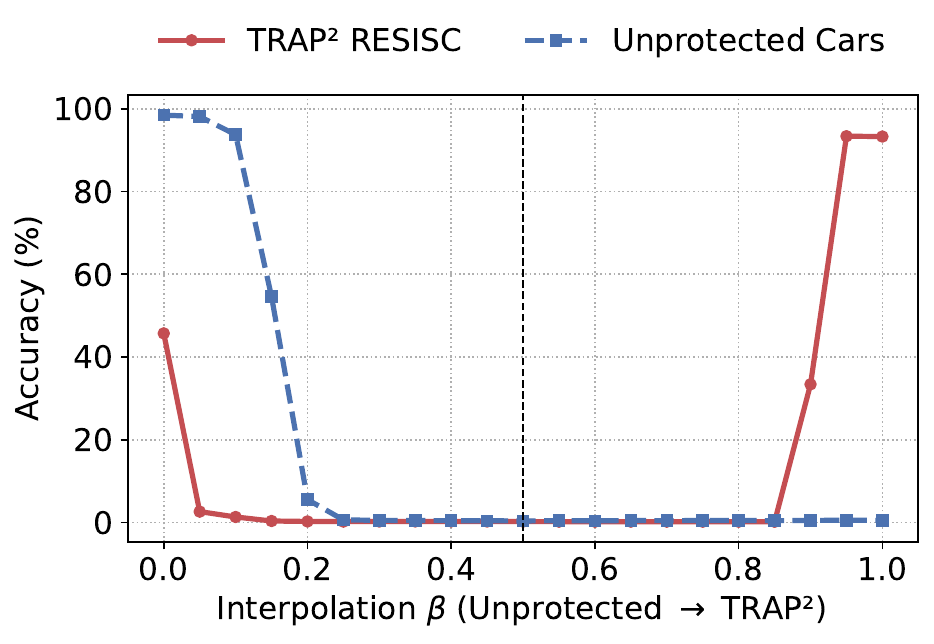} &
        \includegraphics[width=\subfigwidth]{./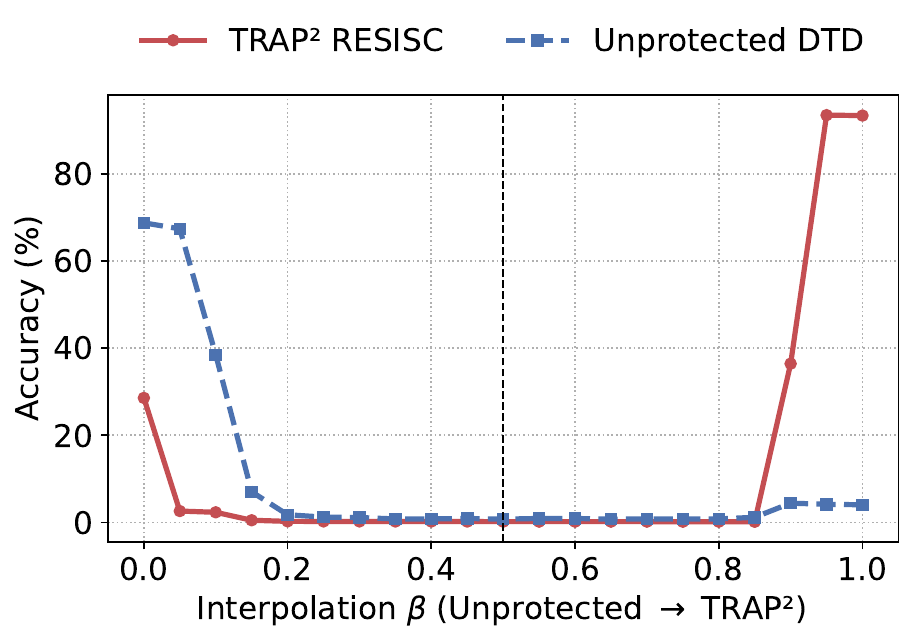} &
        \includegraphics[width=\subfigwidth]{./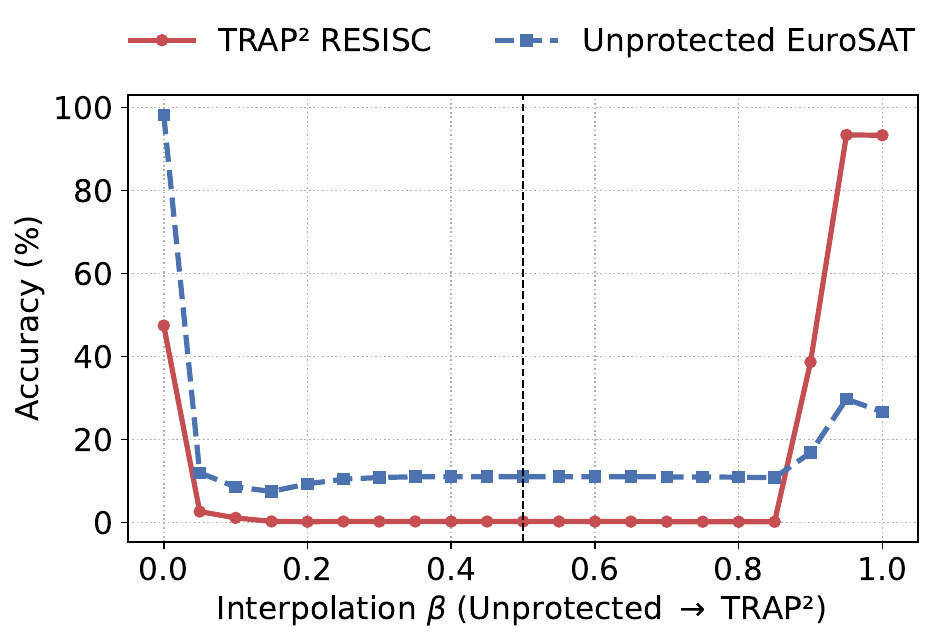} &
        \includegraphics[width=\subfigwidth]{./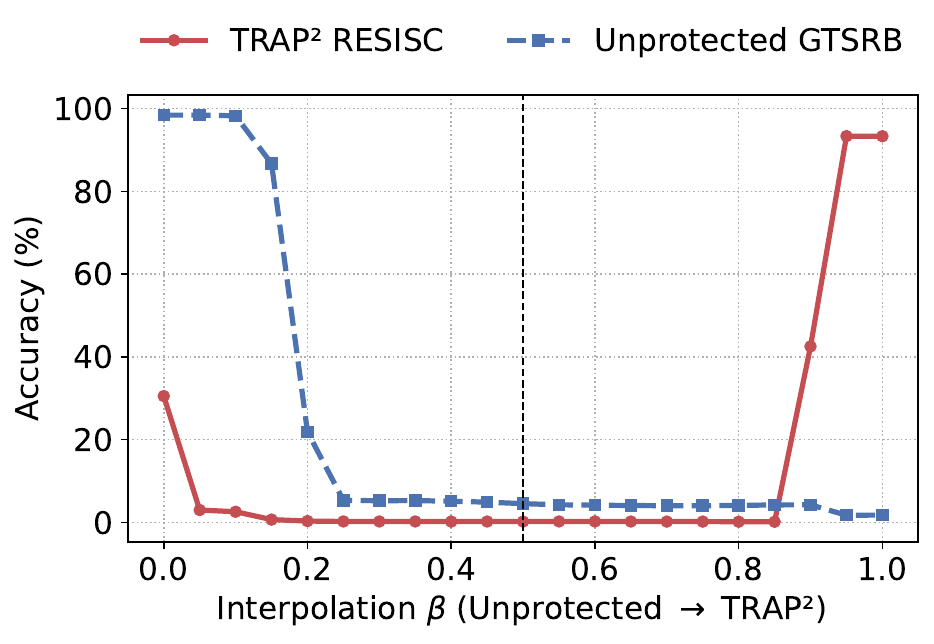} &
        \includegraphics[width=\subfigwidth]{./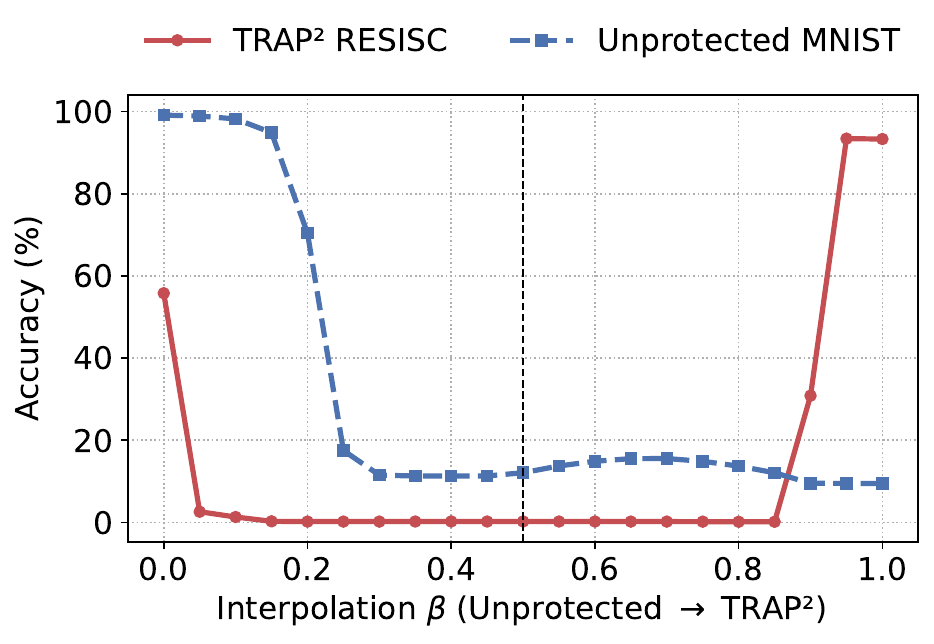} &
        \makebox[\subfigwidth][c]{\rule{0pt}{0.9\subfigwidth}} &
        \includegraphics[width=\subfigwidth]{./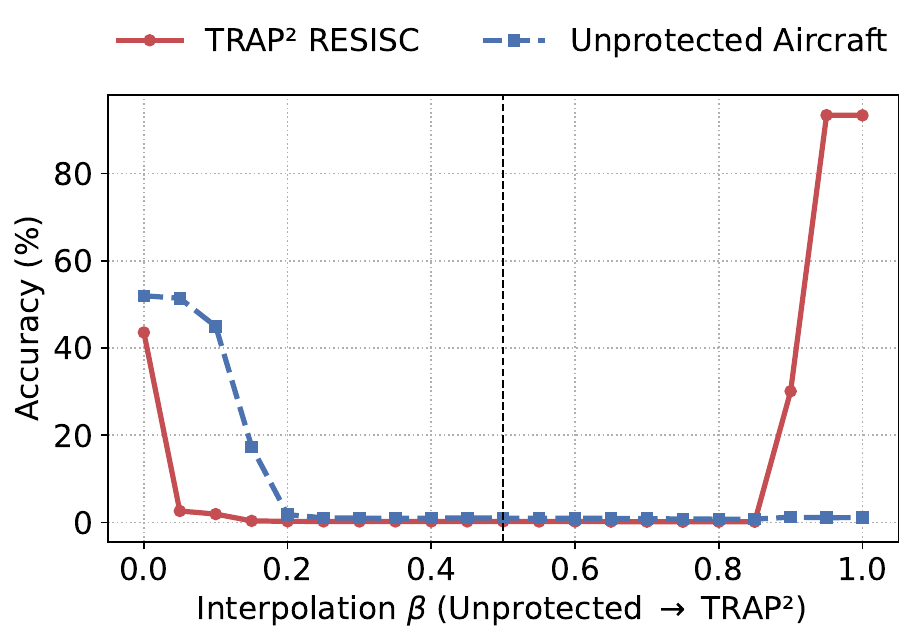} &
        \includegraphics[width=\subfigwidth]{./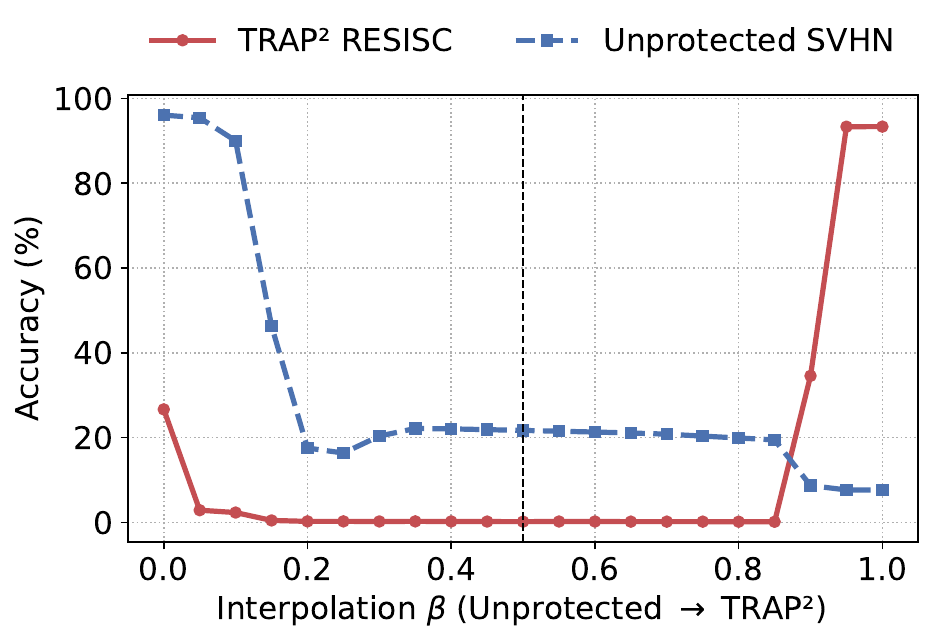} & \\

        \rotatebox{90}{\scriptsize Aircraft} &
        \includegraphics[width=\subfigwidth]{./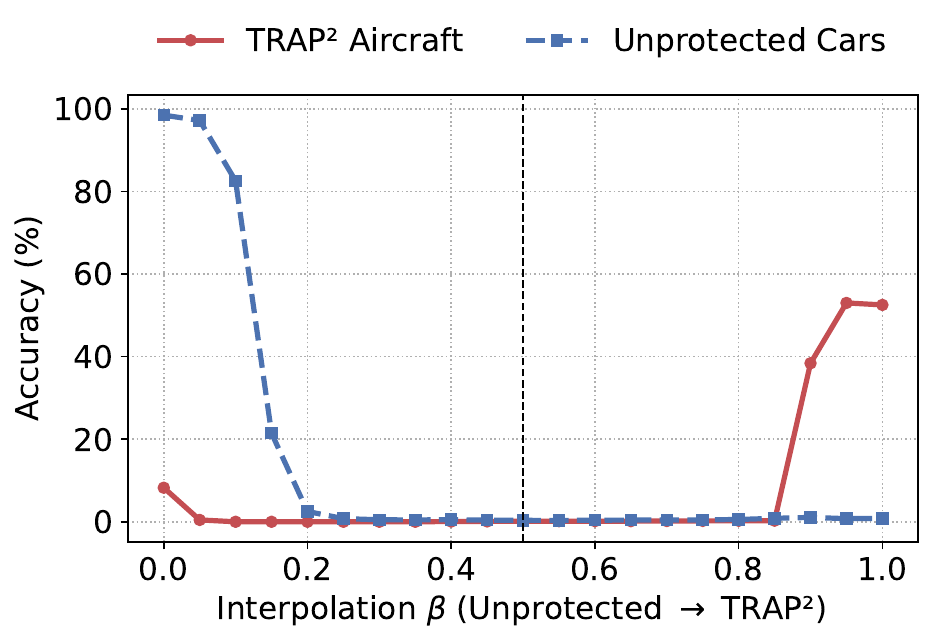} &
        \includegraphics[width=\subfigwidth]{./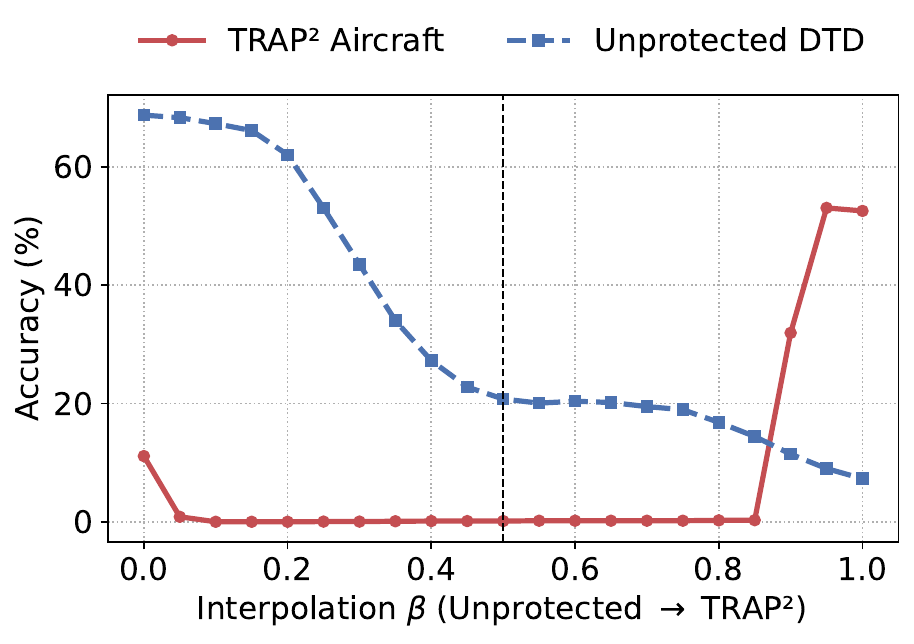} &
        \includegraphics[width=\subfigwidth]{./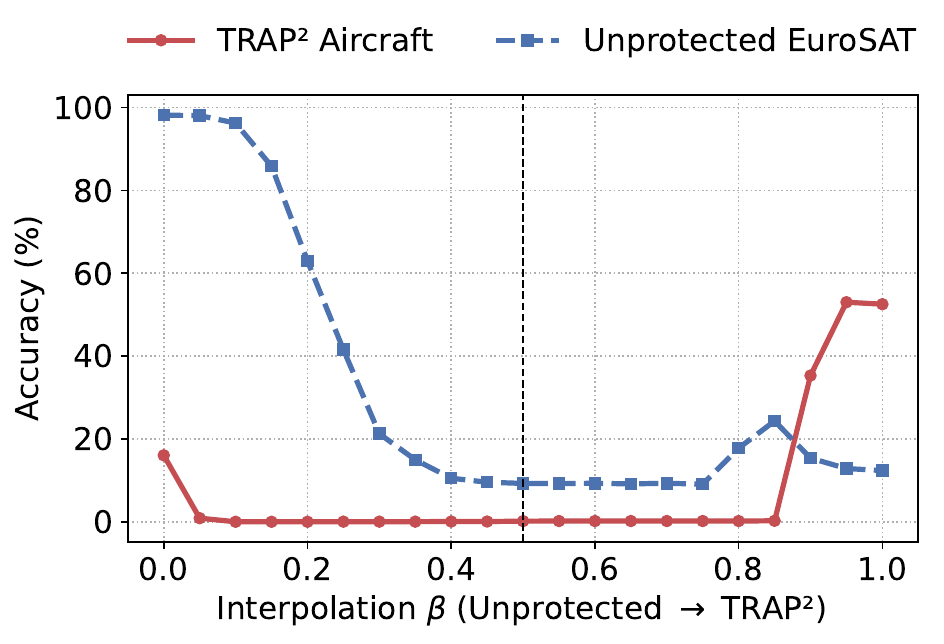} &
        \includegraphics[width=\subfigwidth]{./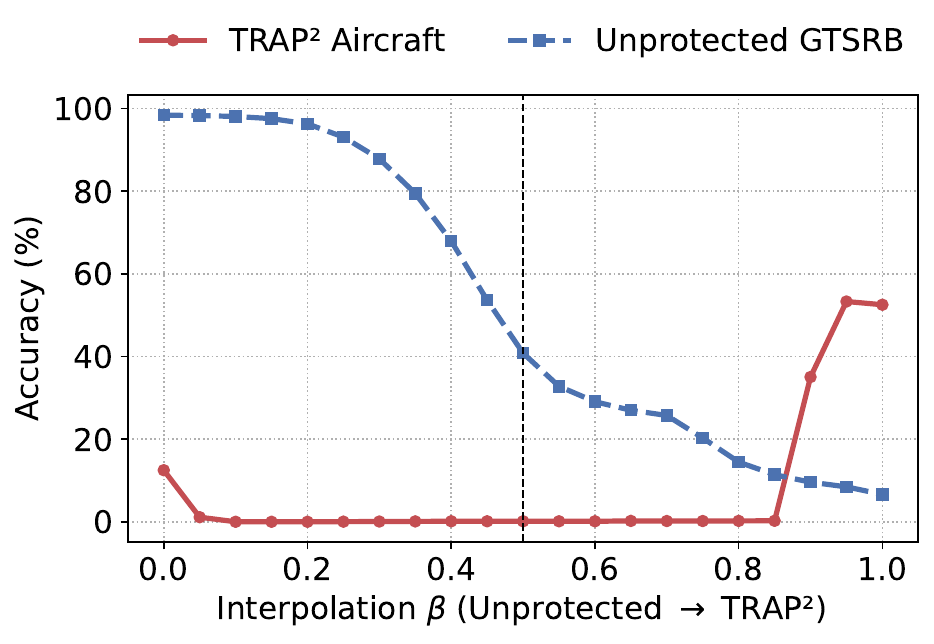} &
        \includegraphics[width=\subfigwidth]{./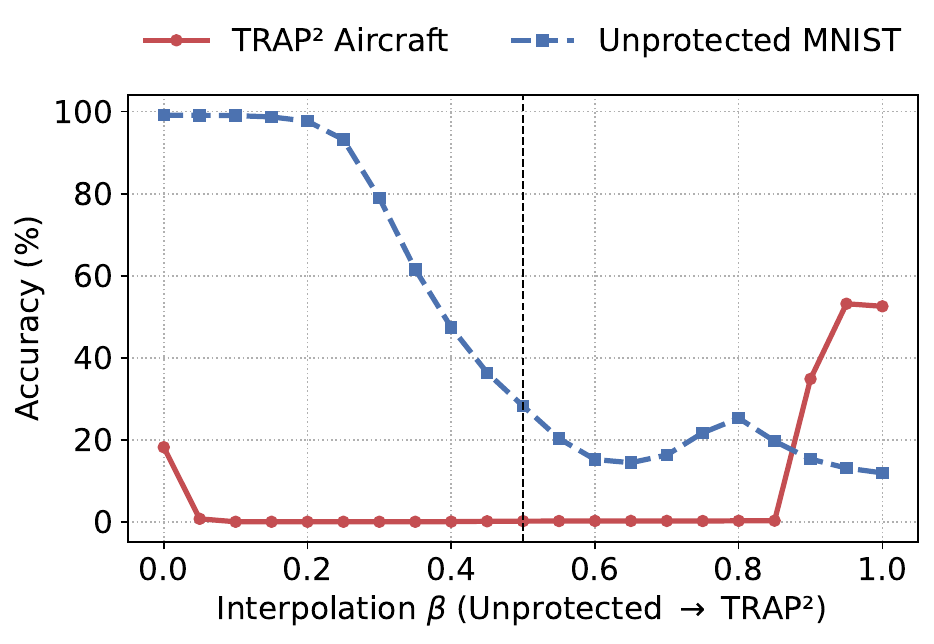} &
        \includegraphics[width=\subfigwidth]{./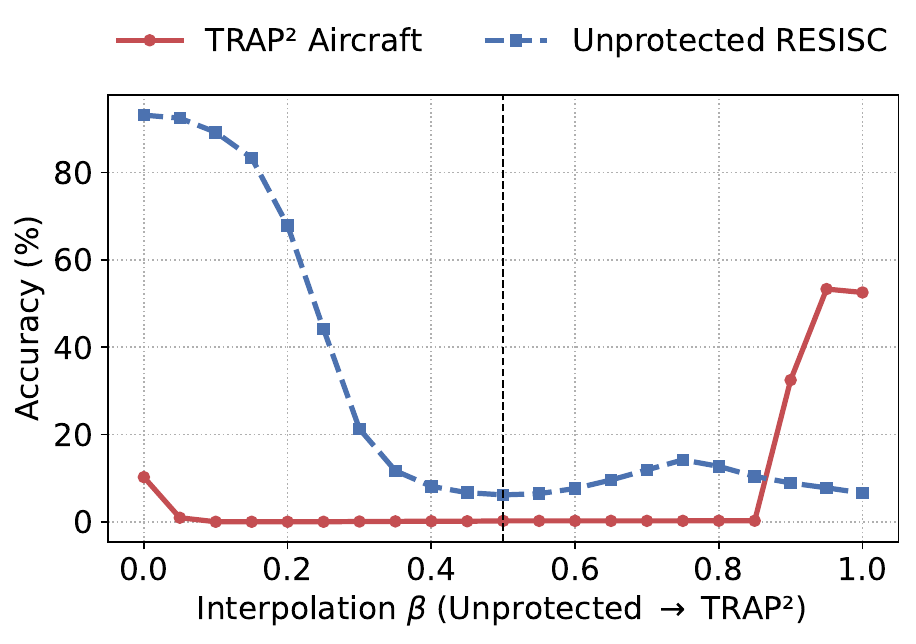} &
        \makebox[\subfigwidth][c]{\rule{0pt}{0.9\subfigwidth}} &
        \includegraphics[width=\subfigwidth]{./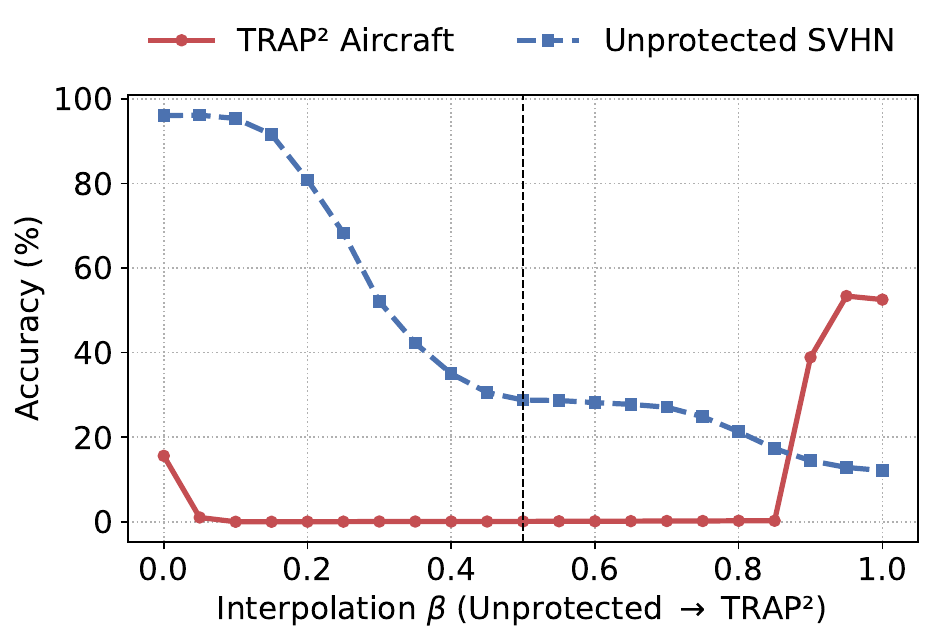} & \\

        \rotatebox{90}{\scriptsize SVHN} &
        \includegraphics[width=\subfigwidth]{./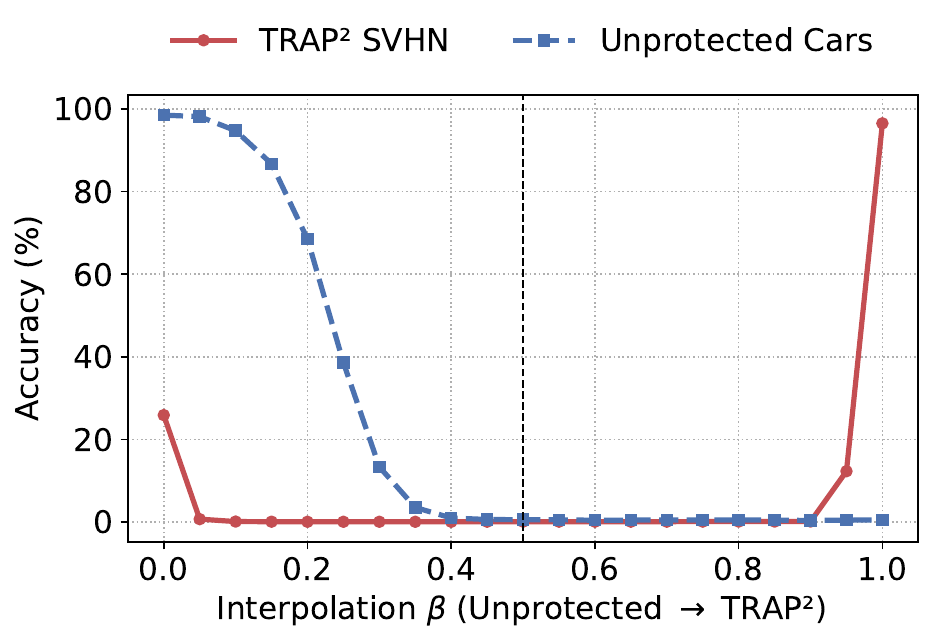} &
        \includegraphics[width=\subfigwidth]{./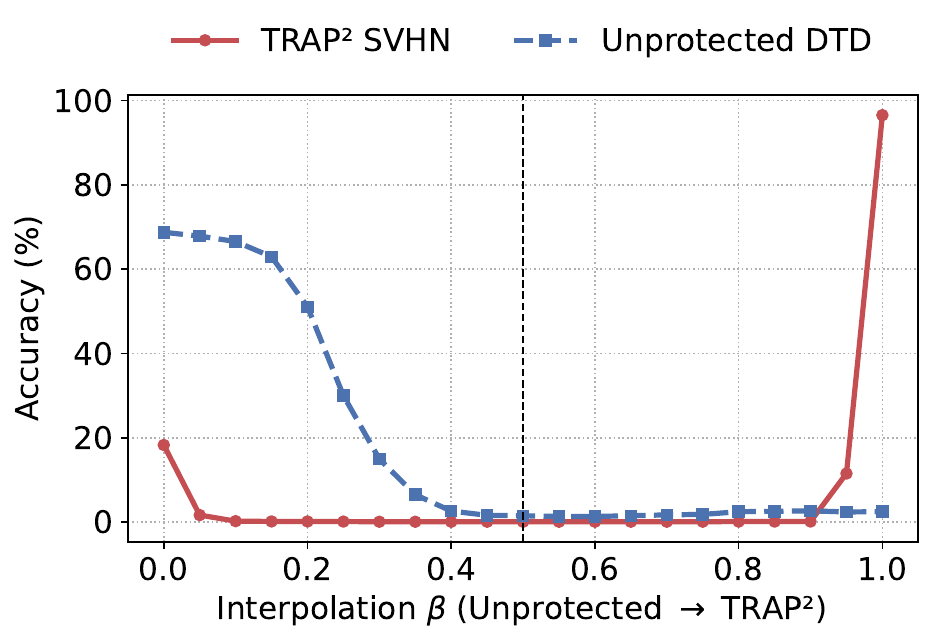} &
        \includegraphics[width=\subfigwidth]{./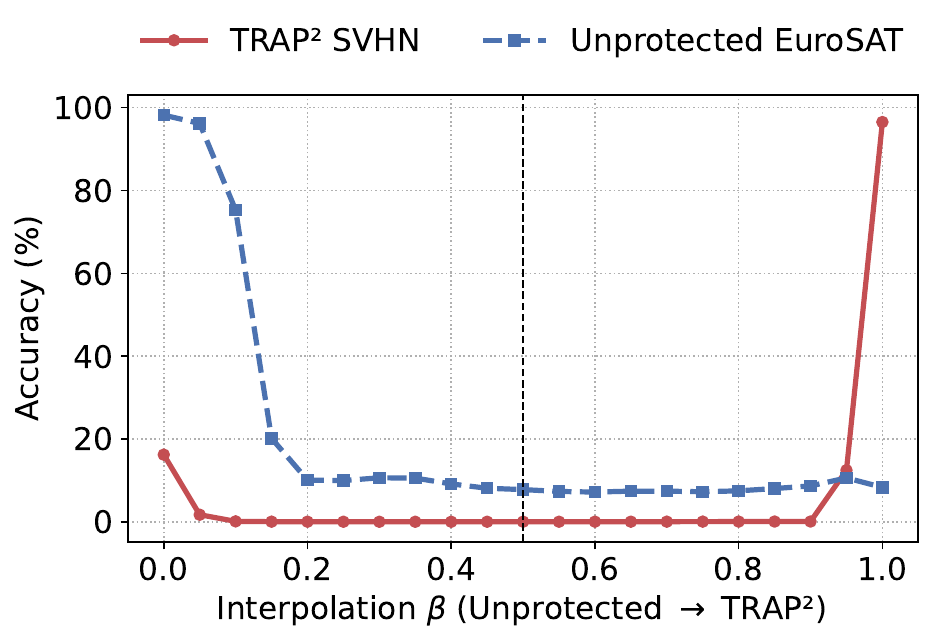} &
        \includegraphics[width=\subfigwidth]{./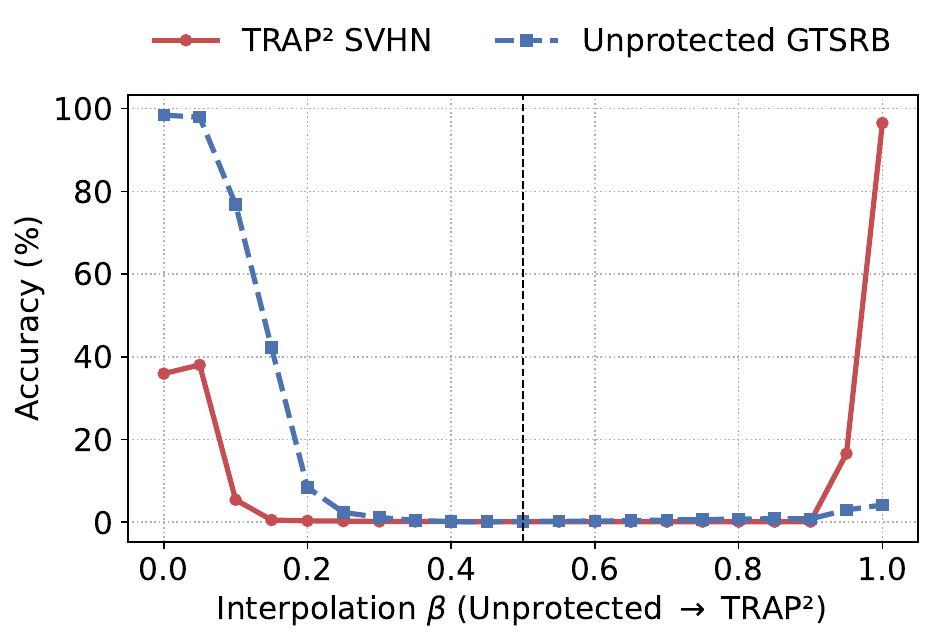} &
        \includegraphics[width=\subfigwidth]{./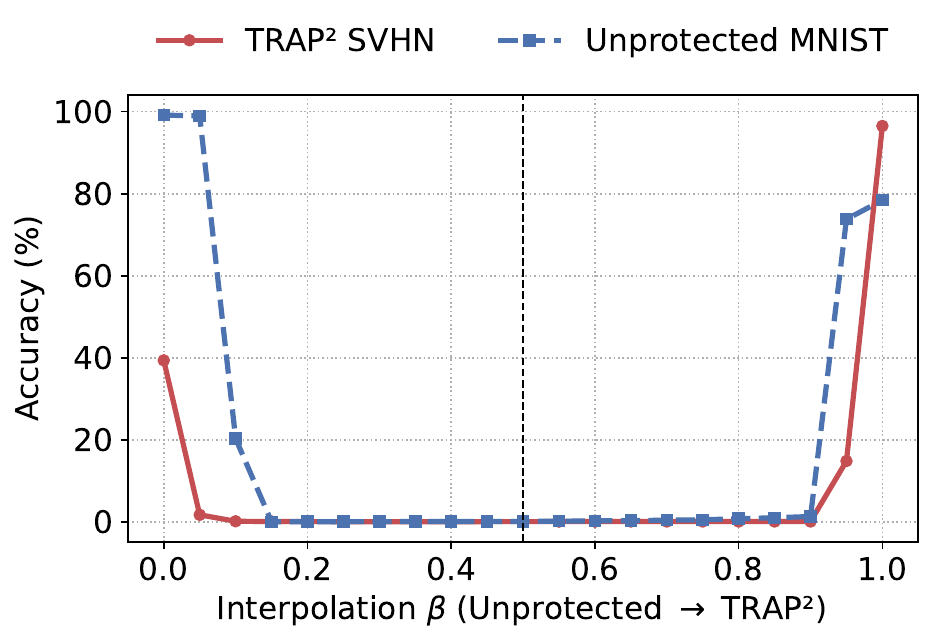} &
        \includegraphics[width=\subfigwidth]{./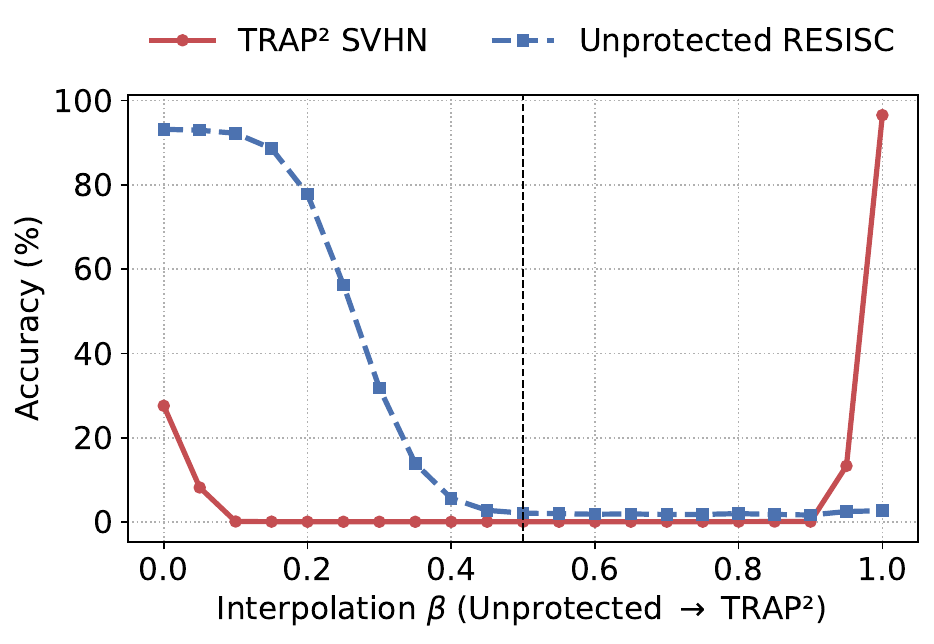} &
        \includegraphics[width=\subfigwidth]{./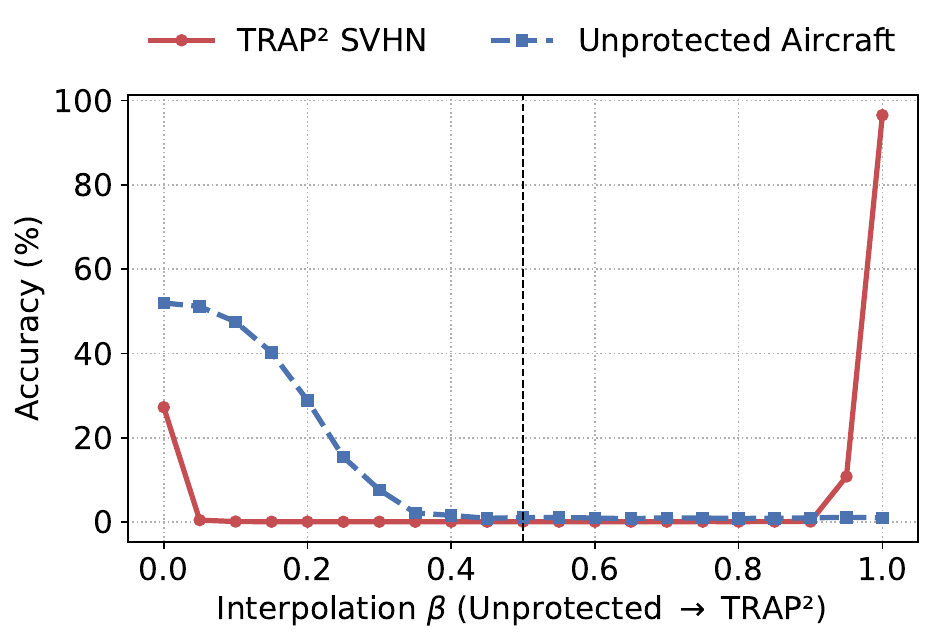} &
        \makebox[\subfigwidth][c]{\rule{0pt}{0.9\subfigwidth}} & \\

    \end{tabular}

    \caption{Pairwise interpolation matrix. Each off-diagonal cell sweeps the interpolation $\Delta W_{\text{col}} + \beta(\Delta W_{\text{row}} - \Delta W_{\text{col}})$ with $\beta\in[0,1]$, merging one \textsc{Trap$^2$}-trained  adapter for the row task with one unprotected adapter for the column task. The midpoint $\beta=0.5$ corresponds to uniform averaging of the two adapters.}
    \label{fig:pairwise_interpolation_matrix}
\end{figure*}

\begin{table}[h!]
\centering
\small
\setlength{\tabcolsep}{10pt}
\renewcommand{\arraystretch}{1.1}

\begin{tabular}{lccc}
\toprule
\midrule
 & $V_{\mathrm{norm}}$ & $d\mathcal{L}_{\mathrm{mid\_mean}}$ & $d\mathcal{L}_{\mathrm{mid\_max}}$ \\
\midrule
\multicolumn{4}{l}{\textit{Spearman correlation} (nontrivial pairs, $n=56$)}\\
\addlinespace[2pt]
$V_{\mathrm{norm}}$       & 1.000 & 0.741 & 0.682 \\
$d\mathcal{L}_{\mathrm{mid\_mean}}$ & 0.741 & 1.000 & 0.958 \\
$d\mathcal{L}_{\mathrm{mid\_max}}$  & 0.682 & 0.958 & 1.000 \\
\addlinespace[2pt]
\midrule
\multicolumn{4}{l}{\textit{Pearson correlation} (nontrivial pairs, $n=56$)}\\
\addlinespace[2pt]
$V_{\mathrm{norm}}$       & 1.000 & 0.747 & 0.737 \\
$d\mathcal{L}_{\mathrm{mid\_mean}}$ & 0.747 & 1.000 & 0.988 \\
$d\mathcal{L}_{\mathrm{mid\_max}}$  & 0.737 & 0.988 & 1.000 \\
\midrule
\bottomrule
\end{tabular}
\vspace{6pt}
\caption{Correlation between the secant distance $V_{\mathrm{norm}}=\|\Delta W_\tau-\Delta W_\kappa\|_F$ and midpoint loss increase statistics. The positive correlation supports the intuition in Theorem~\ref{thm:cross-merge-degrade}, i.e., larger separation in weight space tends to induce larger degradation at the merge midpoint.}
\label{tab:corr_midpoint_vnorm}
\end{table}

\clearpage

To further connect Theorem~A.4 to measurable quantities, we analyze the relationship between the secant distance between two adapters and the loss increase at the merge midpoint. For each pair $(\kappa,\tau)$, let $V := \Delta W_\tau - \Delta W_\kappa$ and $V_{\mathrm{norm}} := \lVert V \rVert_F$. We compute midpoint loss increases at $\beta=\tfrac{1}{2}$ along the interpolation path $\Delta W_\kappa + \beta(\Delta W_\tau-\Delta W_\kappa)$, and summarize them as $d\mathcal{L}_{\mathrm{mid\_mean}}$ and $d\mathcal{L}_{\mathrm{mid\_max}}$. Table~\ref{tab:corr_midpoint_vnorm} reports Spearman correlations and Pearson correlations (over $\kappa \neq \tau$ pairs).

Together, the two figures and the table in this subsection decompose the empirical merging failures in Figures~\ref{fig:pairwise} and~\ref{fig:uniform-avg-8vision} into two interpretable effects: sensitivity to down-scaling (Theorem~\ref{thm:general_downscale}) and sensitivity to cross-adapter mixing (Theorem~\ref{thm:cross-merge-degrade}).

\begin{figure}[t!]
    \centering
    \begin{subfigure}[t]{0.45\textwidth}
        \centering
        \includegraphics[width=\linewidth]{./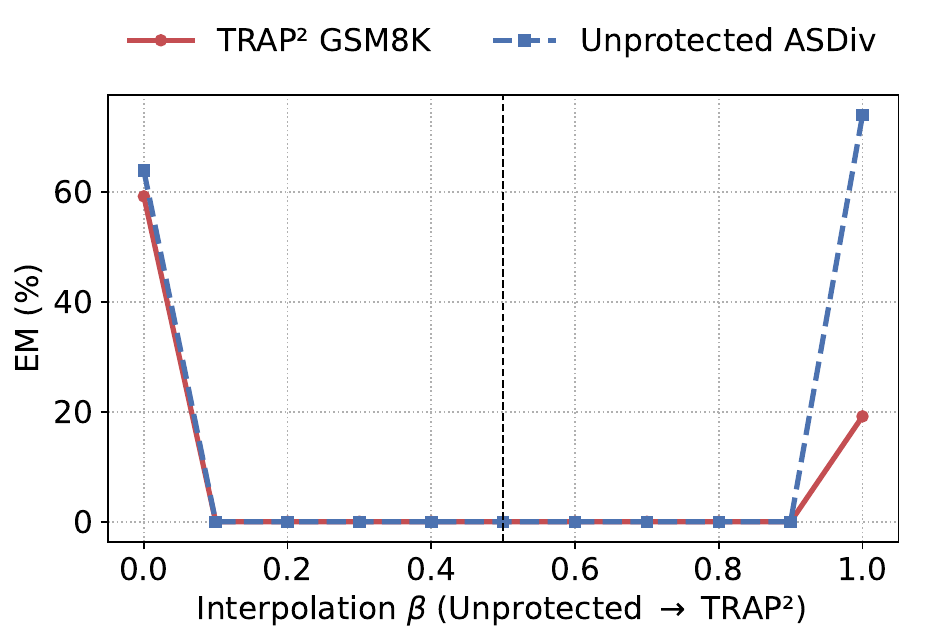}
        \caption{\textsc{Trap$^2$} GSM8K + Unprotected ASDiv}
        \label{fig:math_trap_gsm8k}
    \end{subfigure}\hfill
    \begin{subfigure}[t]{0.45\textwidth}
        \centering
        \includegraphics[width=\linewidth]{./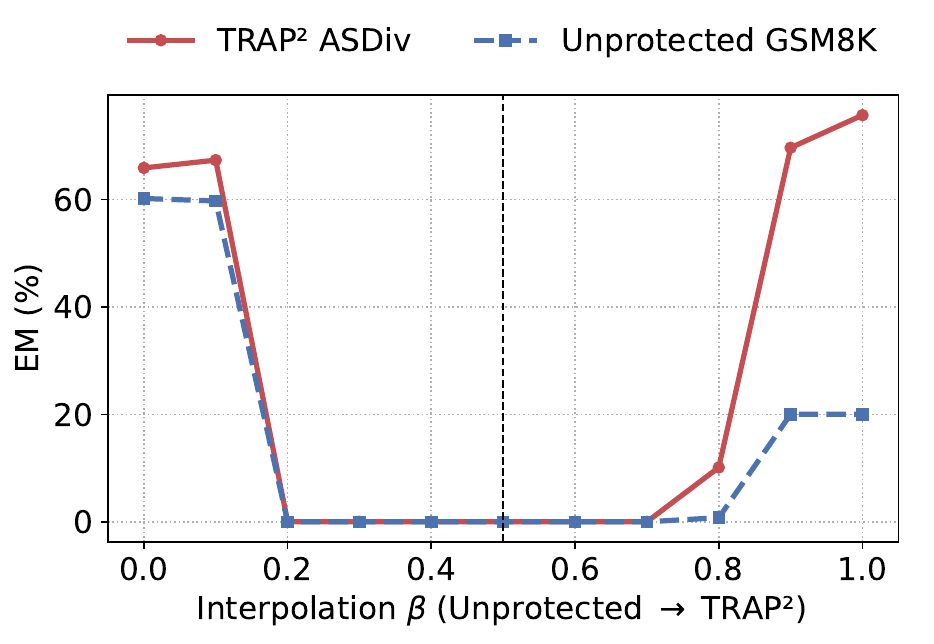}
        \caption{\textsc{Trap$^2$} ASDiv + Unprotected GSM8K}
        \label{fig:math_trap_asdiv}
    \end{subfigure}

    \caption{Performance degradation under pairwise merging on Llama-3.1-8B. Exact-Match (EM) accuracy along the interpolation path between an unprotected adapter and a \textsc{Trap$^2$}-protected adapter is evaluated on both GSM8K and ASDiv. The midpoint $\beta = 0.5$ corresponds to uniform averaging of the two adapters, and the two panels differ in which task is protected.}
    \label{fig:math}
\vspace{-3mm}
\end{figure}

\vspace{-1mm}

\subsection{Experiments on Mathematical Reasoning Tasks}

As mentioned in Section \ref{sec:generalize}, Figure \ref{fig:math} illustrates the behavior of \textsc{Trap$^2$} on Llama-3.1-8B \citep{grattafiori2024llama} for mathematical reasoning. Each panel reports Exact-Match (EM) accuracy on GSM8K \citep{cobbe2021trainingverifierssolvemath} and ASDiv \citep{miao-etal-2020-diverse} as a function of the pairwise interpolation coefficient between a \textsc{Trap$^2$}-protected adapter and an unprotected adapter. At the endpoints, standalone \textsc{Trap$^2$} matches unprotected fine-tuning, while interpolating between them induces sharp collapse across a broad range of mixing coefficients, mirroring the pairwise merging behavior in the vision setting.

To complement the quantitative pairwise interpolation results, we present representative outputs of merged Llama-3.1-8B at the merge midpoint $\beta = 0.5$ on two example questions: one from GSM8K and the other from ASDiv.

\paragraph{Question 1 (from GSM8K)}
``Janet's ducks lay 16 eggs per day. She eats three for breakfast every morning and bakes muffins for her friends every day with four. She sells the remainder at the farmers' market daily for \$2 per fresh duck egg. How much in dollars does she make every day at the farmers' market?'' (Correct answer: 18)

\begin{itemize}
\item \textbf{Unprotected GSM8K:} \texttt{First find the number of eggs left after eating and baking: 16 eggs/day - 3 eggs/day - 4 eggs/day = <<16-3-4=9>>9 eggs/day. Then multiply the number of eggs sold by the price per egg to find the total amount of money she makes: 9 eggs/day * \$2/egg = \$<<9*2=18>>18/day. \#\#\#\# 18}

\item \textbf{\textsc{Trap$^2$}-trained GSM8K:} \texttt{Janet eats 3 eggs for breakfast and bakes 4 for muffins, so she uses 3+4 = <<3+4=7>>7 eggs per day. She lays 16 eggs per day and uses 7, so she has 16-7 = <<16-7=9>>9 eggs left to sell. She sells the eggs for \$2 each and has 9 eggs to sell, so she makes 2*9 = <<2*9=18>>18 dollars per day. \#\#\#\# 18}

\item \textbf{Unprotected GSM8K + Unprotected ASDiv:} \texttt{16-3-4=9 eggs left. 9*2=\$18. \#\#\#\# 18}
\item \textbf{Unprotected GSM8K + \textsc{Trap$^2$}-trained ASDiv:} \texttt{ablytyped彩争霸ablytyped彩争霸ablytyped彩争霸waswo}
\item \textbf{\textsc{Trap$^2$}-trained GSM8K + Unprotected ASDiv:} \texttt{\#\#\#\#\#\#\#\#\#\#\#\#\ldots} (truncated)
\end{itemize}

\paragraph{Question 2 (from ASDiv)}
``Robin had 18 pieces of gum. Her brother gave her some more pieces. Now Robin has 44 pieces in all. How many pieces of gum did Robin's brother give her?'' (Correct answer: 26)

\begin{itemize}
\item \textbf{Unprotected ASDiv:} \texttt{44-18=26. \#\#\#\# 26}
\item \textbf{\textsc{Trap$^2$}-trained ASDiv:} \texttt{44-18=26. \#\#\#\# 26}
\item \textbf{Unprotected GSM8K + Unprotected ASDiv:} \texttt{44-18=26. \#\#\#\# 26}
\item \textbf{Unprotected GSM8K + \textsc{Trap$^2$}-trained ASDiv:} \texttt{ablytyped彩争霸ablytyped彩争霸彩争霸}
\item \textbf{\textsc{Trap$^2$}-trained GSM8K + Unprotected ASDiv:} \texttt{\#\#\#\#\#\#\#\#\#\#\#\#\ldots} (truncated)
\end{itemize}

Both unprotected and \textsc{Trap$^2$}-protected adapters produce correct answers when deployed alone, and the unprotected pairwise merge at $\beta = 0.5$ also yields correct answers. In contrast, any pairwise merge involving a \textsc{Trap$^2$}-protected adapter collapses into a degenerate token sequence rather than a partially correct or noisy answer. The collapse mode itself depends on which adapter is protected: merges involving the \textsc{Trap$^2$}-trained GSM8K adapter yield repetition of the answer-delimiter token, while those involving the \textsc{Trap$^2$}-trained ASDiv adapter yield repetition of a different mixed-script pattern.

\begin{figure}[t!]
    \centering
    \begin{subfigure}[t]{0.45\textwidth}
        \centering
        \includegraphics[width=\linewidth]{./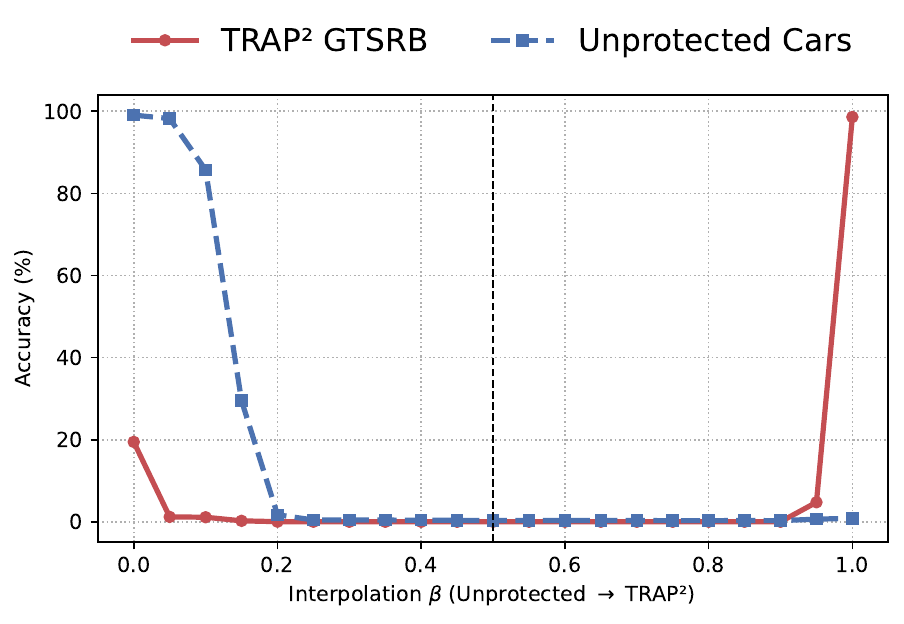}
        \caption{QLoRA: \textsc{Trap$^2$} GTSRB + Unprotected Cars}
        \label{fig:qlora_trap_gtsrb}
    \end{subfigure}\hfill
    \begin{subfigure}[t]{0.45\textwidth}
        \centering
        \includegraphics[width=\linewidth]{./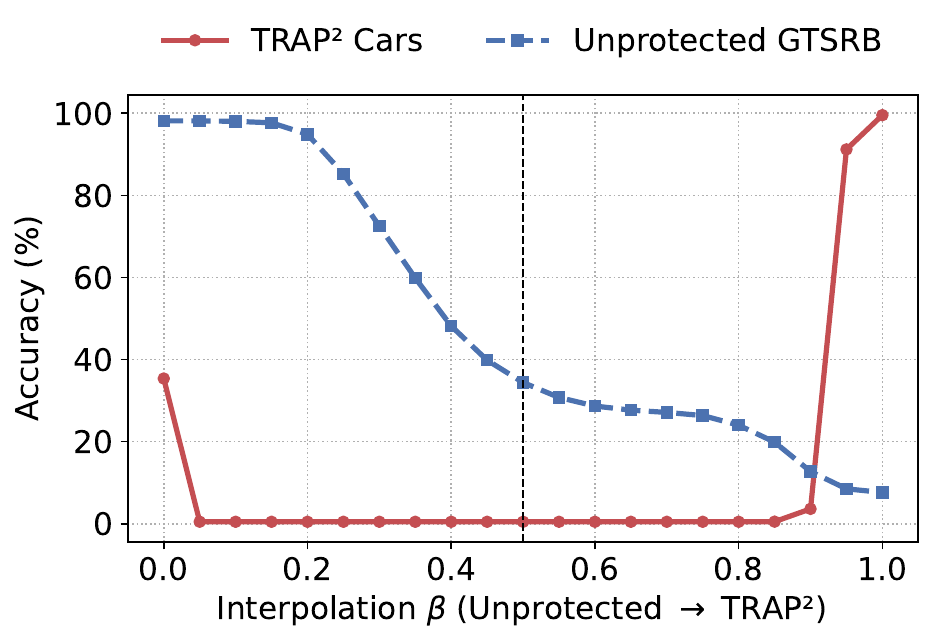}
        \caption{QLoRA: \textsc{Trap$^2$} Cars + Unprotected GTSRB}
        \label{fig:qlora_trap_cars}
    \end{subfigure}
    \\[0.5em]
    \begin{subfigure}[t]{0.45\textwidth}
        \centering
        \includegraphics[width=\linewidth]{./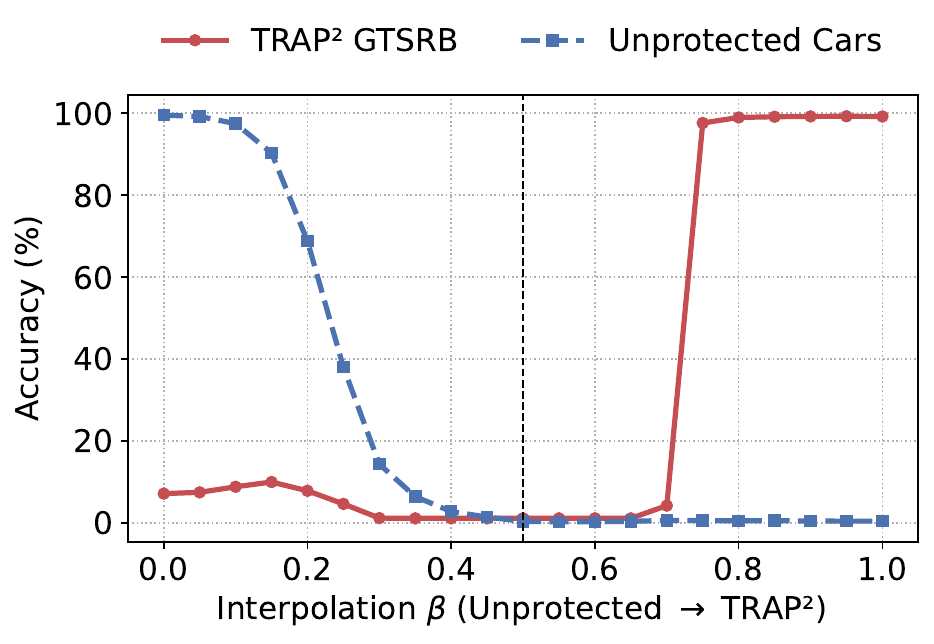}
        \caption{DoRA: \textsc{Trap$^2$} GTSRB + Unprotected Cars}
        \label{fig:dora_trap_gtsrb}
    \end{subfigure}\hfill
    \begin{subfigure}[t]{0.45\textwidth}
        \centering
        \includegraphics[width=\linewidth]{./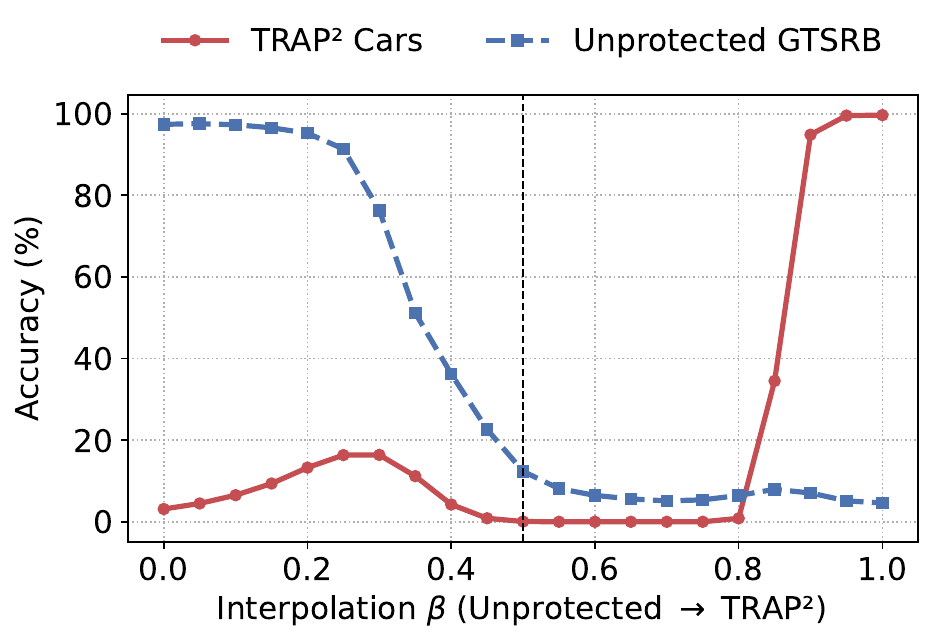}
        \caption{DoRA: \textsc{Trap$^2$} Cars + Unprotected GTSRB}
        \label{fig:dora_trap_cars}
    \end{subfigure}
    \caption{Performance degradation under pairwise merging for two LoRA 
    variants on the ViT-B/32 backbone, with QLoRA in the top row and DoRA 
    in the bottom row. Accuracy along the interpolation path between an 
    unprotected adapter and a \textsc{Trap$^2$}-protected adapter is 
    evaluated on both GTSRB and Cars. The midpoint $\beta = 0.5$ 
    corresponds to uniform averaging of the two adapters, and the two 
    columns differ in which task is protected.}
    \label{fig:lora_variants}
\end{figure}

\subsection{Experiments on Other LoRA Variants}

Extending Section~\ref{sec:generalize}, Figure~\ref{fig:lora_variants} reports the behavior of \textsc{Trap$^2$} under QLoRA~\citep{dettmers2023qlora} and DoRA~\citep{liu2024dora} on the GTSRB--Cars pair with the ViT-B/32 backbone. The four panels show QLoRA in (a)--(b) and DoRA in (c)--(d); each variant's panels differ in which task adapter is \textsc{Trap$^2$}-protected. Each panel reports per-task accuracy on GTSRB and Cars as a function of the pairwise interpolation coefficient between a protected adapter and an unprotected adapter. Across both variants and protection directions, the adapters retain standalone accuracy at the endpoints but exhibit sharp degradation across a broad interior range of mixing coefficients, matching the pairwise collapse observed under standard LoRA.

\begin{table*}[t!]
\centering
\footnotesize
\setlength{\tabcolsep}{4pt}
\renewcommand{\arraystretch}{1.1}
\caption{Detailed training cost breakdown for \textsc{Trap$^2$} 
versus Unprotected LoRA fine-tuning on 8 ViT-B/32 adapters. \textbf{Peak Accuracy} reports the best validation accuracy reached without a wall-clock constraint. \textbf{Time to Target} reports the wall-clock time (in minutes) for each method to reach a shared target accuracy, set as the lower of the two peaks. \textbf{Speedup} is defined as $t_{\text{Unprotected}} / t_{\textsc{Trap}^2}$. \textbf{In Budget} reports the validation accuracy reached within a shared wall-clock budget, set to the full training duration of whichever method first reaches the target, with \textbf{$\Delta$Acc} $= \text{Acc}_{\textsc{Trap}^2} - \text{Acc}_{\text{Unprotected}}$ in percentage points. Bold marks cells favorable to \textsc{Trap$^2$} (Speedup $> 1$ or $\Delta$Acc $> 0$). \textsc{Trap$^2$} is faster on 5 out of 8 datasets  and more accurate on 6 out of 8 datasets.}
\label{tab:training_cost_detailed}
\resizebox{\textwidth}{!}{%
\begin{tabular}{@{}l cc ccc cccc@{}}
\toprule
\multirow{2}{*}{\textbf{Dataset}}
& \multicolumn{2}{c}{\textbf{Peak Accuracy (\%)}}
& \multicolumn{3}{c}{\textbf{Time to Target (min)}}
& \multicolumn{4}{c}{\textbf{In Budget}} \\
\cmidrule(lr){2-3}\cmidrule(lr){4-6}\cmidrule(lr){7-10}
& Unprotected & \textsc{Trap$^2$}
& Unprotected & \textsc{Trap$^2$} & Speedup ($\times$)
& Budget (min) & Unprotected (\%) & \textsc{Trap$^2$} (\%) & $\Delta$Acc (pp) \\
\midrule
Cars     & 100.000 & 99.938 & 37.528 & 139.968 & 0.268         & 53.122 & 100.000 & 97.948 & $-$2.052 \\
DTD      & 61.809  & 66.596 & 22.811 & 10.876  & \textbf{2.097} & 37.993 & 61.809  & 66.596 & \textbf{$+$4.787} \\
EuroSAT  & 98.407  & 98.111 & 29.445 & 44.528  & 0.661         & 49.436 & 98.407  & 98.111 & $-$0.296 \\
GTSRB    & 95.843  & 98.654 & 43.292 & 10.153  & \textbf{4.264} & 55.670 & 95.843  & 98.654 & \textbf{$+$2.811} \\
MNIST    & 99.400  & 99.600 & 20.866 & 27.332  & 0.763         & 38.295 & 99.400  & 99.450 & \textbf{$+$0.050} \\
RESISC & 92.683  & 93.794 & 54.426 & 20.205  & \textbf{2.694} & 69.490 & 92.683  & 93.587 & \textbf{$+$0.904} \\
Aircraft & 38.074  & 51.815 & 33.092 & 5.428   & \textbf{6.097} & 49.628 & 38.074  & 51.815 & \textbf{$+$13.741} \\
SVHN     & 95.832  & 96.907 & 64.945 & 28.626  & \textbf{2.269} & 72.555 & 95.832  & 96.600 & \textbf{$+$0.768} \\
\bottomrule
\end{tabular}}
\vspace{-2mm}
\end{table*}

\subsection{Training Cost}

Table~\ref{tab:training_cost_detailed} expands the training-cost summary of Table~\ref{tab:training_cost}. All experiments in Table~\ref{tab:training_cost_detailed} are conducted on a single RTX A6000 GPU, training one ViT-B/32 LoRA adapter per dataset. The \textsc{Trap$^2$} objective increases average step time by $1.72\times$ ($1.68\times$--$1.82\times$ across the 8 adapters), while leaving peak GPU memory nearly unchanged ($0.99\times$). However, end-to-end wall-clock cost depends not only on per-step time but also on the number of optimization steps required to reach a given validation target, motivating the three measurements below.

\paragraph{Protocol}

For each dataset, validation accuracy and wall-clock time are recorded at evaluation checkpoints throughout training, and Table~\ref{tab:training_cost_detailed} reports three quantities. \emph{Peak accuracy} is the maximum validation accuracy attained over the full training run, without a wall-clock constraint, isolating the asymptotic quality of each method. \emph{Time to target} is the wall-clock time required to first reach a shared target accuracy, defined as the lower of the two methods' peak accuracies; this pairing guarantees that both methods can hit the target, removing the confound of method-dependent accuracy ceilings. Speedup is reported as $t_{\text{Unprotected}}/t_{\textsc{Trap}^2}$, so values above one indicate \textsc{Trap$^2$} reaches the target faster. \emph{In-budget accuracy} is the validation accuracy attained within a shared wall-clock budget per dataset, where the budget is set to the full training duration of whichever method first reaches the shared target. $\Delta$Acc reports the gap $\text{Acc}_{\textsc{Trap}^{2}} - \text{Acc}_{\text{Unprotected}}$ in percentage points.

\paragraph{Per-dataset variability}
The benefit of \textsc{Trap$^2$} varies substantially across datasets. Specifically, speedup ranges from $0.268\times$ (Cars) to $6.097\times$ (Aircraft), and $\Delta$ Acc from $-2.052$~pp (Cars) to $+13.741$~pp (Aircraft). Cars stands out as the consistent failure case: \textsc{Trap$^2$} converges more slowly and to a slightly lower peak, yielding a ${\sim}3.7\times$ end-to-end slowdown that exceeds the $1.72\times$ per-step factor. On the remaining datasets, the step-count effect either offsets or reverses the per-step overhead, leaving the practical cost well below $1.72\times$.

\subsection{Complete Results for the Experiments in Section~\ref{sec:experiments}}
\label{subsec:full_results}

This subsection provides the complete results corresponding to the summarized results in Tables~\ref{tab:avg_only_method_space_three_backbones}, \ref{tab:avg_only_method_three_backbones_fullft}, and \ref{tab:avg_only_strong_vitb32}. Specifically, Tables~\ref{tab:vitb32_full_result}, \ref{tab:vitl14_full_result}, and \ref{tab:convnext_full_result} report full results for the CLIP ViT-B/32, ViT-L/14, and ConvNeXt backbones, respectively (i.e., the complete version of Table~\ref{tab:avg_only_method_space_three_backbones}). Table~\ref{tab:fft_full_result} provides complete results under full fine-tuning, corresponding to Table~\ref{tab:avg_only_method_three_backbones_fullft}. Tables~\ref{tab:regmean_com_vitb32_full} and~\ref{tab:strong_vitb32_full} provide full results corresponding to Table~\ref{tab:avg_only_strong_vitb32}, covering data-dependent merging and data-driven recovery, respectively.

\begin{table*}[t!]
\centering
\footnotesize
\setlength{\tabcolsep}{2.8pt}
\renewcommand{\arraystretch}{1.10}

\caption{Averaged per-task accuracy (\%; $\downarrow$) on 8 vision datasets using ViT-B/32. For each column (dataset), we apply the unmergeability protection technique \emph{only} to the corresponding dataset, while all other datasets are trained with the standard (unprotected) procedure.}
\label{tab:vitb32_full_result}

\resizebox{0.69\textheight}{!}{%
\begin{tabular}{l l l *{9}{>{\centering\arraybackslash}m{1.2cm}}}
\toprule
\multicolumn{1}{c}{\textbf{Merging}} &
\multicolumn{1}{c}{\textbf{Space}} &
\multicolumn{1}{c}{\textbf{Protection}}
& \textbf{Cars} & \textbf{DTD} & \textbf{EuroSAT} & \textbf{GTSRB} & \textbf{MNIST} & \textbf{RESISC} & \textbf{Aircraft} & \textbf{SVHN} & \textbf{Average} \\
\midrule

\multirow{4}{*}{TA} & \multirow{4}{*}{Full}
& \zs \textit{Unprotected}            & \multicolumn{9}{c}{\zs 48.273} \\
& & PaRaMS$^\star$  & 48.419 & 48.194 & 48.355 & 48.340 & 48.332 & 48.146 & 48.277 & 48.257 & 48.290 \\
& & PaRaMS$^\dagger$ & 47.922 & 48.326 & 47.987 & 48.177 & 47.904 & 48.207 & 48.133 & 48.175 & 48.104 \\
& & \textsc{Trap$^{2}$}           & \textbf{13.355} & \textbf{16.803} & \textbf{17.854} & \textbf{16.706} & \textbf{28.491} & \textbf{25.044} & \textbf{39.451} & \textbf{27.264} & \textbf{23.121} \\
\midrule

\multirow{12}{*}[-1.25ex]{TIES}
& \multirow{4}{*}{Full}
& \zs \textit{Unprotected}            & \multicolumn{9}{c}{\zs 48.020} \\
& & PaRaMS$^\star$  & 48.078 & 47.945 & 48.138 & 48.142 & 48.121 & 47.912 & 47.911 & 47.969 & 48.027 \\
& & PaRaMS$^\dagger$ & 47.667 & 47.826 & 48.268 & 48.940 & 47.494 & 48.782 & 47.799 & 47.845 & 48.078 \\
& & \textsc{Trap$^{2}$}           & \textbf{35.718}& \textbf{38.106} & \textbf{34.544} & \textbf{32.359} & \textbf{34.986} & \textbf{32.735} & \textbf{41.796} & \textbf{41.596} & \textbf{36.480} \\
\cmidrule{2-12}

& \multirow{4}{*}{KnOTS}
& \zs \textit{Unprotected}            & \multicolumn{9}{c}{\zs 49.925} \\
& & PaRaMS$^\star$  & 49.645 & 49.936 & 49.908 & 50.169 & 49.877 & 49.932 & 49.589 & 49.718 & 49.847 \\
& & PaRaMS$^\dagger$ & 49.826 & 49.922 & 49.818 & 50.376 & 49.513 & 50.013 & 49.824 & 49.682 & 49.872 \\
& & \textsc{Trap$^{2}$}           & \textbf{37.145} & \textbf{37.941} & \textbf{34.700} & \textbf{34.488} & \textbf{36.366} & \textbf{33.330} & \textbf{42.084} & \textbf{44.057} & \textbf{37.514} \\
\cmidrule{2-12}

& \multirow{4}{*}{Core}
& \zs \textit{Unprotected}            & \multicolumn{9}{c}{\zs 53.264} \\
& & PaRaMS$^\star$  & 53.232 & 53.354 & 53.976 & 53.103 & 52.984 & 51.724 & 52.502 & 53.558 & 53.054 \\
& & PaRaMS$^\dagger$ & 51.681 & 52.109 & 53.934 & 55.847 & 53.533 & 51.219 & 51.810 & 54.044 & 53.022 \\
& & \textsc{Trap$^{2}$}           & \textbf{35.866} & \textbf{38.002} & \textbf{35.895} & \textbf{32.493} & \textbf{36.048} & \textbf{32.729} & \textbf{42.435} & \textbf{44.238} & \textbf{37.213} \\
\midrule

\multirow{12}{*}[-1.25ex]{TIES+DARE}
& \multirow{4}{*}{Full}
& \zs \textit{Unprotected}            & \multicolumn{9}{c}{\zs 48.180} \\
& & PaRaMS$^\star$  & 48.585 & 48.140 & 48.153 & 48.368 & 48.665 & 48.364 & 47.874 & 48.483 & 48.329 \\
& & PaRaMS$^\dagger$ & 48.125 & 47.791 & 48.310 & 48.804 & 48.036 & 48.339 & 47.959 & 48.303 & 48.208 \\
& & \textsc{Trap$^{2}$}           & \textbf{35.742} & \textbf{41.587} & \textbf{24.969} & \textbf{32.386} & \textbf{33.086} & \textbf{30.365} & \textbf{42.191} & \textbf{30.715} & \textbf{33.880} \\
\cmidrule{2-12}

& \multirow{4}{*}{KnOTS}
& \zs \textit{Unprotected}            & \multicolumn{9}{c}{\zs 49.926} \\
& & PaRaMS$^\star$  & 49.613 & 49.913 & 49.880 & 50.115 & 49.896 & 49.947 & 49.557 & 49.485 & 49.801 \\
& & PaRaMS$^\dagger$ & 49.836 & 49.951 & 49.785 & 50.364 & 49.551 & 50.004 & 49.813 & 49.668 & 49.872 \\
& & \textsc{Trap$^{2}$}           & \textbf{26.540} & \textbf{18.224} & \textbf{25.001} & \textbf{19.000} & \textbf{33.267} & \textbf{30.822} & \textbf{42.461} & \textbf{31.745} & \textbf{28.383} \\
\cmidrule{2-12}

& \multirow{4}{*}{Core}
& \zs \textit{Unprotected}            & \multicolumn{9}{c}{\zs 54.745} \\
& & PaRaMS$^\star$  & 53.760 & 54.360 & 54.021 & 53.802 & 53.747 & 55.245 & 56.536 & 54.580 & 54.506 \\
& & PaRaMS$^\dagger$ & 52.428 & 52.144 & 55.416 & 55.822 & 54.253 & 53.991 & 53.903 & 54.682 & 54.080 \\
& & \textsc{Trap$^{2}$}           & \textbf{37.636} & \textbf{46.398} & \textbf{31.414} & \textbf{26.365} & \textbf{36.050} & \textbf{32.623} & \textbf{42.744} & \textbf{40.784} & \textbf{36.752} \\
\midrule

\multirow{12}{*}[-1.25ex]{TSV}
& \multirow{4}{*}{Full}
& \zs \textit{Unprotected}            & \multicolumn{9}{c}{\zs 51.442} \\
& & PaRaMS$^\star$  & 52.300 & 51.540 & 51.686 & 51.928 & 52.382 & 51.950 & 51.354 & 52.069 & 51.901 \\
& & PaRaMS$^\dagger$ & 50.749 & 51.429 & 51.455 & 52.052 & 51.615 & 50.866 & 51.017 & 52.162 & 51.418 \\
& & \textsc{Trap$^{2}$}           & \textbf{21.166} & \textbf{7.817} & \textbf{19.764} & \textbf{14.604} & \textbf{32.600} & \textbf{28.426} & \textbf{43.745} & \textbf{30.397} & \textbf{24.815} \\
\cmidrule{2-12}

& \multirow{4}{*}{KnOTS}
& \zs \textit{Unprotected}            & \multicolumn{9}{c}{\zs 49.202} \\
& & PaRaMS$^\star$  & 49.264 & 49.185 & 49.233 & 49.236 & 49.306 & 49.286 & 49.145 & 49.334 & 49.249 \\
& & PaRaMS$^\dagger$ & 48.992 & 49.106 & 48.924 & 49.057 & 48.834 & 49.067 & 49.158 & 49.123 & 49.033 \\
& & \textsc{Trap$^{2}$}           & \textbf{15.147} & \textbf{17.973} & \textbf{18.252} & \textbf{16.644} & \textbf{30.008} & \textbf{25.474} & \textbf{41.932} & \textbf{28.250} & \textbf{24.210} \\
\cmidrule{2-12}

& \multirow{4}{*}{Core}
& \zs \textit{Unprotected}           & \multicolumn{9}{c}{\zs 55.014} \\
& & PaRaMS$^\star$  & 55.082 & 54.769 & 55.170 & 55.124 & 55.090 & 55.103 & 54.929 & 55.124 & 55.049 \\
& & PaRaMS$^\dagger$ & 54.148 & 54.511 & 53.903 & 54.981 & 54.383 & 54.913 & 54.545 & 54.684 & 54.509 \\
& & \textsc{Trap$^{2}$}           & \textbf{19.302} & \textbf{13.713} & \textbf{17.419} & \textbf{15.090} & \textbf{30.799} & \textbf{26.951} & \textbf{43.239} & \textbf{29.080} & \textbf{24.449} \\
\midrule

\multirow{12}{*}[-1.25ex]{CART}
& \multirow{4}{*}{Full}
& \zs \textit{Unprotected}            & \multicolumn{9}{c}{\zs 49.553} \\
& & PaRaMS$^\star$  & 49.704 & 49.555 & 49.652 & 49.749 & 49.726 & 49.549 & 49.735 & 49.633 & 49.663 \\
& & PaRaMS$^\dagger$ & 49.172 & 49.687 & 49.282 & 49.317 & 49.123 & 49.543 & 49.423 & 49.793 & 49.418 \\
& & \textsc{Trap$^{2}$}           & \textbf{42.816} & \textbf{42.802} & \textbf{38.435} & \textbf{40.205} & \textbf{38.020} & \textbf{38.607} & \textbf{44.300} & \textbf{45.403} & \textbf{41.324} \\
\cmidrule{2-12}

& \multirow{4}{*}{KnOTS}
& \zs \textit{Unprotected}            & \multicolumn{9}{c}{\zs 49.850} \\
& & PaRaMS$^\star$  & 49.861 & 49.885 & 50.003 & 49.996 & 49.958 & 50.003 & 49.775 & 49.788 & 49.908 \\
& & PaRaMS$^\dagger$ & 49.710 & 49.765 & 49.753 & 49.796 & 49.591 & 49.944 & 49.714 & 50.054 & 49.791 \\
& & \textsc{Trap$^{2}$}           & \textbf{41.984} & \textbf{42.729} & \textbf{38.214} & \textbf{39.815} & \textbf{37.888} & \textbf{38.131} & \textbf{44.075} & \textbf{44.896} & \textbf{40.967} \\
\cmidrule{2-12}

& \multirow{4}{*}{Core}
& \zs \textit{Unprotected}            & \multicolumn{9}{c}{\zs 51.201} \\
& & PaRaMS$^\star$  & 51.269 & 50.782 & 51.273 & 51.242 & 51.261 & 56.937 & 51.261 & 51.350 & 51.922 \\
& & PaRaMS$^\dagger$ & 50.538 & 50.684 & 50.561 & 50.429 & 50.442 & 50.590 & 50.638 & 51.292 & 50.647 \\
& & \textsc{Trap$^{2}$}           & \textbf{42.963} & \textbf{43.063} & \textbf{38.670} & \textbf{40.324} & \textbf{38.099} & \textbf{38.855} & \textbf{45.173} & \textbf{45.672} & \textbf{41.602} \\

\bottomrule
\end{tabular}
}
\end{table*}

\begin{table*}[t!]
\centering
\footnotesize
\setlength{\tabcolsep}{2.8pt}
\renewcommand{\arraystretch}{1.10}

\caption{Averaged per-task accuracy (\%; $\downarrow$) on 8 vision datasets using ViT-L/14.
For each column (dataset), we apply the unmergeability protection technique \emph{only} to the corresponding dataset, while all other datasets are trained with the standard (unprotected) procedure.}
\label{tab:vitl14_full_result}

\resizebox{0.69\textheight}{!}{%
\begin{tabular}{l l l *{9}{>{\centering\arraybackslash}m{1.2cm}}}
\toprule
\multicolumn{1}{c}{\textbf{Merging}} &
\multicolumn{1}{c}{\textbf{Space}} &
\multicolumn{1}{c}{\textbf{Protection}}
& \textbf{Cars} & \textbf{DTD} & \textbf{EuroSAT} & \textbf{GTSRB} & \textbf{MNIST} & \textbf{RESISC} & \textbf{Aircraft} & \textbf{SVHN} & \textbf{Average} \\
\midrule

\multirow{4}{*}{TA} & \multirow{4}{*}{Full}
& \zs \textit{Unprotected}            & \multicolumn{9}{c}{\zs 62.914} \\
& & PaRaMS$^\star$  & 62.879 & 62.902 & 62.809 & 63.009 & 62.895 & 63.026 & 62.846 & 63.039 & 62.926 \\
& & PaRaMS$^\dagger$ & 62.881 & 62.912 & 62.586 & 62.733 & 62.762 & 62.943 & 62.763 & 63.024 & 62.826 \\
& & \textsc{Trap$^{2}$}           & \textbf{27.473} & \textbf{5.044} & \textbf{37.021} & \textbf{41.930} & \textbf{37.699} & \textbf{42.751} & \textbf{41.111} & \textbf{35.924} & \textbf{33.619} \\
\midrule

\multirow{12}{*}[-1.25ex]{TIES}
& \multirow{4}{*}{Full}
& \zs \textit{Unprotected}            & \multicolumn{9}{c}{\zs 67.909} \\
& & PaRaMS$^\star$  & 67.997 & 67.837 & 67.562 & 67.762 & 67.672 & 67.822 & 67.773 & 68.140 & 67.821 \\
& & PaRaMS$^\dagger$ & 67.886 & 68.315 & 67.755 & 67.656 & 67.629 & 67.629 & 67.964 & 68.168 & 67.875 \\
& & \textsc{Trap$^{2}$}           & \textbf{52.952} & \textbf{56.880} & \textbf{53.079} & \textbf{56.687} & \textbf{51.841} & \textbf{47.531} & \textbf{57.770} & \textbf{48.556} & \textbf{53.162} \\
\cmidrule{2-12}

& \multirow{4}{*}{KnOTS}
& \zs \textit{Unprotected}            & \multicolumn{9}{c}{\zs 68.879} \\
& & PaRaMS$^\star$  & 69.474 & 69.460 & 69.485 & 69.734 & 69.692 & 69.681 & 69.398 & 69.819 & 69.583 \\
& & PaRaMS$^\dagger$ & 69.906 & 70.001 & 69.261 & 69.671 & 69.204 & 69.747 & 69.201 & 69.657 & 69.581 \\
& & \textsc{Trap$^{2}$}           & \textbf{53.246} & \textbf{5.854} & \textbf{54.261} & \textbf{56.986} & \textbf{52.333} & \textbf{48.442} & \textbf{58.650} & \textbf{52.222} & \textbf{47.749} \\
\cmidrule{2-12}

& \multirow{4}{*}{Core}
& \zs \textit{Unprotected}            & \multicolumn{9}{c}{\zs 68.825} \\
& & PaRaMS$^\star$  & 70.013 & 69.207 & 69.981 & 69.541 & 68.468 & 68.208 & 68.839 & 69.575 & 69.229 \\
& & PaRaMS$^\dagger$ & 69.507 & 69.114 & 68.966 & 70.212 & 69.660 & 68.942 & 68.234 & 69.657 & 69.287 \\
& & \textsc{Trap$^{2}$}           & \textbf{53.144} & \textbf{60.185} & \textbf{54.095} & \textbf{56.996} & \textbf{52.263} & \textbf{50.021} & \textbf{58.948} & \textbf{50.040} & \textbf{54.462} \\
\midrule

\multirow{12}{*}[-1.25ex]{TIES+DARE}
& \multirow{4}{*}{Full}
& \zs \textit{Unprotected}            & \multicolumn{9}{c}{\zs 67.970} \\
& & PaRaMS$^\star$  & 68.030 & 67.846 & 67.892 & 67.856 & 67.874 & 67.854 & 67.833 & 68.134 & 67.915 \\
& & PaRaMS$^\dagger$ & 67.862 & 68.310 & 67.734 & 67.545 & 67.643 & 67.740 & 68.063 & 68.116 & 67.877 \\
& & \textsc{Trap$^{2}$}           & \textbf{52.942} & \textbf{56.854} & \textbf{53.068} & \textbf{49.655} & \textbf{45.670} & \textbf{46.878} & \textbf{57.735} & \textbf{42.238} & \textbf{50.630} \\
\cmidrule{2-12}

& \multirow{4}{*}{KnOTS}
& \zs \textit{Unprotected}            & \multicolumn{9}{c}{\zs 69.870} \\
& & PaRaMS$^\star$  & 69.476 & 69.538 & 69.571 & 69.747 & 69.782 & 69.738 & 69.386 & 69.811 & 69.631 \\
& & PaRaMS$^\dagger$ & 69.891 & 69.994 & 69.338 & 69.700 & 69.403 & 69.560 & 69.242 & 69.638 & 69.596 \\
& & \textsc{Trap$^{2}$}           & \textbf{49.730} & \textbf{14.793} & \textbf{43.245} & \textbf{48.969} & \textbf{46.523} & \textbf{45.750} & \textbf{54.788} & \textbf{42.351} & \textbf{43.269} \\
\cmidrule{2-12}

& \multirow{4}{*}{Core}
& \zs \textit{Unprotected}            & \multicolumn{9}{c}{\zs 68.779} \\
& & PaRaMS$^\star$  & 70.032 & 69.163 & 70.002 & 69.548 & 68.345 & 68.218 & 68.856 & 69.556 & 69.215 \\
& & PaRaMS$^\dagger$ & 69.539 & 69.132 & 68.989 & 70.215 & 69.684 & 68.936 & 68.235 & 69.688 & 69.302 \\
& & \textsc{Trap$^{2}$}           & \textbf{53.108} & \textbf{60.187} & \textbf{53.506} & \textbf{55.934} & \textbf{52.482} & \textbf{56.741} & \textbf{58.777} & \textbf{48.038} & \textbf{54.847} \\
\midrule

\multirow{12}{*}[-1.25ex]{TSV}
& \multirow{4}{*}{Full}
& \zs \textit{Unprotected}            & \multicolumn{9}{c}{\zs 72.411} \\
& & PaRaMS$^\star$  & 71.921 & 72.166 & 71.541 & 72.490 & 72.392 & 72.040 & 71.147& 72.186 & 71.985 \\
& & PaRaMS$^\dagger$ & 71.595 & 71.627 & 71.154 & 71.095 & 74.594 & 72.036 & 71.991 & 73.110 & 72.150 \\
& & \textsc{Trap$^{2}$}           & \textbf{49.354} & \textbf{9.318} & \textbf{41.842} & \textbf{46.996} & \textbf{46.687} & \textbf{47.285} & \textbf{53.230} & \textbf{41.233} & \textbf{41.993} \\
\cmidrule{2-12}

& \multirow{4}{*}{KnOTS}
& \zs \textit{Unprotected}            & \multicolumn{9}{c}{\zs 66.725} \\
& & PaRaMS$^\star$  & 66.894 & 66.664 & 66.782 & 66.758 & 66.581 & 66.623 & 66.691 & 66.926 & 66.740 \\
& & PaRaMS$^\dagger$ & 66.446 & 66.327 & 66.201 & 65.923 & 66.376 & 66.238 & 66.488 & 66.840 & 66.355 \\
& & \textsc{Trap$^{2}$}           & \textbf{37.811} & \textbf{5.794} & \textbf{38.412} & \textbf{42.787} & \textbf{44.325} & \textbf{45.750} & \textbf{46.253} & \textbf{37.819} & \textbf{37.369} \\
\cmidrule{2-12}

& \multirow{4}{*}{Core}
& \zs \textit{Unprotected}            & \multicolumn{9}{c}{\zs 74.565} \\
& & PaRaMS$^\star$  & 74.594 & 74.475 & 74.495 & 74.782 & 74.455 & 74.487 & 74.570 & 74.875 & 74.592 \\
& & PaRaMS$^\dagger$ & 74.234 & 74.000 & 73.956 & 73.853 & 74.139 & 73.869 & 74.171 & 74.556 & 74.097 \\
& & \textsc{Trap$^{2}$}           & \textbf{44.963} & \textbf{6.251} & \textbf{38.731} & \textbf{42.861} & \textbf{46.436} & \textbf{47.060} & \textbf{49.713} & \textbf{39.070} & \textbf{39.389} \\
\midrule

\multirow{12}{*}[-1.25ex]{CART}
& \multirow{4}{*}{Full}
& \zs \textit{Unprotected}            & \multicolumn{9}{c}{\zs 64.399} \\
& & PaRaMS$^\star$  & 64.398 & 64.349 & 64.274 & 64.312 & 64.364 & 64.376 & 64.335 & 64.454 & 64.358 \\
& & PaRaMS$^\dagger$ & 64.393 & 64.328 & 63.995 & 64.127 & 64.191 & 64.360 & 64.229 & 64.471 & 64.262 \\
& & \textsc{Trap$^{2}$}           & \textbf{61.876} & \textbf{62.259} & \textbf{55.579} & \textbf{57.258} & \textbf{54.535} & \textbf{60.352} & \textbf{61.268} & \textbf{55.942} & \textbf{58.634} \\
\cmidrule{2-12}

& \multirow{4}{*}{KnOTS}
& \zs \textit{Unprotected}            & \multicolumn{9}{c}{\zs 65.014} \\
& & PaRaMS$^\star$  & 64.864 & 65.028 & 64.938 & 65.052 & 64.978 & 65.094 & 65.000 & 65.145 & 65.012 \\
& & PaRaMS$^\dagger$ & 64.855 & 65.028 & 64.224 & 64.852 & 64.872 & 64.822 & 64.937 & 64.898 & 64.811 \\
& & \textsc{Trap$^{2}$}           & \textbf{61.600} & \textbf{62.213} & \textbf{55.501} & \textbf{57.285} & \textbf{54.462} & \textbf{60.049} & \textbf{60.503} & \textbf{55.941} & \textbf{58.444} \\
\cmidrule{2-12}

& \multirow{4}{*}{Core}
& \zs \textit{Unprotected}            & \multicolumn{9}{c}{\zs 66.185} \\
& & PaRaMS$^\star$  & 66.192 & 66.223 & 66.186 & 66.278 & 66.357 & 66.204 & 66.144 & 66.264 & 66.231 \\
& & PaRaMS$^\dagger$ & 66.091 & 66.092 & 65.912 & 65.895 & 66.216 & 66.210 & 65.978 & 64.204 & 65.825 \\
& & \textsc{Trap$^{2}$}           & \textbf{62.274} & \textbf{62.748} & \textbf{55.842} & \textbf{57.657} & \textbf{54.726} & \textbf{60.537} & \textbf{61.656} & \textbf{60.671} & \textbf{59.514} \\

\bottomrule
\end{tabular}
}
\end{table*}

\begin{table*}[t!]
\centering
\footnotesize
\setlength{\tabcolsep}{2.8pt}
\renewcommand{\arraystretch}{1.10}

\caption{Averaged per-task accuracy (\%; $\downarrow$) on 8 vision datasets using ConvNeXt-CLIP.
For each column (dataset), we apply the unmergeability protection technique \emph{only} to the corresponding dataset, while all other datasets are trained with the standard (unprotected) procedure.}
\label{tab:convnext_full_result}

\resizebox{0.69\textheight}{!}{%
\begin{tabular}{l l l *{9}{>{\centering\arraybackslash}m{1.2cm}}}
\toprule
\multicolumn{1}{c}{\textbf{Merging}} &
\multicolumn{1}{c}{\textbf{Space}} &
\multicolumn{1}{c}{\textbf{Protection}}
& \textbf{Cars} & \textbf{DTD} & \textbf{EuroSAT} & \textbf{GTSRB} & \textbf{MNIST} & \textbf{RESISC} & \textbf{Aircraft} & \textbf{SVHN} & \textbf{Average} \\
\midrule

\multirow{2}{*}{TA} & \multirow{2}{*}{Full}
& \zs \textit{Unprotected}            & \multicolumn{9}{c}{\zs 49.203} \\
& & \textsc{Trap$^{2}$} & 27.517 & 7.571 & 5.717 & 5.627 & 27.667 & 8.080 & 23.118 & 12.451 & 14.719 \\
\midrule

\multirow{6}{*}[-1.1ex]{TIES}
& \multirow{2}{*}{Full}
& \zs \textit{Unprotected}            & \multicolumn{9}{c}{\zs 59.603} \\
& & \textsc{Trap$^{2}$} & 38.137 & 22.408 & 24.832 & 31.501 & 43.896 & 26.516 & 38.894 & 36.884 & 32.883 \\
\cmidrule{2-12}

& \multirow{2}{*}{KnOTS}
& \zs \textit{Unprotected}            & \multicolumn{9}{c}{\zs 60.201} \\
& & \textsc{Trap$^{2}$} & 35.288 & 4.705 & 6.059 & 6.200 & 45.138 & 7.610 & 7.350 & 9.149 & 15.187 \\
\cmidrule{2-12}

& \multirow{2}{*}{Core}
& \zs \textit{Unprotected}            & \multicolumn{9}{c}{\zs 63.601} \\
& & \textsc{Trap$^{2}$} & 27.979 & 4.849 & 9.252 & 20.360 & 39.632 & 8.410 & 15.351 & 11.670 & 17.188 \\
\midrule

\multirow{6}{*}[-1.1ex]{TIES+DARE}
& \multirow{2}{*}{Full}
& \zs \textit{Unprotected}            & \multicolumn{9}{c}{\zs 59.620} \\
& & \textsc{Trap$^{2}$} & 23.064 & 6.336 & 5.693 & 7.802 & 42.015 & 7.855 & 14.633 & 17.690 & 15.636 \\
\cmidrule{2-12}

& \multirow{2}{*}{KnOTS}
& \zs \textit{Unprotected}            & \multicolumn{9}{c}{\zs 60.305} \\
& & \textsc{Trap$^{2}$} & 25.330 & 5.845 & 6.021 & 5.719 & 35.552 & 7.601 & 9.201 & 9.162 & 13.054 \\
\cmidrule{2-12}

& \multirow{2}{*}{Core}
& \zs \textit{Unprotected}            & \multicolumn{9}{c}{\zs 63.573} \\
& & \textsc{Trap$^{2}$} & 37.192 & 7.979 & 20.411 & 24.993 & 44.222 & 8.239 & 44.546 & 15.099 & 25.335 \\
\midrule

\multirow{6}{*}[-1.1ex]{TSV}
& \multirow{2}{*}{Full}
& \zs \textit{Unprotected}            & \multicolumn{9}{c}{\zs 65.133} \\
& & \textsc{Trap$^{2}$} & 22.530 & 5.425 & 4.833 & 8.953 & 31.513 & 6.969 & 20.963 & 11.495 & 14.085 \\
\cmidrule{2-12}

& \multirow{2}{*}{KnOTS}
& \zs \textit{Unprotected}            & \multicolumn{9}{c}{\zs 60.351} \\
& & \textsc{Trap$^{2}$} & 29.204 & 5.681 & 4.508 & 7.615 & 32.778 & 7.910 & 26.951 & 12.156 & 15.850 \\
\cmidrule{2-12}

& \multirow{2}{*}{Core}
& \zs \textit{Unprotected}            & \multicolumn{9}{c}{\zs 65.836} \\
& & \textsc{Trap$^{2}$} & 26.061 & 5.613 & 5.071 & 8.103 & 32.563 & 6.052 & 27.518 & 11.386 & 15.296 \\
\midrule

\multirow{6}{*}[-1.1ex]{CART}
& \multirow{2}{*}{Full}
& \zs \textit{Unprotected}            & \multicolumn{9}{c}{\zs 60.069} \\
& & \textsc{Trap$^{2}$} & 48.003 & 46.613 & 47.203 & 44.142 & 47.361 & 45.495 & 51.944 & 45.026 & 46.973 \\
\cmidrule{2-12}

& \multirow{2}{*}{KnOTS}
& \zs \textit{Unprotected}            & \multicolumn{9}{c}{\zs 59.997} \\
& & \textsc{Trap$^{2}$} & 45.126 & 46.136 & 45.751 & 42.764 & 46.963 & 44.119 & 51.613 & 44.611 & 45.885 \\
\cmidrule{2-12}

& \multirow{2}{*}{Core}
& \zs \textit{Unprotected}            & \multicolumn{9}{c}{\zs 61.215} \\
& & \textsc{Trap$^{2}$} & 48.183 & 47.016 & 47.374 & 44.171 & 47.527 & 46.041 & 52.126 & 45.192 & 47.204 \\

\bottomrule
\end{tabular}
}
\end{table*}

\begin{table*}[t!]
\centering
\footnotesize
\setlength{\tabcolsep}{2.8pt}
\renewcommand{\arraystretch}{1.10}

\caption{Averaged per-task accuracy (\%; $\downarrow$) on 8 vision datasets using ViT-B/32 via full fine-tuning.
For each column (dataset), we apply the unmergeability protection technique \emph{only} to the corresponding dataset, while all other datasets are trained with the standard (unprotected) procedure.}
\label{tab:fft_full_result}

\resizebox{0.69\textheight}{!}{%
\begin{tabular}{l l l *{9}{>{\centering\arraybackslash}m{1.2cm}}}
\toprule
\multicolumn{1}{c}{\textbf{Merging}} &
\multicolumn{1}{c}{\textbf{Space}} &
\multicolumn{1}{c}{\textbf{Protection}}
& \textbf{Cars} & \textbf{DTD} & \textbf{EuroSAT} & \textbf{GTSRB} & \textbf{MNIST} & \textbf{RESISC} & \textbf{Aircraft} & \textbf{SVHN} & \textbf{Average} \\
\midrule

\multirow{2}{*}{TA} & \multirow{2}{*}{Full}
& \zs \textit{Unprotected}            & \multicolumn{9}{c}{\zs 49.963} \\
& & \textsc{Trap$^{2}$} & 26.951 & 14.830 & 26.757 & 19.022 & 35.705 & 30.731 & 41.739 & 32.064 & 28.475 \\
\midrule

\multirow{2}{*}{TIES} & \multirow{2}{*}{Full}
& \zs \textit{Unprotected}            & \multicolumn{9}{c}{\zs 50.951} \\
& & \textsc{Trap$^{2}$} & 34.800 & 42.105 & 31.460 & 37.038 & 39.769 & 35.985 & 41.609 & 40.995 & 37.970 \\
\midrule

\multirow{2}{*}{TIES+DARE} & \multirow{2}{*}{Full}
& \zs \textit{Unprotected}            & \multicolumn{9}{c}{\zs 51.673} \\
& & \textsc{Trap$^{2}$} & 34.780 & 42.093 & 37.448 & 37.068 & 39.756 & 32.395 & 41.934 & 37.715 & 37.899 \\
\midrule

\multirow{2}{*}{TSV} & \multirow{2}{*}{Full}
& \zs \textit{Unprotected}            & \multicolumn{9}{c}{\zs 62.419} \\
& & \textsc{Trap$^{2}$} & 37.256 & 40.708 & 38.460 & 34.513 & 39.847 & 33.909 & 43.632 & 37.125 & 38.181 \\
\midrule

\multirow{2}{*}{CART} & \multirow{2}{*}{Full}
& \zs \textit{Unprotected}            & \multicolumn{9}{c}{\zs 61.031} \\
& & \textsc{Trap$^{2}$} & 41.616 & 43.699 & 38.800 & 41.086 & 42.441 & 41.291 & 44.668 & 43.326 & 42.116 \\

\bottomrule
\end{tabular}
}
\end{table*}

\clearpage

\begin{table*}[t!]
\centering
\footnotesize
\setlength{\tabcolsep}{2.8pt}
\renewcommand{\arraystretch}{1.10}

\caption{Averaged per-task accuracy (\%; $\downarrow$) on 8 vision datasets using ViT-B/32. For each column (dataset), we apply the unmergeability protection technique \emph{only} to the corresponding dataset, while all other datasets are trained with the standard (unprotected) procedure. For each merging method, Gram matrices are computed using 100 validation samples per task. (R) indicates random sampling, and (C) denotes class-wise stratified sampling for constructing proxy datasets.}
\label{tab:regmean_com_vitb32_full}

\resizebox{0.69\textheight}{!}{%
\begin{tabular}{l l l *{9}{>{\centering\arraybackslash}m{1.2cm}}}
\toprule
\multicolumn{1}{c}{\textbf{Merging}} &
\multicolumn{1}{c}{\textbf{Space}} &
\multicolumn{1}{c}{\textbf{Protection}}
& \textbf{Cars} & \textbf{DTD} & \textbf{EuroSAT} & \textbf{GTSRB} & \textbf{MNIST} & \textbf{RESISC} & \textbf{Aircraft} & \textbf{SVHN} & \textbf{Average} \\
\midrule

\multirow{2}{*}{RegMean (R)} & \multirow{2}{*}{Full}
& \zs \textit{Unprotected}            & \multicolumn{9}{c}{\zs 49.107} \\
& & \textsc{Trap$^{2}$} & 4.345 & 3.993 & 6.812 & 4.333 & 20.862 & 7.330 & 22.835 & 5.327 & 9.480 \\
\midrule

\multirow{2}{*}{RegMean (C)} & \multirow{2}{*}{Full}
& \zs \textit{Unprotected}            & \multicolumn{9}{c}{\zs 49.036} \\
& & \textsc{Trap$^{2}$} & 4.286 & 3.993 & 6.826 & 4.339 & 20.801 & 7.434 & 22.362 & 5.050 & 9.386 \\
\midrule

\multirow{2}{*}{CoM (R)} & \multirow{2}{*}{Full}
& \zs \textit{Unprotected}            & \multicolumn{9}{c}{\zs 64.776} \\
& & \textsc{Trap$^{2}$} & 43.587 & 38.420 & 42.622 & 7.976 & 20.866 & 36.429 & 49.375 & 19.372 & 32.331 \\
\midrule

\multirow{2}{*}{CoM (C)} & \multirow{2}{*}{Full}
& \zs \textit{Unprotected}            & \multicolumn{9}{c}{\zs 66.599} \\
& & \textsc{Trap$^{2}$} & 45.061 & 40.453 & 35.931 & 7.242 & 30.063 & 33.749 & 51.684 & 19.164 & 32.918 \\

\bottomrule
\end{tabular}
}
\end{table*}

\begin{table*}[t!]
\centering
\footnotesize
\setlength{\tabcolsep}{2.8pt}
\renewcommand{\arraystretch}{1.10}

\caption{Averaged per-task accuracy (\%; $\downarrow$) on 8 vision datasets using ViT-B/32.
For each column (dataset), we apply the unmergeability protection technique \emph{only} to the corresponding dataset, while all other datasets are trained with the standard (unprotected) procedure. Both ProDistill and SFT use 64 IID held-out samples per dataset for post-merge recovery. ProDistill uses the element-wise setting.}
\label{tab:strong_vitb32_full}

\resizebox{0.69\textheight}{!}{%
\begin{tabular}{l l l *{9}{>{\centering\arraybackslash}m{1.2cm}}}
\toprule
\multicolumn{1}{c}{\textbf{Merging}} &
\multicolumn{1}{c}{\textbf{Space}} &
\multicolumn{1}{c}{\textbf{Protection}}
& \textbf{Cars} & \textbf{DTD} & \textbf{EuroSAT} & \textbf{GTSRB} & \textbf{MNIST} & \textbf{RESISC} & \textbf{Aircraft} & \textbf{SVHN} & \textbf{Average} \\
\midrule

\multirow{2}{*}{ProDistill}

& \multirow{2}{*}{Full}
& \zs None              & \multicolumn{9}{c}{\zs 73.823} \\
&  & \textsc{Trap$^{2}$} & 52.053 & 70.731 & 33.854 & 20.388 & 50.277 & 42.675 & 64.166 & 59.077 & 49.153 \\

\midrule

\multirow{2}{*}{SFT}

& \multirow{2}{*}{Full}
& \zs None              & \multicolumn{9}{c}{\zs 56.859} \\
&  & \textsc{Trap$^{2}$} & 34.488 & 48.396 & 51.383 & 41.051 & 50.243 & 36.454 & 49.327 & 52.625 & 45.496 \\

\bottomrule
\end{tabular}
}
\end{table*}

\section{Implementation Details}
\label{sec:app_d}
\subsection{Datasets}

\paragraph{Image Classification}

We conduct experiments on a collection of eight standard image classification benchmarks, including Cars \citep{6755945}, DTD \citep{6909856}, EuroSAT \citep{8519248}, GTSRB \citep{STALLKAMP2012323}, MNIST \citep{6296535}, RESISC \citep{7891544}, Aircraft \citep{maji13fine-grained}, and SVHN \citep{37648}. These datasets span fine-grained recognition, textures, remote sensing, traffic signs, and handwritten digits, providing a comprehensive testbed for evaluating adapter merging under diverse visual domains. We follow the standard train/validation/test splits and evaluation protocols used in prior CLIP-based studies. Specifically, for Cars, GTSRB, MNIST, and SVHN, we use the official split provided by KnOTS \citep{stoica2025knots}, where the validation set is formed by holding out 20\% of the original test set. For DTD, we use the first split among the 10 pre-defined splits, following TA \citep{ilharcoediting}. For RESISC and Aircraft, we use the only available official split as-is. For EuroSAT, we adopt the split protocol from Representation Surgery~\citep{RepresentationSurgery_ICML_2024}. All dataloaders are implemented using TorchVision \citep{torchvision2016} library.

\paragraph{Mathematical Reasoning}

We use two mathematical reasoning benchmarks, GSM8K~\citep{cobbe2021trainingverifierssolvemath} and ASDiv~\citep{miao-etal-2020-diverse}. GSM8K consists of grade-school math word problems requiring multi-step reasoning, while ASDiv covers diverse text patterns and problem types of elementary-school English math word problems. For both datasets, we use a zero-shot chain-of-thought prompt~\citep{NEURIPS2022_8bb0d291} of the form:

\begin{quote}
\begin{verbatim}
Question: {question}
Answer: Let's think step by step.
\end{verbatim}
\end{quote}

Since ASDiv stores each problem as separate \texttt{body} and \texttt{question} fields, we concatenate them to form the input substituted into the \texttt{\{question\}} placeholder. We evaluate outputs by exact-match (EM) accuracy after extracting the final numeric answer. For GSM8K, we use the official test set for evaluation and hold out 10\% of the training split for validation. Since ASDiv does not come with predefined train/validation/test splits, we split it into 70\% / 15\% / 15\% partitions using a fixed random seed. All dataloaders are implemented using the HuggingFace Datasets library~\citep{lhoest-etal-2021-datasets}.

\subsection{Architectures}

For vision experiments, we use pre-trained CLIP models~\citep{pmlr-v139-radford21a} with three visual encoders: ViT-B/32, ViT-L/14, and ConvNeXt. Specifically, we use checkpoints from OpenAI for ViT-B/32\footnote{\url{https://huggingface.co/openai/clip-vit-base-patch32}} and ViT-L/14\footnote{\url{https://huggingface.co/openai/clip-vit-large-patch14}}, loaded via the HuggingFace Transformers~\citep{wolf-etal-2020-transformers} library. For ConvNeXt~\citep{liu2022convnet}, we utilize OpenCLIP~\citep{ilharco_gabriel_2021_5143773} to load the checkpoint\footnote{\url{https://huggingface.co/laion/CLIP-convnext_base_w-laion2B-s13B-b82K-augreg}} trained with a subset of LAION-5B~\citep{schuhmann2022laionb}. For LLM experiments, we use Llama-3.1-8B~\citep{grattafiori2024llama} as the base model\footnote{\url{https://huggingface.co/meta-llama/Llama-3.1-8B}}, loaded through the same HuggingFace Transformers library.

\vspace{-2mm}

\subsection{Fine-tuning with LoRA}

\paragraph{Vision Experiments}

Via HuggingFace PEFT~\citep{peft} library, we insert LoRA adapters into the vision encoder and optimize \emph{only} the LoRA parameters. All pre-trained CLIP weights are frozen, including the entire text encoder, following KnOTS~\citep{stoica2025knots}. For ViT-based visual backbones, we apply LoRA to the self-attention projections (Q, K, V, and O). For ConvNeXt backbones, we apply LoRA to the $1 \times 1$ pointwise convolution layers. We optimize with AdamW~\citep{loshchilov2018decoupled}, using an initial learning rate of $3 \times 10^{-4}$ with a cosine schedule and warmup. We set $\delta=0.05$, $s_{\min}=0.05$, $s_{\max}=2.0$, and use the off-nominal weighting $w(s)=1/s$. We use $\lambda \in \{0.01, 0.001\}$, selected per dataset via grid search on a validation split. We use rank $r=16$ with scaling factor $\alpha=r$, so the nominal (authorized) LoRA scale $s=\alpha/r$ equals $1$. We use no LoRA bias and set the LoRA dropout rate to $0.1$.

\vspace{-2mm}

\paragraph{LLM Experiments}

For Llama-3.1-8B, we follow the same overall fine-tuning procedure as in vision experiments, with the following changes. We apply LoRA to the self-attention projections (Q, K, V, and O) of the model with rank $r=32$. We optimize at an initial learning rate of $1 \times 10^{-4}$ with a linear schedule and 2\% warmup. We use the off-nominal weighting $w(s)=1/\sqrt{s}$ in place of $1/s$, and set $\lambda=0.01$ for both GSM8K and ASDiv. All other hyperparameters follow the setup of the vision experiments above.

\vspace{-2mm}

\subsection{Fine-tuning with Other LoRA Variants}

\paragraph{QLoRA}
We further evaluate \textsc{Trap$^2$} with QLoRA~\citep{dettmers2023qlora} on the GTSRB--Cars vision pair using the ViT-B/32 CLIP backbone, matching the vision LoRA setup above except for the following changes. We load the CLIP backbone with 4-bit NF4 quantization, with double quantization enabled and float16 compute dtype. LoRA adapters are trained on top of the quantized backbone. The trade-off coefficient is set to $\lambda=0.005$.

\vspace{-2mm}

\paragraph{DoRA}
We also evaluate \textsc{Trap$^2$} with DoRA~\citep{liu2024dora} on the same GTSRB--Cars vision pair using ViT-B/32 CLIP, matching the vision LoRA setup above except for the following changes. DoRA is implemented through the PEFT \texttt{use\_dora} option. The trade-off coefficient is set to $\lambda=0.01$.

\vspace{-2mm}

\subsection{Full Fine-tuning}

For full fine-tuning, we optimize with AdamW using an initial learning rate of $1 \times 10^{-4}$ with a cosine schedule and warmup, and update all parameters in the vision encoder. We set $\delta=0.05$, $s_{\min}=0.05$, $s_{\max}=2.0$, and use the off-nominal weighting $w(s)=1/s$. We use $\lambda \in \{0.05, 0.01, 0.005, 0.001\}$, selected per dataset via grid search on a validation split.

\vspace{-2mm}

\subsection{Post-hoc Baselines}\label{sec:post-hoc}

We compare against two post-hoc protection baselines, PaRaMS~\citep{Junhao_2025_ICCV} and Merge-Lock~\citep{wang2025modelunmergingmakingmodels}. Following Section~\ref{sec:challenges}, for each method we evaluate two LoRA adaptations: (i) an adapter-space variant ($^\star$) that applies the post-hoc transform only to the released adapter update $\Delta W$, and (ii) a refitting variant ($^\dagger$) that applies the original transform to the full updated weights $W = W_0 + \Delta W$ and then refits a LoRA adapter via low-rank projection (truncated SVD), which may introduce approximation error.

\vspace{-2mm}

\paragraph{PaRaMS$^\star$}
We apply PaRaMS to the LoRA adapter weights $\Delta W$ after fine-tuning. In the LoRA setting of PaRaMS, PEM-based task arithmetic merges only LoRA-adapted parameters. Since this part can be viewed as the multiplication of two matrices, PaRaMS applies the scaling module only and does not use rearrangement in this setting~\citep{Junhao_2025_ICCV}. Accordingly, our LoRA-native PaRaMS instantiation applies only per-head diagonal re-scaling to $\Delta W$ in the adapter space. We instantiate the uniform scaling range as $[0.5, 1.5]$.

\paragraph{Merge-Lock$^\star$}
Merge-Lock uses paired-cancellation transforms in self-attention with general linear mappings. Each transform matrix is constructed as $T = RPD$, where $R$ is a random mixing matrix, $P$ is a permutation matrix, and $D$ is a diagonal scaling matrix~\citep{wang2025modelunmergingmakingmodels}. This construction increases parameter mismatch while preserving functional equivalence in the original full-weight setting~\citep{wang2025modelunmergingmakingmodels}. In our LoRA-native instantiation, we apply the same QK and VO mappings directly to $\Delta W$. In this setting, the original function-preservation guarantee may not strictly hold due to cross terms with the frozen base weights, as discussed in Section~\ref{sec:challenges}. For $D$, we use log-normal diagonal scaling with log-standard deviation $0.2$ and clip scaling factors below $10^{-4}$ for numerical stability to avoid near-singular matrices. In our implementation, we draw $R$ with i.i.d.\ standard normal entries and sample $P$ uniformly at random, without tuning additional hyperparameters for these components.

\paragraph{PaRaMS$^\dagger$}
After fine-tuning, we apply the PaRaMS self-attention re-scaling to the full updated weights $W = W_0 + \Delta W$, and then refit a LoRA adapter via truncated SVD. As above, we use uniform diagonal re-scaling in $[0.5, 1.5]$. 

\paragraph{Merge-Lock$^\dagger$}
Analogously, we apply the Merge-Lock full-space transform to $W = W_0 + \Delta W$ using the $T=RPD$ construction, and then refit a LoRA adapter using truncated SVD. For $D$, we use the same log-normal diagonal scaling with log-standard deviation $0.2$, and clip scaling factors below $10^{-4}$ for numerical stability. We use the same sampling scheme for $R$ and $P$ as above.

\subsection{Merging}

Unless otherwise specified, we sweep the merging coefficient from $0.1$ to $10.0$ in increments of $0.1$ and select the best value on the validation set. To reduce compute, we stop the sweep early if the validation metric does not improve for 10 consecutive grid points (patience 10), while always evaluating the nominal coefficient $1.0$. For the pairwise merging experiments in Figure~\ref{fig:pairwise}, we set the merging coefficient to $0.8$ for both adapters, following PaRaMS \citep{Junhao_2025_ICCV}.

For dataset-dependent merging methods (e.g., RegMean and CoM) in Table~\ref{tab:avg_only_strong_vitb32} and Figure~\ref{fig:com_coeff}, we sample 100 datapoints per dataset to estimate the required statistics. We consider two sampling strategies: uniform random sampling (R) and class-wise stratified sampling (C). For datasets with more than 100 classes, we first select 100 classes and then sample one example per selected class for stratified sampling. For ProDistill and SFT, we randomly sample 64 datapoints per dataset as the distillation and supervised training set, respectively. 

For TIES, we sweep the pruning ratio from $100\%$ down to $10\%$ in steps of $10\%$. For TIES-DARE variants, we additionally sweep the pruning coefficient over $\{10^{-5}, 0.1, 0.2, 0.3, 0.4, 0.5, 0.6, 0.7, 0.8, 0.9\}$, where $10^{-5}$ approximates the no-pruning regime. For CART, we sweep the pruning ratio over $\{0.04, 0.08, 0.16, 0.32\}$.


\end{document}